\documentclass{article}


\usepackage[preprint,nonatbib]{neurips_2024}




\usepackage[utf8]{inputenc} 
\usepackage[T1]{fontenc}    
\usepackage{hyperref}       
\usepackage{url}            
\usepackage{booktabs}       
\usepackage{amsfonts}       
\usepackage{nicefrac}       
\usepackage{microtype}      
\usepackage{xcolor}         

\title{Auditing Privacy Mechanisms via Label Inference Attacks}

%

\author{
  R\'obert Istv\'an Busa-Fekete\\
  Google Research NY \\
  \texttt{busarobi@google.com} \\
 \And
 Travis Dick\\
 Google Research NY \\
\texttt{tdick@google.com} \\
\And
 Claudio Gentile\\
 Google Research NY \\
\texttt{cgentile@google.com} \\
\AND
Andrés Muñoz Medina\\
Google Research NY \\
\texttt{ammedina@google.com} \\
\And
Adam Smith\\
Boston University \\ \& Google DeepMind\\
\texttt{ads22@bu.edu} 
\And
Marika Swanberg\\
Boston University \\ \& Google Research NY\\
\texttt{marikas@bu.edu} \\
}

\usepackage[utf8]{inputenc}
\usepackage{amsthm, amsmath, amsfonts, amssymb}
\usepackage[normalem]{ulem}
\usepackage{xspace}
\usepackage{hyperref}
\usepackage{xcolor}
\usepackage{enumitem}
\usepackage[at]{easylist}

\usepackage[T1]{fontenc}    

\usepackage{url}            
\usepackage{booktabs}       

\usepackage{nicefrac}       
\usepackage{microtype}      

\usepackage{graphicx}
\usepackage{subfigure}
\usepackage{algorithm, algorithmic}

\usepackage{mathtools}

\usepackage{thmtools, thm-restate}
\usepackage{enumitem}

\usepackage{graphicx,wrapfig}

\usepackage[capitalize,noabbrev]{cleveref}


\newcommand{\mathify}[1]{\ensuremath{#1}\xspace}
\newcommand{\fs}{\mathify{\mathcal{X}}} 
\newcommand{\ls}{\mathify{\mathcal{Y}}} 
\newcommand{\dd}{\mathify{\mathcal{D}}} 

\newcommand{\feat}{\mathify{x}} 
\newcommand{\feats}{\mathify{\mathbf{x}}} 
\newcommand{\lab}{\mathify{y}} 
\newcommand{\labs}{\mathify{\mathbf{y}}} 
\newcommand{\lp}{\mathify{\mathcal{\alpha}}} 
\newcommand{\nlp}{\mathify{\tilde{\mathcal{\alpha}}}} 
\newcommand{\R}{\mathify{\mathbb{R}}} 
\newcommand{\Z}{\mathify{\mathbb{Z}}} 
\newcommand{\eps}{\mathify{\varepsilon}}
\newcommand{\paren}[1]{\left( {#1} \right)}

\newcommand{\eau}{\operatorname{EAU}} 
\newcommand{\ADV}{\operatorname{EAdv}} 
\newcommand{\IADV}{\operatorname{IEAdv}} 
\newcommand{\HPADV}{\operatorname{HPAdv}} 

\newcommand{\ieau}{\operatorname{IEAU}}

\newcommand{\I}{\mathify{\mathbb{I}}} 
\newcommand{\hclass}{\mathify{\mathcal{H}}} 
\newcommand{\Var}{\operatorname{Var}} 
\newcommand{\Covar}{\operatorname{Covar}} 
\newcommand{\Lap}{\text{Lap}} 

\newcommand{\adv}{\mathify{\mathcal{A}}} 
\newcommand{\privlab}{\mathify{\mathcal{M}}} 
\newcommand{\PET}{\privlab}
\newcommand{\PETRR}{\mathify{\mathcal{M}_\text{RR}}}
\newcommand{\PETLLP}{\mathify{\mathcal{M}_\text{LLP}}}
\newcommand{\PETuninformed}{\mathify{\mathcal{M}_\bot}}
\newcommand{\PETDPLLP}{\mathify{\mathcal{M}_\text{Lap-LLP}}}


\theoremstyle{plain}
\newtheorem{theorem}{Theorem}[section]
\newtheorem{proposition}[theorem]{Proposition}
\newtheorem{lemma}[theorem]{Lemma}

\newtheorem{definition}[theorem]{Definition}

\theoremstyle{definition}

\theoremstyle{remark}
\newtheorem{remark}[theorem]{Remark}
\theoremstyle{plain}

\usepackage[textsize=tiny]{todonotes}

\usepackage{amsfonts}
\usepackage{dsfont}
\usepackage{amsthm}
\usepackage{color}
\usepackage{thmtools}

\newcommand{\ind}{\mathds{1}}
\newcommand{\bag}{\mathcal{B}}
\newcommand{\cX}{\mathcal{X}}
\newcommand{\cY}{\mathcal{Y}}
\newcommand{\cZ}{\mathcal{Z}}

\newcommand{\hyps}{\mathcal{H}}
\newcommand{\cD}{\mathcal{D}}

\newcommand{\Rset}{\mathbb{R}}

\newcommand{\easyllp}{\textsc{{\hyphenpenalty=10000EasyLLP}}}
\newcommand{\propmatch}{\textsc{PropMatch}}

\newcommand{\full}[1]{\ifnum \FULL=1{#1}\fi}
\newcommand{\short}[1]{\ifnum \FULL=0{#1}\fi}
\DeclareMathOperator*{\E}{\mathbb{E}}
\DeclareMathOperator*{\PP}{\mathbb{P}}


\begin{document}

\maketitle

\begin{abstract}
    We propose reconstruction advantage measures to audit label privatization mechanisms. 
    A reconstruction advantage measure quantifies the increase in an attacker's ability to infer the true label of an unlabeled example when provided with a \textit{private} version of the labels in a dataset (e.g., aggregate of labels from different users or noisy labels output by randomized response), compared to an attacker that only observes the feature vectors, but may have prior knowledge of the correlation between features and labels. 
    We consider two such auditing measures: one additive, and one multiplicative. These incorporate previous approaches taken in the literature on empirical auditing and differential privacy.
    The measures allow us to place a variety of proposed privatization schemes---some differentially private, some not---on the same footing.
    We analyze these measures theoretically under a distributional model which encapsulates reasonable adversarial settings. We also quantify their behavior empirically on real and simulated  prediction tasks. Across a range of experimental settings, we find that differentially private schemes dominate or match the privacy-utility tradeoff of more heuristic approaches.
\end{abstract}

\section{Introduction}
Data sharing and processing provides an undeniable utility to individuals and the society at large. The modern data ecosystem has played a significant role in scientific advancement, economic growth, and technologies that benefit individuals in their daily lives. Yet, the collection, processing, and sharing of sensitive information can lead to real privacy harms. Understanding the theoretical and empirical risks associated with different types of data disclosure is a rich area of research and topic of discussion.
The privacy community has studied a number of empirical measures of disclosure risks against several types of attacks, including: inference attacks \cite{ShokriMIA}, reconstruction attacks \cite{BalleCH}, re-identification attacks \cite{tdickreid} and label inference attacks \cite{busarobilia, liachuan}.

While differential privacy gives rigorous guarantees against these attacks by worst-case adversaries, it is natural to wonder whether non-DP (deterministic) mechanisms give any empirical privacy protections. This line of inquiry necessitates empirical measures of privacy that take adversarial uncertainty into account. At a fundamental level, whether one is studying reconstruction or membership inference, nearly every privacy disclosure metric comes down to \textit{quantifying the relationship between an adversary's prior and posterior knowledge} before and after a data release. 

In this work, we consider a number of ways to summarize this prior-posterior relationship, and we apply our proposed metrics to 
a few 
\uline{P}rivacy \uline{E}nhancing \uline{T}echnologies (PETs). 
We consider two families of metrics: As a coarse measure, we consider the expected {\em additive} difference between the prior and posterior success rates; while this does not capture unlikely high-disclosure events, it gives a sense for the risk posed to an average individual. To better capture such events, we then consider the finer-grained {\em multiplicative} difference between the adversary's prior and posterior.
These empirical measures of privacy risk allow us to relate and compare the risks of both aggregation-based and differential privacy-based PETs, and hence compare their privacy-accuracy trade-off curves: \emph{What accuracy does one get for a given level of privacy protection?} Together, these metrics paint a more balanced and nuanced picture of the disclosure risks and accuracy trade-offs posed by aggregation and differentially private mechanisms.

We will focus on the simplest possible scenario for information sharing: the case where a user wishes to disclose a single bit of information (for instance a binary label in a classification problem). While this problem is  simple to present, it already highlights a lot of the difficulties in providing measures of privacy that are meaningful for both noise-based and aggregate-based tools. 
Most notably, it already requires handling the correlations---known to an attacker, but unknown to the mechanism---between the visible features and the hidden, sensitive labels. Similar to \cite{liachuan}, we incorporate adversarial knowledge by a prior, and measure the difference between the prior and the adversary's posterior distribution conditional on observed aggregates.

Understanding binary labels already has practical implications.
For instance, with Chrome's proposed conversion reporting API \cite{chrome3pcd}, the event of a user converting after clicking on an online ad --- buying a product, signing up for a newsletter, installing an app, etc. ---  or not is considered sensitive, and therefore is reported only with some noise. However, once reported, ad tech providers can use features associated with an ad click (impression information, publisher information, etc.) to train models that can predict future conversions.

\paragraph{Our contribution.}
We make several contributions:
(i) We introduce additive reconstruction advantage measures as ways to quantify the amount of leakage associated with a generic (not necessarily differentially private) privacy mechanism. These are variants and extensions of the Expected Attack Utility (EAU) and advantage contained in \cite{liachuan}. 
(ii) We quantify such measures for randomized response and random label aggregation under different correlation assumptions with public data.
(iii) We consider a more demanding multiplicative reconstruction advantage measure (in the spirit of predecessors of differential privacy~\cite{DinurN03,blum2005practical}), and again quantify this advantage for randomized response and random label aggregation.
(iv) We conduct a series of experiments on benchmark and synthetic datasets measuring the privacy-utility tradeoff of a number of basic mechanisms, including randomized response, label aggregation and label aggregation plus Laplace or geometric noise. Remarkably, these experiments show that learning with aggregate labels tends to be strictly harder in practice than learning with randomly perturbed labels. That is, differentially private schemes dominate or match the privacy-utility tradeoff of label aggregate schemes for all or most privacy levels and for both types of advantage measure, a conclusion we have not seen clearly spelled out in prior work on privacy. Even measured ``on their own turf'', deterministic aggregation---which lack provable guarantees like differential privacy---do not provide a significant advantage.

\paragraph{Prior Work.}
Our measures of privacy risk relate closely to previous work on membership inference and auditing, as well as to variants of differential privacy that assume adversarial uncertainty. We discuss these in \Cref{s:prior_work}, after defining our measures.

\section{Preliminaries}\label{sec:prelims}
For a natural number $n$, let $[n] = \{i \in \mathbb{N} \colon i \leq n\}$.
Let $\fs$ denote a feature (or instance) space and $\ls$ be a binary ($\cY = \{0,1\}$) or multiclass ($\cY = [c]$) label space.
We assume the existence of a joint distribution $\dd$ on $\fs \times \ls$, encoding the correlation between the input features and the labels. 
The marginal distribution over $\fs$ will be denoted by $\cD_{\fs}$, and the conditional distribution of $y$ given $x$ will be denoted by $\cD_{\ls|x}$. 
For distributions over binary labels, 
we denote by $\eta(\feat) = \PP_{y \sim \cD_{\ls|x}}(y = 1 | x )$ the probability of drawing label 1 conditioned on feature vector $\feat$. In the multiclass case, we will use $\eta(\feat,a) = \PP_{y \sim \cD_{\ls|x}}(y = a | x )$, for $a \in \ls = [c]$.
We define a dataset $S = \langle(x_1,y_1),\ldots, (x_m,y_m)\rangle$ as a sequence of pairs $(x_i,y_i)$, each one drawn i.i.d. from $\dd$. We use $\feats = (\feat_1, \ldots, \feat_m)$ to denote the features in $S$ and  $\labs = (\lab_1, \ldots, \lab_m)$ to denote the corresponding labels.

In order to unify the study of privacy enhancing technologies, for the rest of the paper we model PETs as (possibly randomized) functions $\PET : (\cX \times \cY)^m \to \cZ$. 
These functions map a collection of $m$ labeled examples to a privacy protected representation in the domain $\cZ$.
The domain $\cZ$ depends on the PET but, throughout the paper, it will be clear from the context. 

In our setting, we consider PETs that protect labels and release features in the clear (so, strictly speaking, the output of a PET should also include $\feats$ itself; we leave implicit in our notation). 
Despite this, our reconstruction advantage measures can be applied to any PET, we recall below two standard (and very basic) PETs
that have been proposed for various Ads metrics APIs: one which satisfies label differential privacy, and one which relies on random aggregation. These are the two PETs on which we will also be able to give theoretical guarantees.
We formulate them for binary classification.

\begin{definition}
Given $\eps \geq 0$, we say that a (randomized) algorithm $A$ that takes as input $S$ is $\eps$-Label Differentially Private ($\eps$-Label DP) if for any two datasets $S$ and $S'$ that differ in the label of a single sample we have\,
\(
\PP(A(S) \in B) \leq e^\epsilon\,\PP(A(S') \in B),
\)
where $B$ is any subset of the output space of $A$. 
\end{definition}

Randomized Response (RR) is a classical \cite{war65} way of achieving $\epsilon$-Label DP. 
In the binary classification case, RR with privacy parameter $\pi=1/(1+e^\epsilon)$ simply works by randomly flipping each label $y_j$ in the dataset with independent probability $\pi$ before revealing it to the learning algorithm. For a fixed $\epsilon > 0$, RR corresponds to the function $\PETRR(\feats, \labs) = \tilde \labs = (\tilde y_1,\ldots, \tilde y_m) \in \{0,1\}^m$, where $\tilde \lab_i = 1 - \lab_i$ with probability $\pi = 1/ (1+e^\epsilon)$ and equal to $\lab_i$ with probability $1 - \pi$, independently for each $i \in [m]$.

A completely different label privacy mechanism, still very utilized in practice (see, e.g., the Privacy Sandbox API in Google~\cite{chrome3pcd} and the Apple SKAN initiative \cite{appleskan}) is one based on (random) label aggregation, whereby the dataset $S$ gets partitioned uniformly at random into {\em bags} of a given size $k$,
$S = \langle(x_{11},y_{11}),\ldots, (x_{1k},y_{1k}), \ldots,  (x_{n1},y_{n1}),\ldots, (x_{nk},y_{nk})\rangle,
$
and only feature vectors and
the fraction of positive labels in each bag are revealed to the attacker/learning algorithm. 
In other words, the attacker has access to $S$ via a collection $ \{ (\bag_i, \alpha_i),\,i \in [n]\}$ of $n$ labeled bags of size $k$, with $m = nk$,
where $\bag_i = \{x_{ij} \colon j \in [k]\}$, $\alpha_i = \frac{1}{k} \sum_{j=1}^k y_{ij}$ is the label proportion (fraction of labels ``1") in the $i$-th bag, and all the involved samples $(x_{ij},y_{ij})$ are drawn i.i.d. from $\dd$. Thus, the attacker receives information about the $m$ labels $y_{ij}$ of the $m$ instances $x_{ij}$ from dataset $S$ only in the aggregate form determined by the $n$ label proportions $\alpha_i$. 
Note, however, that the feature vectors $x_{ij}$ are individually observed. From an attacker viewpoint, this setting is sometimes called Learning from Label Proportions (LLP). Hence, LLP corresponds to the function $\PETLLP(\feats, \labs) = (\alpha_1,\ldots,\alpha_n) \in [0,1]^n$.

\paragraph{Learning from privatized labels.}
Randomized response is especially appealing from a practical point of view, since privatized data with label flipping can be handled by many prominent learning algorithms as is, with some tuning of their hyper-parameters. Often the theoretical guarantees of these learners in terms of sample complexity are only deteriorated by some constant that depends on the label noise level, see, for example~\cite{pmlr-v99-reeve19a}. To improve accuracy, we debias the gradients in a post-processing step similar to Equation (7) in~\cite{ghazi2021deep}; we discuss the debiasing details in Appendix~\ref{app:experiment_details}.

On the other hand, a simple and very well-known method for learning from aggregate labels is the one that of~\cite{yu+13} call Empirical Proportion Risk Minimization. In fact, different versions of this algorithm are discussed in the literature without a clear reference to its origin. In \cite{easyllp}, the authors simply call this algorithm the Proportion Matching algorithm (\propmatch), and we shall adopt their terminology here.
Given a loss function $\ell\,:\,\Rset\times\Rset \rightarrow \Rset^+$, a hypothesis set of functions $\hyps \subset \mathbb{R}^\cX$, mapping $\fs$ to a (convex) prediction space $\widehat \ls \subseteq \Rset$, and a collection $\{ (\bag_i, \alpha_i),\,i \in [n]\}$ of $n$ labeled bags of size $k$, \propmatch{}
minimizes the empirical \emph{proportion matching loss}, i.e., it solves the following optimization problem:
\[
    \min_{h \in \hyps} \sum_{i=1}^n \ell \left(\frac{1}{k}\sum_{j=1}^k h(x_{ij}), \alpha_i\right)~.
\]
It is known that the above method is consistent (see for instance \cite{easyllp}). That is, training a model this way -- under some mild conditions on $\ell$ and $\hyps$ and for a large enough sample -- results in learning a model $h$ that minimizes the expected \emph{event level} loss
${\E}_{(x,y)\sim\dd}[\ell(h(x),y)]$.

Finally, for our baselines, it will be helpful to consider a PET that reveals no label information at all: $\PETuninformed(\feats, \labs) = \bot$.

\section{Auditing Large-Scale Label Inference}\label{sec:recon_adv}
In this section, we propose a number of \emph{auditing metrics} to measure the risk of large-scale label reconstruction. Unlike DP auditing techniques which focus on worst-case guarantees, we will focus here on distributional guarantees.
In doing so, we extend the (additive) reconstruction advantage definition introduced in \cite{liachuan} to the LLP setting and propose meaningful variants of it. 

Our reconstruction advantage measures can be applied to virtually any PET for auditing purposes. For concreteness, we will also provide analytical bounds on such measures when applied to PETs like RR and label aggregation. On one hand, these bounds help illustrate the precise dependence on the data distribution $\cD$. On the other, they will pave the way for our experimental findings in Section \ref{sec:Experiments}.

A reconstruction advantage metric is grounded in the following natural privacy question:
\emph{How much does releasing the output of a PET increase the risk of label inference compared to not releasing anything?} This corresponds naturally to measuring an 
attacker's \emph{prior} over a target person's label compared to the attacker's \emph{posterior} after viewing the mechanism output.

We model the attacker (often called `adversary' later on) as a function $\adv : \cX^m \times \cZ \to \cY^m$ that maps the features \feats and the output of a PET $\PET(\feats, \labs)$ to a vector of predicted labels, one for each example. We compare this attacker's success to the the \emph{uninformed} or \emph{prior} attacker that gets \feats and the PET that reveals no label information $\PETuninformed(\feats, \labs) = \bot$.
To measure the efficacy of an attacker compared to its prior, we define a number of \emph{attack utility} variants.

The \emph{\underline{E}xpected \underline{A}ttack \underline{U}tility} of adversary $\adv$ using information from PET $\PET$ on a collection of $m$ examples drawn i.i.d. from a distribution $\dd$ is defined as:
\[
    \eau(\adv, \PET, \cD)
    = \PP_{\substack{(\feats,\labs) \sim \cD^m,\, 
                     i \sim \text{Uniform}([m]),\,
                    \text{coins of }\PET
                    } 
                    }\bigl(
        \adv(\feats, \PET(\feats, \labs))_i = y_i
    \bigr)~.
\]
In words, the expected attack utility of adversary $\adv$ is the probability that $\adv$ correctly guesses the label of a randomly chosen example when provided the features and the output of $\PET$.
Equivalently, this is the expected fraction of the $m$ examples that the adversary predicts the correct label for. The adversary's success rate may depend on the distribution over features and labels. For example, if labels are entirely determined by features, then our metric should reflect that privatized labels (for any mechanism $\PET$) reveal no additional information about the true labels. To control for the information that features inherently reveal about labels, we assume that the adversary has knowledge of the data distribution $\cD$ over $\fs\times\ls$, either completely (e.g., Definition \ref{def:eadv}) or approximately through learning on disjoint data (as in our experiments in Section \ref{sec:Experiments}). 

Further, we define the \emph{\underline{I}ndividual \underline{E}xpected \underline{A}ttack \underline{U}tility} on input data $x_i$ as
\begin{multline*}
\ieau_{i}(\adv, \PET, \cD, x_i) = 
        \PP_{\substack{y_i \sim \cD_{\ls|x_i} 
                    (\feats^{(-i)}, \labs^{(-i)}) \sim \cD^{m-1} 
                    \text{, coins of }\PET
                    } 
                    }
        \left(
            \adv(\feats, \PET(\feats, \labs))_i = y_i  
        \,\Big|\, x_i \right) \, ,
\end{multline*}
where $\feats^{(-i)}$ is $\feats$ with the $i$-th item $x_i$ dropped, and likewise for $\labs^{(-i)}$. The quantity $\ieau_{i}(\adv, \PET, \cD, x_i)$ emphasizes the (expected) attack utility on a specific piece of data $x_i$ when the associated label $y_i$ is drawn from the conditional distribution $\cD_{\ls|x_i}$. For instance, 
$\ieau_{i}(\adv, \PETLLP, \cD, x_i)$ measures, for a given $x_i$, the chance that an attacker is able to reconstruct the associated label $y_i$, if $y_i$ generated from $\cD_{\ls|x_i}$ and, by virtue of mechanism $\PETLLP$, $y_i$ becomes part of a bag (of some size $k$), only the bag label proportion $\alpha$ being revealed to the attacker.

In order to measure the \emph{increase} in risk incurred by releasing the output of a PET, we consider the attack utility of an optimal adversary in two scenarios: one in which the adversary gets the features, $\feats$, together with the output of the PET, $\PET(\feats, \labs)$, and an alternate setting where the adversary gets only the features (which is equivalent to using $\PETuninformed$).
We call the difference in attack utility between the informed and uninformed adversary the \emph{attack advantage}.
Intuitively, the attack advantage measures the label reidentification risk that can be attributed to the PET rather than to correlations between the features $\feats$ and labels $\labs$ which are inherent in the distribution $\dd$. 
Since we have given two notions of attack utility above, we have two corresponding notions of attack advantage. Below is an {\em additive} version, based on utility {\em differences}, later on (Section \ref{s:conf_based}) we will also consider a {\em multiplicative} version, based on utility {\em ratios}.
\begin{definition}\label{def:eadv}
Given a PET $\PET$ for a set of $m$ examples drawn from a data distribution $\dd$, the (additive) \emph{\underline{E}xpected attack \underline{Adv}antage} is defined as
\begin{align}\label{eq:llp_adv}
    \ADV(\PET, \dd) & = \sup_{\adv_\text{informed}} \eau(\adv_\text{informed}, \PET, \dd)
     & - \sup_{\adv_\text{uninformed}} \eau(\adv_\text{uninformed}, \PETuninformed, \dd)~. \notag
\end{align}
Similarly, when the $i$-th item $x_i$ is kept frozen, the \emph{\underline{I}ndividual \underline{E}xpected attack \underline{Adv}antage} is defined as
\begin{align*}
\IADV(\PET, \dd, x_i)= 
\sup_{\adv_\text{informed}} \ieau(\adv_\text{informed}, \PET, \dd, x_i) - \sup_{\adv_\text{uninformed}} \ieau(\adv_\text{uninformed}, \PETuninformed, \dd, x_i).
\end{align*}
\end{definition}
Note that RR is generally described in terms of its behavior on a single example $(\feat, \lab)$, rather than a collection of $m$ i.i.d. samples. 
Likewise, label aggregation operates on each bag of size $k << m$ independently.
However, since the data is i.i.d., it is easy to see that the attack advantage for RR is independent of the number of examples $m$, while the attack advantage for (random) label aggregation will only depend on $k$, rather than $m$.

Appendix \ref{sa:tail} contains further variants of attack advantage, like one that accounts for the {\em tail} of the attack advantage distribution, as well as associated bounds. These can be stretched to the point where the feature vector $\feats = (\feat_1,\ldots,\feat_m)$ is arbitrary, and only the properties of the condition distribution $\cD_{\cY|x}$ is factored in.

\subsection{Bounding the Additive Attack Advantage}
As a warm up, we begin by studying the additive expected attack advantage for LLP when the labels are independent of the features. Note that, since the features $\feats$ do not play any role in this simplified setting, the notions of advantage coincide.
All proofs are given in the appendix. 

\begin{restatable}{theorem}{thmLLPIndependent}
\label{thm:independent}
    Fix a data distribution $\dd$, let $p = \PP_{(x,y) \sim \dd}(y = 1)$, and fix an arbitrary threshold $\beta \in [0,1/2]$.
    If labels are independent of features (i.e., $\dd$ is a product of distributions over $\cX$ and $\cY$), then for all bag sizes $k \geq 1$ we have:
    \begin{align}    
    \ADV(\PETLLP, \dd)
    & =
    \min \{p, 1-p\}
    -
    {\E}_\alpha[\min\{\alpha, 1-\alpha\}] 
    \leq \begin{cases}
    \sqrt{\frac{p(1-p)}{k}} & \text{if $p\in [0,1]$}\\
    e^{-\Omega(\beta^2 k)} & \text{if $|p - 1/2| \geq \beta$},
    \end{cases} \notag
    \end{align}
    where $\Omega(\cdot)$ hides constants independent of $\beta$ and $k$.
\end{restatable}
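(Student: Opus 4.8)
The plan is to evaluate the two suprema in Definition~\ref{def:eadv} in closed form and then bound their difference. First I would handle the uninformed adversary: since $\dd$ is a product distribution, the features $\feats$ are independent of every label $y_i$, so the Bayes-optimal uninformed adversary ignores $\feats$ and outputs the constant label $\argmax_{b\in\{0,1\}}\PP(y=b)$, giving $\sup_{\adv_\text{uninformed}}\eau(\adv_\text{uninformed},\PETuninformed,\dd)=\max\{p,1-p\}=1-\min\{p,1-p\}$. For the informed adversary, I would argue that, averaging over the uniformly random target index $i$, the optimal $\eau$ equals $\E_i[\max\{\eta_i,1-\eta_i\}]$, where $\eta_i$ is the posterior probability that $y_i=1$ given the whole transcript ($\feats$, the bag structure, and the proportions $\alpha_1,\dots,\alpha_n$), and that this is attained by the per-coordinate MAP rule. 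Since the bags are built from disjoint i.i.d.\ samples, the random partition is independent of the labels, and $\dd$ is a product distribution, the only part of the transcript not independent of $y_i$ is the proportion $\alpha$ of the unique bag $\bag$ containing $i$; conditioning on the $k$ i.i.d.\ $\Ber(p)$ labels of $\bag$ summing to $k\alpha$, exchangeability forces each of them, $y_i$ included, to be $1$ with conditional probability $\alpha$, so $\eta_i=\alpha$. Because all bags have size $k$, the bag containing a uniformly random index is uniform among the $n$ bags, hence $\alpha\sim\tfrac1k\Bin(k,p)$, and $\sup_{\adv_\text{informed}}\eau(\adv_\text{informed},\PETLLP,\dd)=\E_\alpha[\max\{\alpha,1-\alpha\}]=1-\E_\alpha[\min\{\alpha,1-\alpha\}]$.

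Subtracting then gives the stated identity $\ADV(\PETLLP,\dd)=\min\{p,1-p\}-\E_\alpha[\min\{\alpha,1-\alpha\}]$. To get the two upper bounds I would first rewrite, using $\min\{a,1-a\}=\tfrac12-|a-\tfrac12|$ and $\E[\alpha]=p$, the advantage as $\E_\alpha[\,|\alpha-\tfrac12|\,]-|\E[\alpha]-\tfrac12|$, which is $\ge 0$ by Jensen. For the $\sqrt{p(1-p)/k}$ bound, the triangle inequality $|\alpha-\tfrac12|\le|\alpha-p|+|p-\tfrac12|$ reduces matters to $\E|\alpha-p|\le\sqrt{\var(\alpha)}=\sqrt{p(1-p)/k}$ via Cauchy--Schwarz and $\var(\tfrac1k\Bin(k,p))=p(1-p)/k$. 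For the exponential bound I would assume $p\le\tfrac12-\beta$ (the case $p\ge\tfrac12+\beta$ being symmetric under $\alpha\leftrightarrow 1-\alpha$), use the pointwise identity $|\alpha-\tfrac12|=(\tfrac12-\alpha)+2\max\{\alpha-\tfrac12,0\}$ together with $\E[\alpha]=p$ to get $\ADV(\PETLLP,\dd)=2\,\E[\max\{\alpha-\tfrac12,0\}]\le\PP(\alpha\ge\tfrac12)$ (since $\max\{\alpha-\tfrac12,0\}\le\tfrac12\,\ind\{\alpha\ge\tfrac12\}$), and finish with Hoeffding applied to $k\alpha\sim\Bin(k,p)$: $\PP(\alpha\ge\tfrac12)=\PP(k\alpha-kp\ge k(\tfrac12-p))\le e^{-2k(\tfrac12-p)^2}\le e^{-2\beta^2 k}=e^{-\Omega(\beta^2 k)}$.

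The step I expect to be the main obstacle is the posterior computation for the informed adversary: I must check carefully that the supremum over arbitrary, possibly randomized, adversaries is attained by the per-coordinate Bayes rule; that conditioning on the full transcript collapses (by the various independences above) to conditioning on the single proportion $\alpha$ of the target's bag; and that choosing ``the bag containing a uniformly random index'' introduces no size-biasing, which holds precisely because every bag has the same size $k$. Everything after that is routine (triangle inequality, Jensen/Cauchy--Schwarz, Hoeffding, plus the elementary algebraic identities for $\min$, $\max$, and $|\cdot|$).
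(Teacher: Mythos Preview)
Your proposal is correct and follows essentially the same route as the paper: establish the identity via the Bayes-optimal posterior $\PP(y_i=1\mid\alpha)=\alpha$, then bound the gap by $\E|\alpha-p|\le\sqrt{p(1-p)/k}$ for the general case and by a tail bound on $\PP(\alpha\ge 1/2)$ for the gapped case. The only cosmetic differences are that the paper packages the posterior computation into a preliminary lemma (Lemma~\ref{l:f1f2}) and invokes Bernstein rather than Hoeffding for the exponential bound; your direct argument and Hoeffding work equally well.
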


A couple of remarks are in order. First, observe that, as expected, the advantage $\ADV(\PETLLP, \dd)$ is always non-negative. This can be easily derived by noting that $\E[\alpha] = p$ and then applying Jensen's inequality to the concave function $x \mapsto \min\{x,1-x\}$, for $x \in [0,1]$.
Second, despite being non-negative, Theorem \ref{thm:independent} also proves the desirable property that $\ADV(\PETLLP, \dd)$ goes to zero as the bag size $k$ increases. 
The convergence rate is in general of the form $1/\sqrt{k}$, but it becomes {\em negative exponential} in $k$ when $p$ is bounded away from 1/2.

We now investigate general distributions over features and labels and consider in turn $\ADV$ and $\IADV$.
\begin{restatable}{theorem}{thmNonAsymptotic}\label{thm:non_asymptotic}
Let $\dd$ be an arbitrary distribution on $\fs\times\ls$, $p = \E[\eta(x)]$, and $\mu = \E[\eta(x)(1-\eta(x))]$.
Then, for all bag sizes $k \geq 2$ we have:
\begin{align*}
\ADV(\PETLLP, \dd) 
= 
\widetilde{O}\biggl( & \frac{\mu^{1/4}(p(1- p))^{1/4}}{\sqrt{k}}  
+ \frac{\mu^{1/4}}{k} \biggr)~, \notag
\end{align*}
where $\widetilde{O}$ hides logarithmic factors in $k$.
\end{restatable}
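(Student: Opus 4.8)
The plan is to reduce the problem to controlling the expected "rounding gap" $\E_\alpha[\min\{\alpha,1-\alpha\}]$ relative to the prior success rate, exactly as in Theorem~\ref{thm:independent}, but now the bag label proportion $\alpha$ is a sum of \emph{independent but non-identically distributed} Bernoulli random variables (a Poisson-binomial): conditionally on the features $x_{i1},\ldots,x_{ik}$ in a bag, $k\alpha \sim \PBin(\eta(x_{i1}),\ldots,\eta(x_{ik}))$. First I would pin down the optimal informed and uninformed adversaries. The uninformed Bayes adversary predicts $\mathds{1}[\eta(x_i) \ge 1/2]$ and succeeds with probability $\E[\max\{\eta(x_i),1-\eta(x_i)\}] = 1 - \mu'$ where I write things in terms of $\E[\min\{\eta(x),1-\eta(x)\}]$; the informed Bayes adversary, given the bag and its proportion $\alpha$, predicts the label of a uniformly random index $j$ in the bag by thresholding $\PP(y_{ij}=1 \mid \text{bag}, \alpha)$ at $1/2$. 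So $\ADV(\PETLLP,\dd)$ equals the expected improvement in posterior confidence that the bag constraint $\sum_j y_{ij} = k\alpha$ buys the adversary, averaged over a random coordinate.

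The core estimate I need is: for a single coordinate $j$ in a random bag,
\[
\E\Bigl[\,\max_b \PP(y_{ij}=b \mid \text{bag},\alpha)\,\Bigr] - \E\bigl[\max\{\eta(x_{ij}),1-\eta(x_{ij})\}\bigr] = \widetilde O\!\left(\frac{\mu^{1/4}(p(1-p))^{1/4}}{\sqrt k} + \frac{\mu^{1/4}}{k}\right).
\]
To get this I would condition on all feature vectors in the bag and study how much information the aggregate $k\alpha$ carries about a single summand. Intuitively, when $k$ is large the Poisson-binomial $k\alpha$ concentrates around its mean $\sum_{j'} \eta(x_{ij'})$ with fluctuations of order $\sqrt{\sum_{j'}\eta(x_{ij'})(1-\eta(x_{ij'}))}$, so conditioning $y_{ij}$ on the exact value of $k\alpha$ perturbs its marginal probability by only $O\big(\eta(x_{ij})(1-\eta(x_{ij}))/\sqrt{\operatorname{Var}(k\alpha)}\,\big)$ in a typical event — a local-CLT / ratio-of-point-masses argument for Poisson-binomials (e.g. comparing $\PP(\PBin = t)$ for adjacent $t$, or using a Stein/Chen coupling that removes coordinate $j$). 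The advantage is only accrued when thresholding actually flips, i.e. when $\eta(x_{ij})$ is within the perturbation window of $1/2$; quantifying "how often $\eta(x)$ is that close to $1/2$" is where the factor $(p(1-p))^{1/4}$ and the fourth-root of $\mu$ come in, via a Markov/Chebyshev-type bound trading off the probability of being in the window against the magnitude of the gain there. The two terms $1/\sqrt k$ and $1/k$ correspond to the two regimes of the variance of $k\alpha$ (when $\sum_{j'}\eta(x_{ij'})(1-\eta(x_{ij'}))$ is $\Theta(k\bar\mu)$ versus $O(1)$), and the $\widetilde O$ absorbs the $\log k$ that enters when I truncate the Poisson-binomial tails at radius $\sqrt{\operatorname{Var}\cdot\log k}$ to make the local-CLT comparison valid uniformly.

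I would then average over the random bag assignment and over the random coordinate, apply Jensen to pull the per-bag empirical quantities $\frac1k\sum_{j'}\eta(x_{ij'})$ and $\frac1k\sum_{j'}\eta(x_{ij'})(1-\eta(x_{ij'}))$ back to their population versions $p$ and $\bar\mu := \mu$ (at the cost of another $1/\sqrt k$ fluctuation, which is already dominated), and collect terms. The main obstacle is the local limit estimate for the Poisson-binomial with \emph{arbitrary, data-dependent} success probabilities: the usual local CLT requires the variance to grow, which fails if most $\eta(x_{ij'})$ are near $0$ or $1$, so I expect to need a careful case split (large-variance bags handled by a quantitative local CLT with explicit error terms, e.g. via characteristic functions or the Krawtchouk expansion; small-variance bags handled crudely by noting the conditioning can change a marginal by at most $O(1/k)$ since removing one coordinate shifts the Poisson-binomial by one unit). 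Stitching these two regimes together cleanly, and making sure the window-probability bound for "$\eta(x)$ near $1/2$" interacts correctly with the regime split, is the delicate part; everything else is Jensen, tail truncation, and bookkeeping of logarithmic factors.
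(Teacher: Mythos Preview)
Your setup is right up to the point where you bound the posterior perturbation: you correctly identify that $\PP(y_j=1\mid \feats,\alpha)-\eta(x_j)$ has magnitude $\eta(x_j)(1-\eta(x_j))$ times a ratio of adjacent Poisson--binomial point masses. But the ``window argument'' you layer on top---that the advantage is only accrued when $\eta(x_j)$ lies within the perturbation radius of $1/2$, and that this is the source of $(p(1-p))^{1/4}$---is both unnecessary and incorrect. The paper never uses a window; it simply applies $\min\{a,1-a\}-\min\{b,1-b\}\le |a-b|$ (inequality (\ref{e:mindiff})) and bounds $\E\bigl[|\eta(x_{k+1})-\PP(y_{k+1}=1\mid\feats,\alpha)|\bigr]$ directly. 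The probability that $\eta(x)$ is near $1/2$ is not controlled by $p(1-p)$ in any useful way (take $\eta\equiv p$ constant, or $\eta\in\{0,1\}$ with $\E\eta=1/2$), so your proposed Markov/Chebyshev trade-off cannot deliver the stated factor.

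Here is where the exponents actually come from. After the $|a-b|$ step, independence of the target coordinate from the rest of the bag lets the paper \emph{factor out} $\E[\eta_{k+1}(1-\eta_{k+1})]=\mu$, reducing the problem to bounding $\E_\feats\bigl[\sum_s |\PP(Z_k=s\mid\feats)-\PP(Z_k=s-1\mid\feats)|\bigr]$. This $L^1$-of-differences is handled by (i) truncating $s$ to a Bernstein window $[kp-\sqrt{k}c_k,\,kp+\sqrt{k}c_k]$ of width $\Theta(\sqrt{kp(1-p)\log k})$ for the \emph{marginal} of $Z_k$ (a genuine binomial $\Bin(k,p)$, since $\E_\feats$ averages out the $\eta_j$'s), contributing a $1/k$ tail term; then (ii) Cauchy--Schwarz over that window, turning the sum into $\sqrt{\text{width}}\cdot\sqrt{\sum_s(\Delta p_s)^2}$; and (iii) a discrete-Fourier/characteristic-function computation showing $\E_\feats\sum_s(\Delta p_s)^2=O((\mu k)^{-3/2})$. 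Multiplying out, $\mu\cdot\sqrt{(kp(1-p))^{1/2}\cdot(\mu k)^{-3/2}}=\mu^{1/4}(p(1-p))^{1/4}/\sqrt{k}$, which is exactly the first term; the second term $\mu^{1/4}/k$ collects the truncation tail and lower-order Fourier pieces. Your local-CLT instinct is fine as a substitute for step (iii)---indeed the paper uses a translated-Poisson coupling for the related high-probability result---but the Cauchy--Schwarz-over-a-Bernstein-window step (ii) is the missing idea that produces the $(p(1-p))^{1/4}$, and it has nothing to do with $\eta$ being near $1/2$.
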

Hence, also in this more general case of LLP, the advantage converges to zero as the bag size $k$ grows large. Compared to the rate in Theorem \ref{thm:independent}, we are only losing the
$\log k$ factors implicit in the $\widetilde{O}$ notation.
This is because, when applied to the scenario where labels and features are independent, $\eta(x) = p$ is constant with $x$, so that
$\mu = p(1-p)$, and the first term becomes $\sqrt{\frac{p(1-p)}{k}}$, while the second one reads $\frac{(p(1-p))^{1/4}}{k}$, which is lower order when $k$ is large.
We strongly believe that the tighter gap-dependent analysis we carried out for Theorem \ref{thm:independent} extends to the more general scenario of Theorem \ref{thm:non_asymptotic}, but we leave this as an open question.

The corresponding bound for the individual expected attack advantage is given next.
\begin{restatable}{theorem}{thmNonAsymptotic2}\label{thm:non_asymptotic2}
Under the same assumptions and notation as in Theorem \ref{thm:non_asymptotic}, we have, for $\mu > 0$, and $k \geq \frac{2}{\mu}\log(1/\mu)$,
\begin{flalign*}
\IADV(\PETLLP, \dd, x_i) 
=
\mu_i\, \widetilde{O}\left((p(1-p))^{1/4} \sqrt{ \frac{\mu}{k}  + \frac{1}{\mu^{3/2}k}}
    + \sqrt{\frac{\mu}{k^2} +\frac{1}{\mu^{3/2} k^2} }\,\right),\notag
\end{flalign*}
where $\mu_i = \eta(x_i)(1-\eta(x_i))$, and
$\widetilde{O}$ hides logarithmic factors in $k$.
\end{restatable}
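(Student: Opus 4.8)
The plan is to carry out the Bayes‑optimal‑adversary argument behind Theorems~\ref{thm:independent} and~\ref{thm:non_asymptotic} with the target example $x_i$ held fixed, extracting the factor $\mu_i=\eta(x_i)(1-\eta(x_i))$ explicitly from Bayes' rule. First I would reduce to MAP adversaries: with $x_i$ fixed and $y_i\sim\cD_{\cY\mid x_i}$, the optimal informed adversary predicts $\argmax_{b\in\{0,1\}}\PP(y_i=b\mid\feats,\PETLLP(\feats,\labs))$ and the optimal uninformed one predicts $\argmax_b\PP(y_i=b\mid x_i)$, so with $\eta_i=\eta(x_i)$ and $q_i:=\PP(y_i=1\mid\feats,\text{bag proportions})$ we get
\[
\IADV(\PETLLP,\dd,x_i)=\E\bigl[\max\{q_i,1-q_i\}\bigr]-\max\{\eta_i,1-\eta_i\}.
\]
Only the bag $\bag$ containing $x_i$ and its proportion $\alpha$ inform $y_i$, so letting $z_2,\dots,z_k$ be the other (revealed) features of $\bag$ and $T$ the number of $1$'s among their labels --- conditionally on $z_{2:k}:=(z_2,\dots,z_k)$ a Poisson binomial with parameters $\eta(z_2),\dots,\eta(z_k)$ --- we have $k\alpha=y_i+T$ and, by Bayes' rule, $q_i=\eta_i r/(\eta_i r+1-\eta_i)$ with $r:=\PP(T=k\alpha-1\mid z_{2:k})/\PP(T=k\alpha\mid z_{2:k})$. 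Writing $\max\{q,1-q\}=\tfrac12+|q-\tfrac12|$, using $\E[q_i]=\eta_i$ and the triangle inequality yields $\IADV\le\E\,|q_i-\eta_i|$, and the identity
\[
q_i-\eta_i=\frac{\mu_i\,(r-1)}{1-\eta_i+\eta_i r}
\]
produces the claimed prefactor: $\IADV(\PETLLP,\dd,x_i)\le\mu_i\,\E\bigl[\,|r-1|/(1-\eta_i+\eta_i r)\,\bigr]$, the expectation being over $z_{2:k}$, over $y_i$, and over $T$ given $z_{2:k}$.

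It then remains to bound the average ``fluctuation'' $\E[\,|r-1|/(1-\eta_i+\eta_i r)\,]$ of the Poisson‑binomial consecutive‑probability ratio around $1$. A crude distribution‑free bound $\IADV\le O(\mu_i)$ already follows from the reindexing identities $\E[r_T\mid z_{2:k}]=1$ and $\E[1/r_{T+1}\mid z_{2:k}]=1$. To obtain the decay in $k$, I would split on the good event $G$ on which (i) the Poisson‑binomial variance $V:=\sum_{j=2}^k\eta(z_j)(1-\eta(z_j))$ lies in $[\tfrac12(k-1)\mu,\tfrac32(k-1)\mu]$ and (ii) $k\alpha$ lies within $\widetilde{O}(\sqrt V)$ of $\sum_{j}\eta(z_j)$. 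On $G$, a local‑limit estimate for the Poisson binomial gives $|r-1|=\widetilde{O}\bigl(|k\alpha-\sum_j\eta(z_j)|/V\bigr)$ with $1-\eta_i+\eta_i r=\Theta(1)$, and summing the telescoping total‑variation sum $\sum_s|\PP(T=s-1\mid z_{2:k})-\PP(T=s\mid z_{2:k})|\le 2\max_s\PP(T=s\mid z_{2:k})=\widetilde{O}(1/\sqrt V)$ controls the contribution of $G$ by $\widetilde{O}(1/\sqrt{k\mu})$; off $G$, the reindexing identities above together with Chernoff bounds for $T$ and for $V$ around its mean $(k-1)\mu$ show that $\PP(G^c)$ is $\mathrm{poly}(\mu)$‑small while the conditional expectation of $|r-1|/(1-\eta_i+\eta_i r)$ stays $O(1)$ --- and this is exactly where the hypothesis $k\ge\frac2\mu\log(1/\mu)$ enters, to make $(k-1)\mu$ large enough for the concentration to bite. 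Averaging the per‑bag estimate over $z_{2:k}$ and $y_i$ and tracking the relevant moment quantities ($\E[V]=(k-1)\mu$, $\Var(T)=(k-1)\,p(1-p)$, and the negative moments $\E[V^{-1/2}],\E[V^{-1}]$) then yields the two terms in the statement, the fractional powers of $\mu$ coming from the negative moments of $V$ and the $(p(1-p))^{1/4}$ from the outer fluctuation of $k\alpha$ around its global mean (after a Cauchy--Schwarz step).

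The main obstacle is the Poisson‑binomial local‑limit / consecutive‑ratio estimate made uniform in the index: one needs $|r_s-1|$ --- and, more delicately, the mass‑weighted sum $\sum_s\PP(k\alpha=s)\,|r_s-1|/(1-\eta_i+\eta_i r_s)$ --- under control not only in the bulk but throughout the moderate and heavy tails of $s$ (where $r_s$ may be far from $1$ but is weighted by a tiny point mass), together with a bound on the negative moments of $V=\sum_j\eta(z_j)(1-\eta(z_j))$ valid for small $\mu$ with no assumption on how the $\eta(z_j)$ concentrate near $0$ and $1$; these are precisely what force the condition $k\ge\frac2\mu\log(1/\mu)$ and explain the somewhat lossy powers of $\mu$ and $p(1-p)$ in the final bound.
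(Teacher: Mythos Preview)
Your reduction is exactly the paper's: once you observe that $|r-1|/(1-\eta_i+\eta_i r)=|\PP(T=s-1\mid z_{2:k})-\PP(T=s\mid z_{2:k})|/\PP(\Sigma=s\mid z_{2:k})$ and sum against the law of $\Sigma$, your bound $\IADV\le\mu_i\,\E[|r-1|/(1-\eta_i+\eta_i r)]$ becomes precisely
\[
\IADV(\PETLLP,\dd,x_i)\le\mu_i\cdot\E_{z_{2:k}}\Bigl[\sum_{s}|\PP(T=s\mid z_{2:k})-\PP(T=s-1\mid z_{2:k})|\Bigr],
\]
which is the factorization the paper isolates (its Eq.~(\ref{eq:exp_factor}) with $x_{k+1}$ frozen). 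From here the routes diverge. You propose a probabilistic attack: split on the event that $V=\sum_j\eta(z_j)(1-\eta(z_j))$ concentrates near $(k-1)\mu$, use unimodality plus a Poisson--binomial local limit to get $\sum_s|\Delta_s|\le 2\max_s\PP(T=s)=\widetilde O(1/\sqrt V)$ on the good event, and pay $\PP(G^c)$ off it. The paper instead bounds $\E_{z_{2:k}}[\sum_s|\Delta_s|]$ analytically (Lemma~\ref{lemma:abssumbound}): a Cauchy--Schwarz/Jensen step reduces to $\E_{z_{2:k}}[\sum_s\Delta_s^2]$, and this $L^2$ quantity is computed \emph{exactly} via the discrete Fourier transform, since the characteristic function of $T$ factors as $\prod_j(1-\eta(z_j)+\eta(z_j)e^{i\theta})$; after taking the outer expectation the product becomes $(\E[\eta_1^2+(1-\eta_1)^2]+2\E[\eta_1(1-\eta_1)]\cos\theta)^k$, and elementary calculus finishes. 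The condition $k\ge\frac{2}{\mu}\log(1/\mu)$ enters only at the very last line, to replace a residual $(1-2\mu)^k k^{1/2}$ term by $\mu/k$. The payoff of the Fourier route is that it entirely sidesteps the two obstacles you correctly flag: no local-limit uniformity is needed, and---because the independence over $j$ is exploited \emph{before} any inversion---the bound involves only $\mu=\E[\eta_1(1-\eta_1)]$ and never a negative moment of $V$, so no assumption on how the $\eta(z_j)$ cluster near $0$ or $1$ is required. Your approach would also go through, but with more bookkeeping to control $G^c$ and $\E[V^{-1/2}]$.
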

In the bounds of Theorems \ref{thm:independent}, \ref{thm:non_asymptotic} and \ref{thm:non_asymptotic} 
the dependence on the data distribution $\cD$ is encoded in $p$ and $\mu$ (as $p$ and/or $\mu$ gets smaller we should naturally expect a smaller advantage, as the adversary is facing an easier label prediction problem).
Appendix \ref{sa:tail} contains further distribution-dependent results, like a bound on the advantage that, conditioned on $\feats =(x_1,\ldots,x_m)$, is of the form
\( \frac{1}{k} \sum_{i=1}^k \frac{\mu_i}{ \sqrt{\sum_{j:j\neq i} \mu_j } },
\)
where $\mu_i = \eta(x_i)(1-\eta(x_i))$.

We now provide the corresponding expression for the additive attack advantage for RR (at a given level $\epsilon \geq 0$).
We recall that the results of \cite{liachuan} imply that every $\epsilon$-label-DP PET $\PET$ has advantage bounded as $\ADV(\PET, \dd) \leq 1 - \frac{2}{1 + e^\epsilon}$.
However, one drawback of this bound is that it is distribution independent (or, rather, worst case over distribution $\dd$). 
Yet, the attack advantage depends heavily on $\dd$.
For an extreme example, if we have $\PP_{(x,y) \sim \dd}(y = 1) = 1$, the attack advantage is zero for every PET, which is not directly captured by only relying on the properties of $\epsilon$-DP.

Recall that, since RR operates on each example independently, the advantage is independent of the number of examples $m$ (a formal proof is given in Appendix \ref{pf:rr_k_vs_1}). Hence we derive a bound for $\IADV(\PETRR, \dd, \feat_1)$, and an expression for the optimal adversary under RR (see Appendix \ref{thm:RR_extended}).

\begin{restatable}{theorem}{thmRRAdv}\label{thm:rradv_short} For any data distribution \dd, the individual expected attack advantage 
$\IADV(\PETRR, \dd, \feat_1)$ for randomized response with privacy parameter $\pi = \frac{1}{1+e^\eps}$ satisfies 
\[
\IADV(\PETRR, \dd, \feat_1) = \bigl(\min \{ \eta(\feat_1),  1 -\eta(\feat_1) \} - \pi\bigl)\cdot \I\{ \eta(\feat_1) \in [\pi, 1-\pi] \}~.
\]
\end{restatable}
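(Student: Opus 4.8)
The plan is to reduce the computation of both the informed and uninformed optimal attack utilities at coordinate $1$ to a single-example Bayesian estimation problem, and then to evaluate the Bayes-optimal adversary explicitly via posterior odds, splitting into two cases according to whether $\eta(\feat_1)$ lies in $[\pi,1-\pi]$.

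First I would observe that, because the samples $(x_j,y_j)$ are i.i.d.\ and the flipping coins of $\PETRR$ are independent across coordinates, conditioning on $x_1$ makes $y_1$ independent of $(\feats^{(-1)}, \tilde{\labs}^{(-1)})$, the features and privatized labels of all other examples. Hence, for any adversary $\adv$, the conditional success probability $\PP(\adv(\feats,\tilde{\labs})_1 = y_1 \mid x_1)$ is maximized by a predictor depending on the input only through $(x_1,\tilde y_1)$, namely the maximum-a-posteriori rule $\widehat y_1 = \argmax_{b\in\{0,1\}} \PP(y_1 = b \mid x_1,\tilde y_1)$; this also yields the explicit optimal adversary recorded in Appendix~\ref{thm:RR_extended}. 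The uninformed adversary, which sees only $\feats$, is likewise optimal when it outputs the MAP of $y_1$ given $x_1$, i.e.\ $\I\{\eta(\feat_1)\ge 1/2\}$, achieving $\ieau = \max\{\eta(\feat_1),1-\eta(\feat_1)\} = 1 - \min\{\eta(\feat_1),1-\eta(\feat_1)\}$.

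Next I would compute the informed utility. Writing $\eta = \eta(\feat_1)$, Bayes' rule gives $\PP(y_1 = 1 \mid x_1,\tilde y_1 = 1)\propto (1-\pi)\eta$ and $\PP(y_1 = 0 \mid x_1,\tilde y_1 = 1)\propto \pi(1-\eta)$, so on the event $\tilde y_1 = 1$ the MAP equals $1$ iff $\eta\ge\pi$; symmetrically, on $\tilde y_1 = 0$ the MAP equals $0$ iff $\eta\le 1-\pi$. When $\eta\in[\pi,1-\pi]$ the MAP therefore copies $\tilde y_1$, and since the joint probability of a confident correct guess on branch $b$ is $\PP(\tilde y_1 = b, y_1 = b\mid x_1) = (1-\pi)\,\PP(y_1 = b\mid x_1)$, summing $b\in\{0,1\}$ gives informed utility exactly $1-\pi$. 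When $\eta\notin[\pi,1-\pi]$ (say $\eta<\pi$, the case $\eta>1-\pi$ being symmetric) the MAP ignores $\tilde y_1$ and always predicts the majority label, so the informed utility collapses to $\max\{\eta,1-\eta\} = 1-\min\{\eta,1-\eta\}$, the uninformed baseline. Subtracting the uninformed utility in the two cases yields $\min\{\eta,1-\eta\}-\pi$ when $\eta\in[\pi,1-\pi]$ and $0$ otherwise, which is the claimed expression.

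The only real subtleties, and hence the main obstacle, are bookkeeping ones: justifying rigorously that coordinate-wise independence lets us restrict both suprema to single-coordinate MAP rules (so that the suprema are attained), checking that ties when a posterior equals exactly $1/2$ do not affect the utility value, and using the $\eta\leftrightarrow 1-\eta$ symmetry to avoid repeating the case analysis. Once these are handled the result follows from the elementary odds computation above.
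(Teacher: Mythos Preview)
Your proposal is correct and follows essentially the same approach as the paper's proof: both identify the Bayes-optimal informed adversary via posterior odds, split into cases according to whether $\eta(\feat_1)\in[\pi,1-\pi]$, and compute the informed utility as $1-\pi$ in the interior case (where the MAP copies $\tilde y_1$) versus the uninformed baseline otherwise. The only minor difference is that you fold the reduction to a single coordinate into the proof itself, whereas the paper separates this into Lemma~\ref{lem:rr_k_vs_1}; your treatment of ties and symmetry is also slightly more explicit, but the substance is identical.
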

We use the above expression in our experiments (\Cref{sec:Experiments}) to estimate the attack advantage of RR for various values of $\epsilon$. Being distribution dependent, this expression leads to much tighter bounds on the attack advantage.
For example at $\epsilon = 1$, the bound from \cite{liachuan} is $1 - \frac{2}{1 + \epsilon} \approx 0.46$, while for one dataset used in our experiments we see that the attack advantage for RR at $\epsilon = 1$ is only $0.00095$.

\subsection{Multiplicative Attack Advantage}\label{s:conf_based}
The definitions of attack advantage given so far measure the {\em absolute} change in successful reconstruction, not the {\em relative} change. Moreover, they do not fully capture the different levels of confidence in the reconstruction, since they involve an expectation over either $(\feats,\labs)$ or $\labs$ given $\feats$.
Next, we give an additional set of definitions that capture these important nuances. For the sake of brevity, in this section we directly instantiate the adversaries $\adv_\text{informed}$ and $\adv_\text{uninformed}$ to Bayes optimal predictors.

\begin{definition}
Given data distribution $\dd$, mechanism $\PET$, index $i$, and pair of labels $a, b \in \ls$, the \emph{multiplicative advantage} is the difference of log odds ratios:
\[
I_{a,b}(\PET,\dd,\feats,z,i) = \log\frac{\pi_i(\feats,z, a)}{\pi_i(\feats,z, b)} - \log\frac{\eta(\feat_i, a)}{\eta(\feat_i, b)}\,
\]
where $\eta(x_i, a) = \PP(y_i=a\,|\,x_i)$ and
$
\pi_i(\feats, z, a) = {\PP}_{
\labs \sim \cD_{\cY|\feats}^m}\left(y_i = a \mid \feats, \PET(\feats,\labs)=z\right)
$
is the probability of $y_i=a$ after observing the output $z$ of $\PET$. We denote the binary case by $I_{1,0}(\cdot) = I(\cdot)$ and shorten $\eta(x_i) := \eta(x_i, 1)$ and $\pi_i(\feats, z) := \pi_i(\feats, z, 1)$.
\end{definition}

This particular formulation has the advantage that $I_{a,b}(\PET,\dd,\feats,z,i)$
is large in absolute value whenever there is a large {\em relative} change in either label probability.

When $\eta(\feat_i)$ and $\pi_i(\feats,z)$ are both less than 1/2 (in the binary setting), we have $I(\PET,\dd,\feats,z,i) = \Theta(\log\frac{\pi_i(\feats,z)}{\eta(\feat_i)})$, with a symmetric expression for the case that both are close to 1.
Because we assume i.i.d. sampling, one can easily see that 
\[
I_{a, b}(\PET,\dd,\feats,z,x_i) = \log\frac{\PP(\PET(\feats,\labs) =z\, |\, y_i=a,\feats)}{\PP(\PET(\feats,\labs) =z\, |\,y_i=b,\feats)}~,
\]
which makes it clear that for $\eps$-label DP mechanisms like $\PETRR$ (at level $\epsilon$) the above log ratio is at most $\eps$ in absolute value for all $\feats$ and $\labs$. On the other hand, for $\PETLLP$, there is always a small chance that all the examples in the bag will have the same label 
making the above log ratio infinite. However, for large $k$ and distributions $\cD$ in which the $\eta(x,a)$ values are not too close to 0 or 1, the leakage $I_{a,b}(\PET,\dd,\feats,z,x_i)$ might be small in most cases.

What values of MA hould be considered acceptable? We argue that this probability 
should be viewed as a probability of system failure and set appropriately small. (For example, when running with $\PETLLP$ on modern-scale data sets, we might create billions of bags; even a tiny probability of failure can lead to many bags whose individuals have their labels revealed exactly). 
The next theorem gives high probability bounds for $\PETLLP$ in the simple case when the distribution $\dd$ is a product distribution.

\begin{theorem}\label{thm:conf_based}
Fix a data distribution $\dd$ over features and binary labels, let $p = \PP_{(x,y) \sim \dd}(y = 1)$, and assume the labels are independent of features (i.e., $\dd$ is a product of distributions over $\cX$ and $\cY$). Then there are universal constants $c_1, c_2 >0$ such that for $p \in (0,1)$, all bag sizes $k \geq \frac{c_1\ln(1/\delta)}{p(1-p)}$, and all $i \in [k]$
we have
\[
\PP_{(\feats,\labs)\sim\cD_{\ls|x_i}^m} \left( \Bigl|I_{1,0}(\PETLLP,\dd,\feats,\PETLLP(\feats,\labs), x_i)\Bigl| > c_2\sqrt{\frac{\ln(1/\delta)}{p(1-p)k}}\right) \leq \delta~.
\]
\end{theorem}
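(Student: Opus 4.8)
The plan is to collapse the multiplicative advantage to an explicit one-dimensional quantity and then apply a standard binomial tail bound. Start from the likelihood-ratio identity recorded just before the theorem,
\[
I_{1,0}(\PETLLP,\dd,\feats,z,x_i) = \log\frac{\PP(\PETLLP(\feats,\labs)=z \mid y_i=1,\feats)}{\PP(\PETLLP(\feats,\labs)=z \mid y_i=0,\feats)}~.
\]
Since $\dd$ is a product distribution the features are irrelevant, and by exchangeability of the i.i.d.\ data we may assume $x_i$ is placed in a fixed bag together with $k-1$ other i.i.d.\ examples; the proportions of the remaining $n-1$ bags do not depend on $y_i$ and so cancel from the ratio. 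Let $s=k\alpha\in\{0,\dots,k\}$ be the number of positive labels in the target bag and $T=s-y_i\sim\Bin(k-1,p)$ the count contributed by the other $k-1$ examples. A short computation with binomial pmfs, using $\binom{k-1}{s-1}/\binom{k-1}{s}=s/(k-s)$, gives
\[
I_{1,0}=\log\frac{\PP(T=s-1)}{\PP(T=s)}=\log\!\left(\frac{s}{k-s}\cdot\frac{1-p}{p}\right)=\log\frac{\alpha}{1-\alpha}-\log\frac{p}{1-p}~.
\]
So, with the convention $I_{1,0}=\pm\infty$ when $s\in\{0,k\}$, the multiplicative advantage is exactly $\mathrm{logit}(\alpha)-\mathrm{logit}(p)$, and under the law in the theorem $k\alpha\sim\Bin(k,p)$.

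Next I would control $|\alpha-p|$. By Bernstein's inequality for a mean of $k$ i.i.d.\ $\Ber(p)$ variables, there is a universal constant $C$ such that with probability at least $1-\delta$,
\[
|\alpha-p|\;\le\; C\left(\sqrt{\frac{p(1-p)\ln(1/\delta)}{k}}+\frac{\ln(1/\delta)}{k}\right)~.
\]
The ratio of the second term to the first is $\sqrt{\ln(1/\delta)/(p(1-p)k)}\le 1/\sqrt{c_1}$ by the hypothesis $k\ge c_1\ln(1/\delta)/(p(1-p))$, so for $c_1$ a large enough universal constant this collapses to $|\alpha-p|\le C'\sqrt{p(1-p)\ln(1/\delta)/k}$; enlarging $c_1$ further, the right-hand side is at most $\tfrac12\min\{p,1-p\}$ (using $p(1-p)\le\min\{p,1-p\}$), which forces $\alpha\in[p/2,(1+p)/2]$ on this good event — in particular $s\notin\{0,k\}$ there, so $I_{1,0}$ is finite.

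On $[p/2,(1+p)/2]$ the logit has derivative $1/(t(1-t))$, and since $t\mapsto t(1-t)$ is concave its minimum over the interval is at an endpoint, where one checks $t(1-t)\ge p(1-p)/4$; hence $\mathrm{logit}$ is $\bigl(4/(p(1-p))\bigr)$-Lipschitz there. Combining with the concentration bound, on the good event
\[
|I_{1,0}|=\bigl|\mathrm{logit}(\alpha)-\mathrm{logit}(p)\bigr|\;\le\;\frac{4}{p(1-p)}\,|\alpha-p|\;\le\;4C'\sqrt{\frac{\ln(1/\delta)}{p(1-p)k}}~,
\]
which is the claim with $c_2=4C'$. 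The argument is otherwise routine; the part that needs care is coordinating the two universal constants, since $c_1$ must be large enough that the concentration bound simultaneously (i) loses its $\ln(1/\delta)/k$ term and (ii) keeps $\alpha$ away from $0$ and $1$ so the log-odds ratio is finite and Lipschitz — and this "boundedness away from the endpoints", the only genuine obstacle, is exactly where the lower bound on $k$ is used (note also $(1-p)^k,p^k\le\delta$ under this hypothesis, so the infinite-leakage event is anyway absorbed into the $\delta$ failure probability).
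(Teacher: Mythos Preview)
Your proof is correct and follows essentially the same route as the paper's: both reduce $I_{1,0}$ to $\mathrm{logit}(\alpha)-\mathrm{logit}(p)$, apply Bernstein to control $|\alpha-p|$, use the lower bound on $k$ to keep $\alpha$ bounded away from $\{0,1\}$, and then convert to a log-odds bound. The only cosmetic difference is that you use a Lipschitz bound on $\mathrm{logit}$ over $[p/2,(1+p)/2]$ where the paper does the equivalent step via the elementary inequalities $\ln(1+x)\le x$ and $\ln(1-x)\ge -x\ln 2$.
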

Then, when $k$ is sufficiently large and the conditional label probabilities $\eta(x_i)$ are all equal to some constant $p\in (0,1)$, the probability of failure drops off exponentially as $k$ increases. When working with a data set of size $m$, we can substitute $\delta = \delta' \cdot \frac{k}{m}$ to get a bound on the probability that any bag in the dataset exhibits extreme values of leakage $I_{a,b}$. For the setting covered by Theorem \ref{thm:conf_based}, one gets that the bags must be of size  $k\approx \log(n)/(p(1-p))$ for the probability of extreme bags to converge towards 0. In the experiments in Section \ref{sec:Experiments}, we report results on both the additive and the multiplicative reconstruction advantage criteria.

\subsection{Connection to prior work on auditing and membership inference, and distributional DP}\label{s:prior_work}
Our approach on quantifying attack advantage can be viewed as fitting into a recent line of work on auditing learning algorithms via membership inference attacks. This class of attacks was introduced by \cite{Homer+08} and subsequently studied in theory~\cite{BunUV14,DworkSSUV15} and practice~\cite{ShokriMIA}. Whether used as attacks or empirical lower bounds on differential privacy parameters~\cite{JagielskiUO20}, these approaches set up a hypothesis test for the presence or absence of a particular target record in the training data~\cite{CarliniCNSTT22lira}. They diverge in whether they consider an adversary with access to all the rest of the training data (``fully informed''), or with only distributional knowledge of the rest of the training data (usually made available to the attacker as an independent sample drawn from the same distribution). 

In our setting, testing for the presence of an individual makes no sense, since individual records' features are known. In the binary case, our measures are success metrics for testing the hypothesis that a particular individual's label is 1 as opposed to 0: Multiplicative advantage bounds the ratio of true-  to false-positive rates of the Bayes' optimal test, while additive advantage measures the difference of true- and false-positive rates. This perspective was also taken in previous work: for example, \cite{Malek2021antipodes} use such a test to lower-bound the differential privacy parameters of label-DP algorithms. However, their modeling assumes a fully informed adversary (that knows all labels). In contrast, we posit a reasonable model of adversarial uncertainty in order to to look at the risks of  mechanisms such as label aggregation. In this respect, our approach follows more in the line of membership inference attacks that use only distributional knowledge, or to variants of differential privacy that assume adversarial uncertainty such as \cite{BhaskarBGLT11,BassilyGKS13,DesfontainesP20}.

\section{Experiments}\label{sec:Experiments}
In this section, we demonstrate how our advantage measures can be used to quantify and compare the potential privacy leakage of RR, LLP, and two additional PETs with a range of privacy parameters. See \Cref{app:experiment_details} for full details on our PETs, experimental setup, and additional results.

\paragraph{Mechanisms.}We empirically evaluate a number of proposed mechanisms for label privacy: randomized response (RR), Label Proportions (LLP), and two further PETs: LLP+Lap, where (zero-mean) Laplace noise is added to the label aggregate in each bag, and LLP+Geom, where geometric noise is added and then clipped so the estimated proportion lies in $[0,1]$.
The noise scale for both LLP+Lap and LLP+Geom is chosen so that the PETs satisfy $\epsilon$ label differential privacy.
Postprocessed clipped geometric noise is the mechanism for releasing a binary sum that is optimal among all differentially private mechanisms for several loss measures~\cite{GhoshRS09}. To minimize squared error, the optimal postprocessing is to correct for the bias introduced by clipping.
We empirically study these advantage measures on a variety of synthetic and benchmark datasets, reporting either the AUC vs. advantage trade-off or the prior-vs-posterior scatter plots, that help shed light on the distribution of our advantage measures on different points in the dataset for the various PETs.

\paragraph{Estimating class conditionals for advantage and  PET Utility.} The advantage measures for all PETs we consider can all be computed as a function of the class conditional probabilities $\eta(\feat) = \PP (y=1 \vert \feat)$. In our synthetic datasets we compute the advantage measures for different distributions on $\eta(\feat)$; however, for non-synthetic data sets, the value of $\eta(\feat)$ is not explicitly known. Instead, we estimate $\eta(\feat)$ for each $\feat$ by training a classifier $h$ (without any PETs) and use the prediction probability $h(\feat)$ as a proxy for $\eta(\feat)$. We compute the figure of merit (additive or multiplicative advantage) for the optimal informed and uninformed attackers using these estimates.

\begin{figure}[h]
    \includegraphics[width=0.95\textwidth]{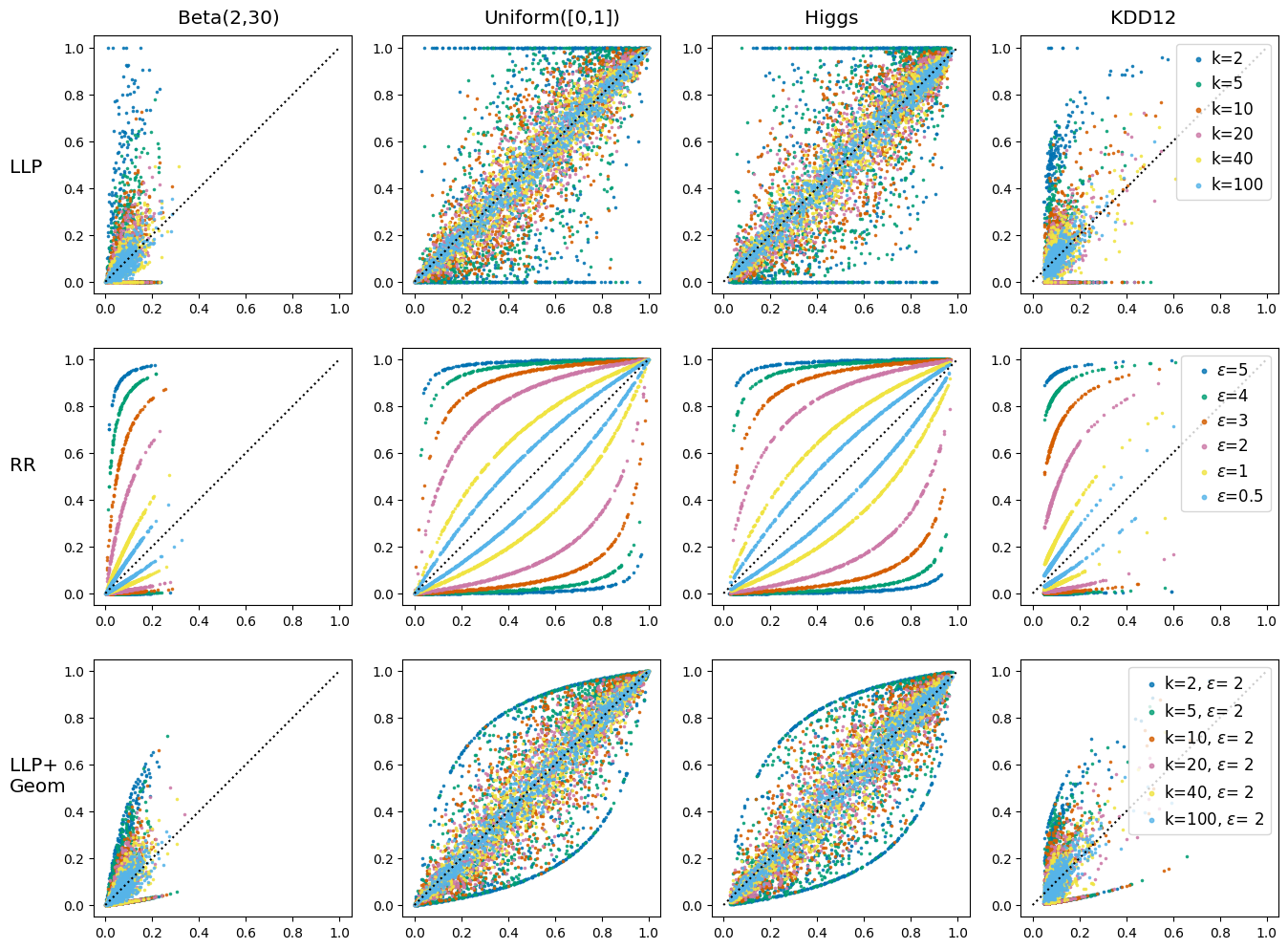}
    \caption{Prior-posterior scatter plots for LLP, RR, and LLP+Geom from two synthetic datasets (where the prior $\eta(x)$ is drawn) and the two real-world datasets (where $\eta(x)$ is approximated). The colors of the dots correspond to different parameter values for the PETs.
    For each bag size $k$ and distribution, we did 1000 independent runs. The further a point is from the $y=x$ dotted line, the more is revealed about its label as a result of the PET.
    \label{f:scatter_plots}
    }
\end{figure}

We also measure the utility of each PET for trained models. For each dataset, PET, and privacy parameters, we apply the PET to the training labels to produce a privatized version.
We then train a model on the privatized data using minibatch gradient descent with the Adam optimizer \cite{adam_optimizer} and a loss function designed for the PET.
For RR, the loss debiases the binary crossentropy loss when evaluated on the RR labels, and for LLP, LLP+Laplace, and LLP+Geom, we minimize the Empirical Proportion Risk defined in \Cref{sec:prelims}.
For each dataset, PET, and privacy parameters, we perform a grid search over the learning rate parameter and report the test AUC of the best performing learning rate.
All utility results are averaged over multiple runs. The maximum standard error in the mean for the reported AUCs is $0.0076$ and the vast majority are less than $0.002$.

\paragraph{Results.}
Our results are reported in Figures \ref{f:scatter_plots}, \ref{f:2}, and \ref{fig:privacy_vs_utility}, as well as in Appendix \ref{app:experiment_details}. We used two synthentic datasets and two real-world datasets (Higgs and KDD12). Notice that Higgs is a relatively balanced dataset, while KDD12 is a quite imbalanced one. 

Figure \ref{f:scatter_plots} give scatter plots of prior $\PP(y=1\,|\,\feats)$ vs. posterior $\PP(y=1\,|\,\feats,\PET(\feats,\labs) = z)$ distribution on the four datasets for LLP, RR and LLP+Geom with different colors for each value of parameters $k$ (for LLP and LLP+Geom) and $\epsilon$ (for RR and LLP+Geom). The behavior of LLP+Lap is reported in Appendix \ref{app:experiment_details} for completeness, but it turned out to be similar to that of LLP+Geom.

It is instructive to observe how the points spread w.r.t. the main diagonal $y=x$. Points that are on the diagonal have \emph{no label privacy loss} as a result of the PET, because the posterior is identical to the prior. On the other hand, points with a posterior of 0 or 1 have \emph{complete privacy loss}, since the posterior on the private label is deterministic. In general, a wider spread away from $y=x$ indicates more privacy loss. The scatter plots tend to form spindle shapes whose width is determined by the privacy parameters of the PETs. Yet, there is a substantial difference between LLP and RR. While RR generates points on the boundaries of the spindles (middle row), LLP tends to spread such points more uniformly. Moreover, some of the points for LLP lie on the edges of the square $[0,1]^2$, which correspond to an {\em infinite} multiplicative advantage. LLP+Lap (bottom row) is somewhere in between, in that the points are also located inside the spindles, but never on the edges of the square. Note that a spindle boundary is the set of points having the same multiplicative advantage measure. Moreover, these differences become more pronounced on skewed datasets like KDD12 as the probability of homogenous bags (bags of all 0 labels or 1 labels) become much more likely.

\begin{figure}[h]
    \includegraphics[width=1.0\textwidth]{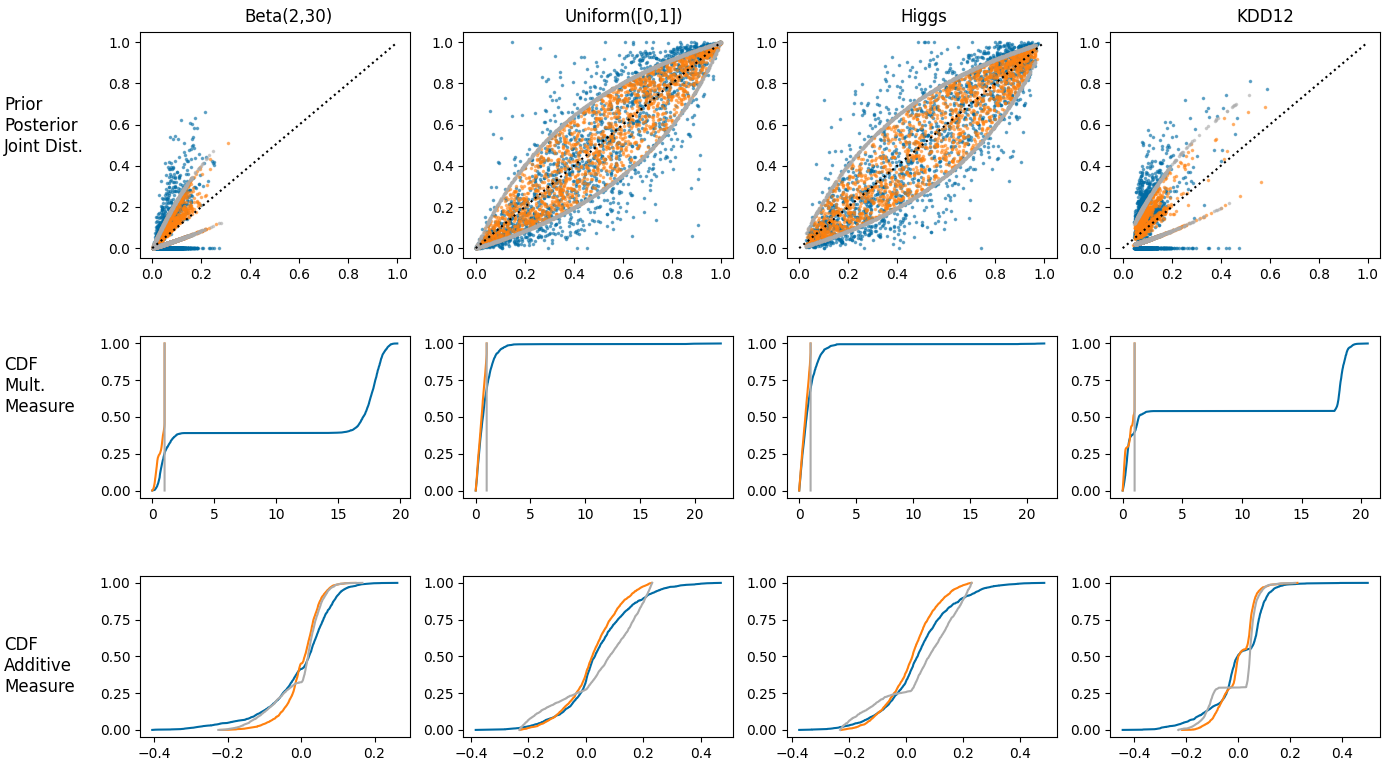}
    \caption{{\bf Top:} Prior-posterior scatter plots for RR (grey), LLP (blue), and LLP+Geom (orange) with $\epsilon = 1$ and $k=8$ on the same datasets as in Figure \ref{f:scatter_plots}. With these choice of parameters, the three mechanisms  roughly achieve the same AUC on Higgs.
    {\bf Middle:} Empirical CDFs of (the absolute value of) the multiplicative advantage for the three PETs on the four datasets.
    {\bf Bottom:} CDFs of the additive advantage. 
    \label{f:2}}
\end{figure}

\begin{figure}[h]
    \includegraphics[width=0.48\textwidth]{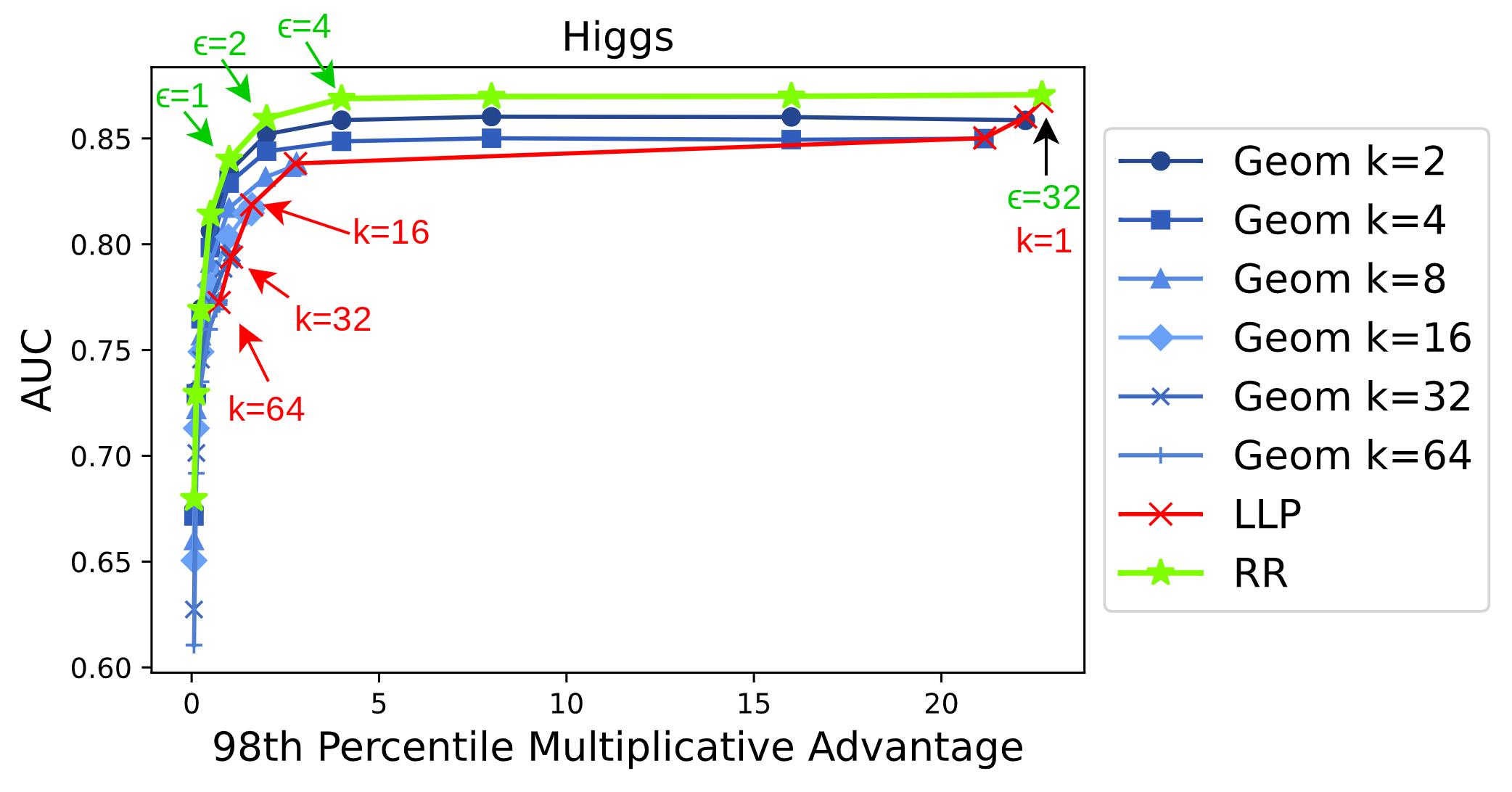} 
    \includegraphics[width=0.48\textwidth]{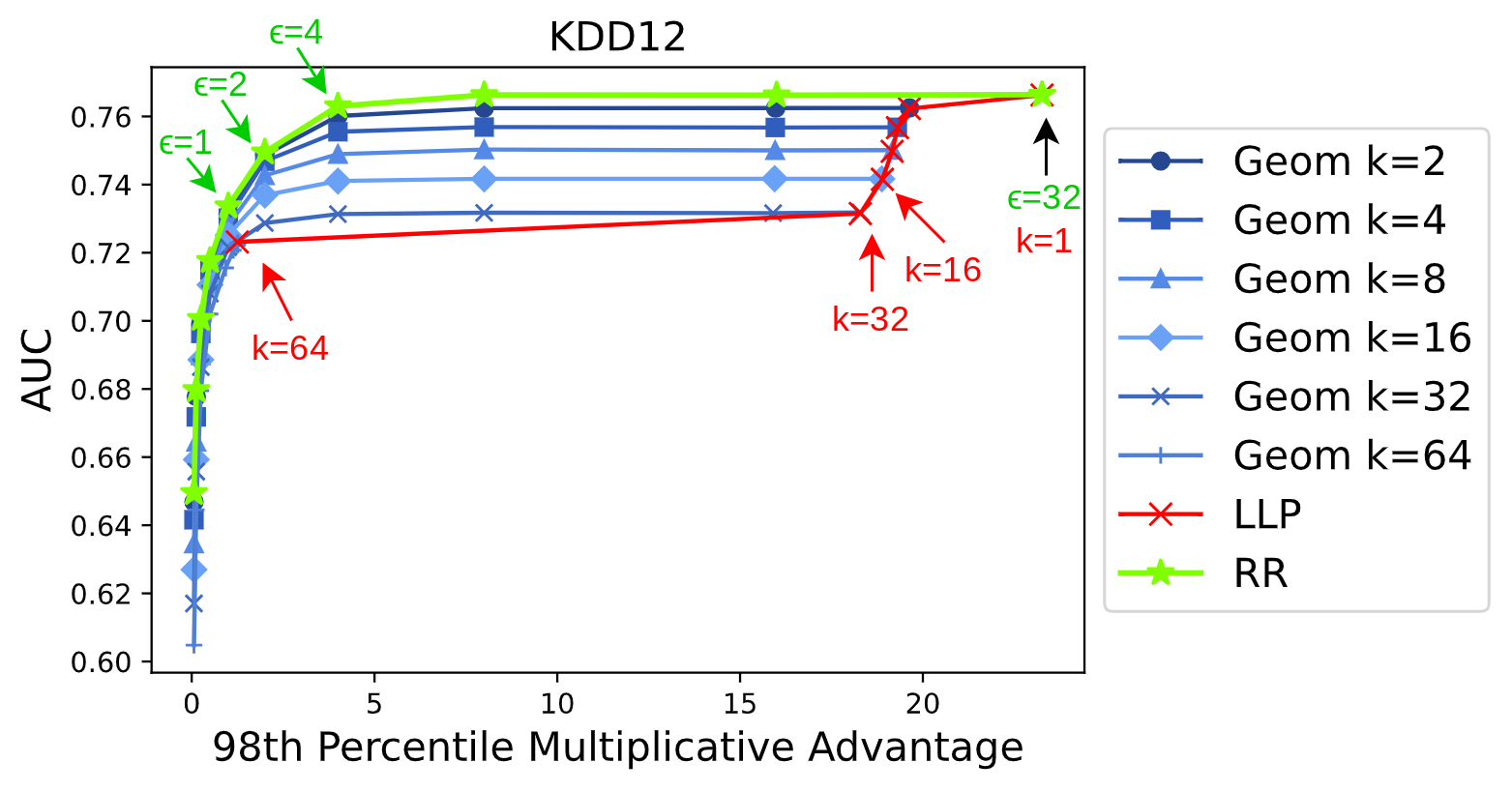}\\
    \includegraphics[width=0.48\textwidth]{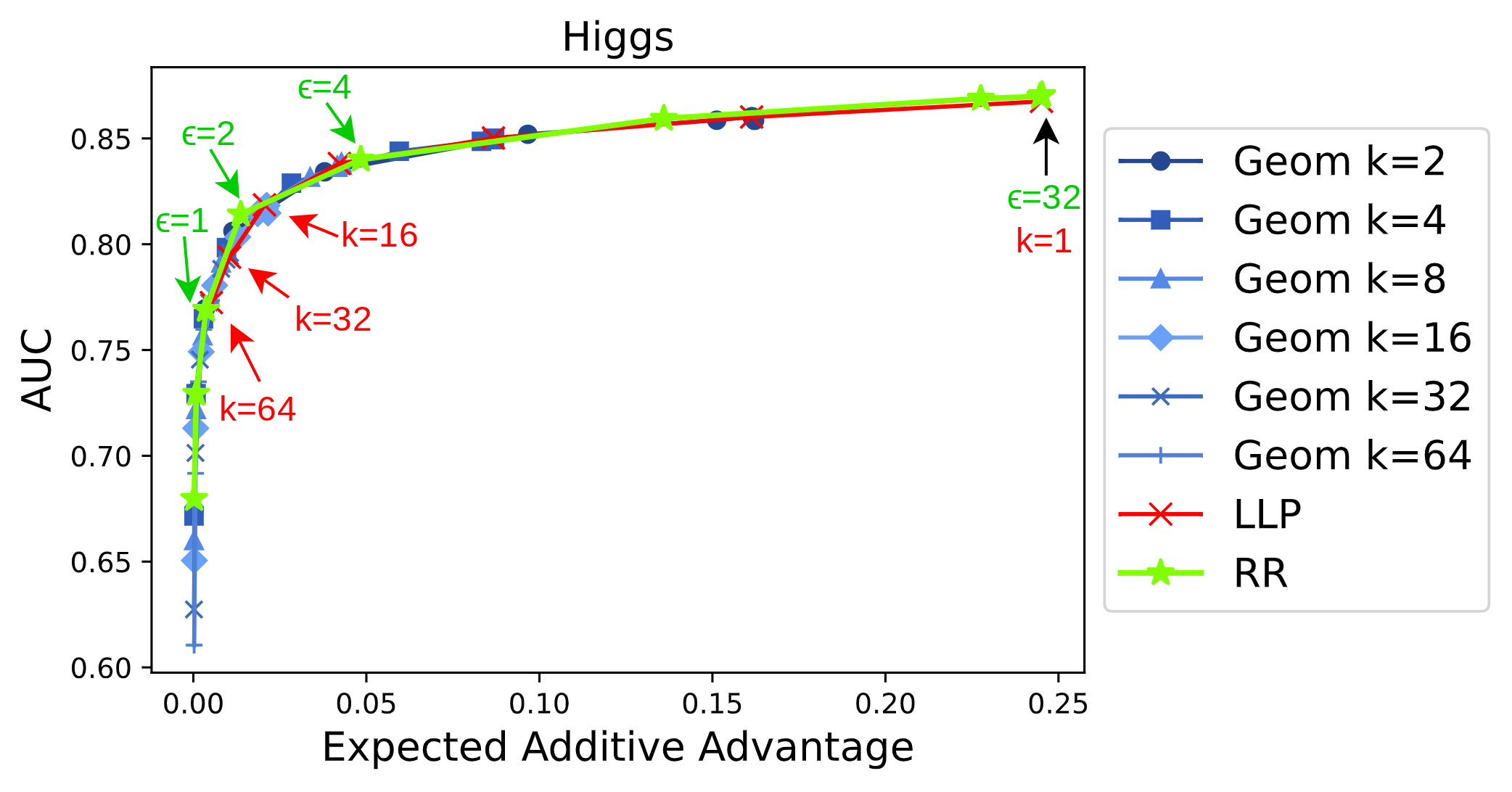} 
    \includegraphics[width=0.48\textwidth]{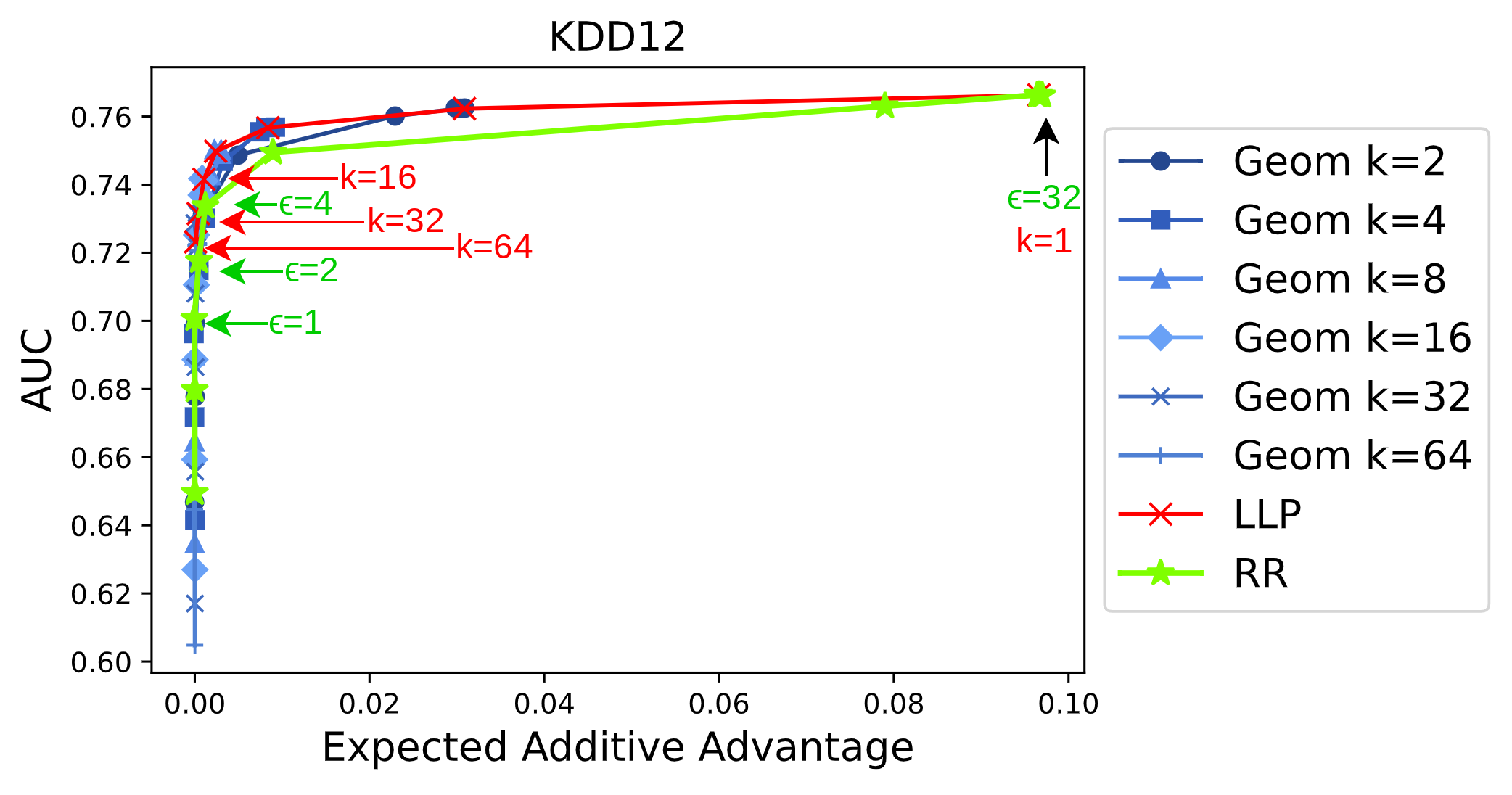}
    \vspace{-1mm}
    \caption{Privacy vs utility tradeoff curves for the various PETs on the Higgs and KDD12 datasets. 
    Utility is measured by AUC on test set, while privacy is either the additive measure (bottom row) or the 98th-percentile of the multiplicative measure (so as to rule out the infinite multiplicative advantage cases that can occur for LLP).
    Each point corresponds to a setting of the privacy parameter for the PET ($\epsilon$ for RR, $k$ for LLP, and both for LLP+Geom).
    The $x$-coordinate is the advantage (either additive or multiplicative) value for that PET, while the $y$-coordinate is the test AUC of a model trained from the output of that PET. The AUC of the model trained without a PET roughly corresponds to the top value achieved by these curves.
    \label{fig:privacy_vs_utility} }
\end{figure}

These scatter plots are already suggestive of the expected behavior of the utility-privacy tradeoff curves that will come next in \Cref{fig:privacy_vs_utility}. Given an allowed level of privacy (as measured by either additive or multiplicative advantage) each point in the interior of the spindle associated with that privacy level will also lie in the boundary of a smaller (thus higher privacy) spindle. This will make inference harder, thereby reducing (average) utility at that level of privacy. Following this intuition, we expect RR to achieve a higher utility-privacy curve than LLP, with LLP+Geom somewhere in between. A more detailed comparison between RR and LLP+Geom (or LLP+Lap) is in \Cref{app:further}.

Figure \ref{f:2} helps further illustrate the different  behavior of the considered PETs vis-\`a-vis the advantage measure. On the four datasets, we pick here a set of parameters that make the PETs AUC-comparable on Higgs. We then plot the empirical Cumulative Distribution Functions (CDFs) on the four datasets for both multiplicative (middle row) and additive (last row) measures. While the CDFs of additive measures are roughly similar across the three PETs, this is not the case for the highly skewed Beta(2,30) and KDD12 datasets, where one can easily spot for LLP (blue line) the presence of a significant mass of points with large multiplicative advantage. Note that in the middle row, the CDF of RR (grey line) is just a vertical line ($x = 1$) while the CDF of LLP+Geom (orange line) first follows the blue line and then the grey one. These high multiplicative advantage points cannot be detected when only relying on the CDFs of the additive measure.

\paragraph{Utility vs. advantage tradeoff on benchmark datasets.} The experiments so far have compared the distribution of individual additive and multiplicative advantage induced by RR, LLP, and LLP+Geom for various parameter settings and datasets.
However, the use of PETs is always a tradeoff between utility and privacy, since if we cared about privacy alone, the best strategy would be to release no information at all.
\Cref{fig:privacy_vs_utility} plots the AUC vs advantage for each PET as we vary the PET's privacy parameters. Note that, since the $x$-axis is advantage, we are able to put the normally incomparable privacy parameters of the PETs on equal footing. For RR and LLP, the single privacy parameter ($\epsilon$ and $k$, respectively) traces out an AUC vs Advantage curve.
Since LLP+Lap and LLP+Geom have two parameters, there is an area of achievable AUC vs Advantage pairs.
For these mechanisms, we plot a separate curve for each bag size $k$ showing the tradeoff when varying $\epsilon$ for that $k$.

When measuring privacy loss via multiplicative advantage, RR has the best privacy vs. accuracy tradeoff compared to all other PETs. This is consistent with the observations in Fig.~\ref{f:2}. In particular, the three mechanisms (with the given parameters) have almost the same AUC on Higgs. Yet when looking at the CDFs of the multiplicative measure one sees that the two DP mechanisms have bounded MAs whereas LLP has a significant number of extreme values. This trend is more pronounced for KDD12 than Higgs. Thus, for the same AUC, LLP has a higher multiplicative advantage, with a more pronounced gap for KDD12 than for Higgs. On the other hand, Fig.~\ref{f:2} also shows that the CDFs of the additive advantage are roughly the same for all three mechanisms (again, at the same AUC). Thus, in Fig.~\ref{fig:privacy_vs_utility} the mechanisms have similar AUC vs advantage tradeoffs.

\section{Discussion and Conclusions}
\label{sec:discussion}
We study ways to audit label privatization mechanisms for commonly used PETs: randomized response, random label aggregation, and combinations thereof. Together, the additive and multiplicative advantage measures we introduced paint a richer picture of the reconstruction risks posed by different parameter settings, and for the first time allow us to compare their privacy-accuracy trade-off curves. 

The measures we propose are tailored to settings where the data are sampled i.i.d. from a distribution, and each record consists of public features together with one sensitive binary feature. Computing these measures empirically requires estimates of the adversary's label uncertainty, which won't be correct if the adversary has significant side information or, crucially, if the same data are re-used in multiple mechanisms. Handling such complex settings requires more general concepts like differential privacy.
Another complexity emerges when we consider settings like click prediction, where a minority label (a click) is viewed as qualitatively more revelatory than the majority one (no click). While the measures we consider are agnostic to the label semantics, principled ways to incorporate complex semantics might be valuable and shed light on heuristics used in practice.

\medskip

{
\small

\bibliographystyle{plainnat}

}


\appendix
\newpage



\section{Proofs for Section \ref{sec:recon_adv}}\label{sa:proofs_lrm}

We start off by providing a preliminary result that will be useful throughout all  proofs.

\subsection{Preliminary results}

Consider, for a specific $i \in [m]$, the random variables
\begin{align}
f_{1i}(\feats) 
&= \sup_{\adv_\text{informed}} \PP_{\labs \sim \cD_{\cY|\feats}^k}(\adv_\text{informed}(\feats, \PET(\feats, \labs))_i = y_i\,|\, \feats)\notag\\
{\mbox{}}&\label{e:f1f2}\\
f_{2i}(\feats) 
&= \sup_{\adv_\text{unininformed}} \PP_{\labs \sim \cD_{\cY|\feats}^k}(\adv_\text{uninformed}(\feats, \PETuninformed(\feats, \labs))_i = y_i\,|\, \feats)~.\notag
\end{align}
The relevance of $f_{1i}$ and $f_{2i}$ stems from the fact that for the mechanisms $\PET$ we consider in this paper, the advantages 
$\ADV$, $\IADV$, and $\HPADV$ 
will be defined in terms of $f_{1i}$ and $f_{2i}$.
\begin{lemma}\label{l:f1f2}
For any PET $\PET$, any $\feats \in \cX$, any conditional data distribution $\dd_{\ls|\feats}$, number of examples $k$, and $i \in [m]$, we have
\[
f_{1i}(\feats) - f_{2i}(\feats) 
= \min\{\eta(x_i), 1-\eta(x_i)\} - \E_{\labs \sim \dd_{\ls|\feats} }\biggl[
  \min_{b\in \{0,1\}} \bigl\{
  \PP(y_i = b \mid \feats, \PET(\feats, \labs))
  \bigr\}\,|\,\feats\biggr]~,
\]
where $\eta : x \mapsto \PP(y = 1 \mid x)$. Moreover, the optimal (informed) adversary $\adv^*$ is:
    \[
    \adv^*(\feats, \PET(\feats, \labs))_i
    := \begin{cases}
      1 & \text{if $\PP(y_i = 1 \mid \feats, \PET(\feats, \labs)) \geq 1/2$} \\
      0 & \text{otherwise}~.
    \end{cases}
    \]
\end{lemma}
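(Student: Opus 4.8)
The plan is to observe that both suprema appearing in \eqref{e:f1f2} are Bayes-optimal prediction problems: I would identify the optimal adversary with the corresponding maximum-a-posteriori (MAP) rule, compute the two resulting success probabilities, and subtract.

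For the informed adversary I would fix $\feats$, abbreviate $z = \PET(\feats,\labs)$, and condition on the realized value of $z$: by the tower rule the success probability equals $\E_{z}\bigl[\PP(y_i = \adv_\text{informed}(\feats,z)_i \mid \feats, z)\bigr]$, where the outer average is over $\labs \sim \dd_{\ls|\feats}$ and the coins of $\PET$. Since the label $\adv_\text{informed}(\feats,z)_i$ may be selected independently for each realized $z$, the supremum over adversaries is attained pointwise by the MAP rule $\argmax_{b\in\{0,1\}}\PP(y_i=b\mid\feats,z)$ with ties broken toward $1$ — which is exactly $\adv^*$ from the statement — and this gives
\[
f_{1i}(\feats) = \E_{\labs\sim\dd_{\ls|\feats}}\Bigl[\max_{b\in\{0,1\}}\PP(y_i=b\mid\feats,\PET(\feats,\labs)) \,\Big|\, \feats\Bigr] = 1 - \E_{\labs\sim\dd_{\ls|\feats}}\Bigl[\min_{b\in\{0,1\}}\PP(y_i=b\mid\feats,\PET(\feats,\labs)) \,\Big|\, \feats\Bigr],
\]
using $\max_b\PP(y_i=b\mid\cdot) = 1 - \min_b\PP(y_i=b\mid\cdot)$ in the binary case.

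For the uninformed adversary, $\PETuninformed(\feats,\labs)=\bot$, so $\adv_\text{uninformed}(\feats,\bot)_i$ depends only on $\feats$ and the optimal choice is again the MAP label, giving $f_{2i}(\feats) = \max_b\PP(y_i=b\mid\feats) = 1 - \min_b\PP(y_i=b\mid\feats)$. I would then invoke i.i.d.\ sampling, which makes $y_i$ conditionally independent of $\feats^{(-i)}$ given $x_i$, so that $\PP(y_i=1\mid\feats)=\eta(x_i)$ and $\min_b\PP(y_i=b\mid\feats)=\min\{\eta(x_i),1-\eta(x_i)\}$; subtracting $f_{2i}(\feats)$ from $f_{1i}(\feats)$ then produces the claimed identity. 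There is no genuinely hard step here; the one place that requires care is the measurable-selection point — one must verify that $z\mapsto\I\{\PP(y_i=1\mid\feats,z)\geq 1/2\}$ is a valid (measurable) adversary and that the supremum commutes with $\E_z$ even when $\cZ$ is a continuum (e.g.\ for Laplace- or geometric-noised aggregation). This is routine because the label set is finite and the objective is linear in the adversary's (possibly randomized) strategy, so randomization never improves it and the deterministic MAP selector is optimal.
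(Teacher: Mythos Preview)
Your proposal is correct and follows essentially the same approach as the paper: both arguments condition on $z=\PET(\feats,\labs)$, identify the pointwise optimal adversary with the MAP rule, and compute $f_{1i}(\feats)=1-\E_z[\min_b \PP(y_i=b\mid\feats,z)]$ and $f_{2i}(\feats)=1-\min\{\eta(x_i),1-\eta(x_i)\}$ before subtracting. Your explicit mention of the measurable-selection issue is a nice addition that the paper leaves implicit.
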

\begin{proof}
Set for brevity $z = \PETuninformed(\feats, \labs)$. Since $\PETuninformed$ outputs null, and the $(\feat_j, \lab_j)$ pairs for $j \in [m]$ are independent, we have 
\[
\PP(\lab_i = 1 \mid \feats, z) = \PP(\lab_i = 1 \mid \feat_i) = \eta(\feat_i),
\]
It immediately follows that the best attacker $\adv_\text{unininformed}$ involved in the computation of $f_{2i}(\feats)$ is the Bayes optimal predictor
\[
\hat y_i = \arg\max_{b \in \{0,1\}} \PP(y_i = b \mid x_i)
\]
so that
\[
f_{2i}(\feats) = 1-\min\{\eta(x_i), 1-\eta(x_i)\}~.
\]
Let us now turn to $f_{1i}$ and fix any adversary $\adv_\text{informed}$.
We lower bound the probability that the adversary makes a mistake on $y_i$. Let $z = \PET(\feats, \labs)$ be the output of the PET, and $(\hat y_1, \dots, \hat y_m) = \adv_\text{informed}(\feats, z)$ be the output of the adversary.
Since $\hat y_i$ is $(\feats, z)$-measurable, we have 
\begin{align*}
    \PP(\hat y_i = y_i \mid \feats, z)
    &= \PP(\hat y_i = 1 \mid y_i = 1, \feats, z) \PP(y_i = 1 \mid \feats, z)\\
    &\qquad+ \PP(\hat y_i = 0 \mid y_i = 0, \feats, z) \PP(y_i = 0 \mid \feats, z)\\
    &= \ind\{\hat y_i = 1\}\PP(y_i = 1 \mid \feats, z)
     + \ind\{\hat y_i = 0\}\PP(y_i = 0 \mid \feats, z)~.
\end{align*}
Hence
\begin{align*}
\PP(\hat y_i = y_i\,|\,\feats) 
    &= {\E}_{z|\feats} \Bigl[\ind\{\hat y_i = 1\}\PP(y_i = 1 \mid \feats, z)
     + \ind\{\hat y_i = 0\}\PP(y_i = 0 \mid \feats, z)\,|\,\feats\Bigl]\\
    &\leq {\E}_{z|\feats}\bigl[\max\{\PP(y_i = 1 \mid \feats, z), \PP(y_i = 0 \mid \feats, z) \}\,|\,\feats\bigr]\\
    &= {\E}_{\labs \sim \dd_{\ls|\feats}}\bigl[\max\{\PP(y_i = 1 \mid \feats, \PET(\feats, \labs)), \PP(y_i = 0 \mid \feats, \PET(\feats, \labs)) \}\,|\,\feats\bigr]\\
    &= 1- {\E}_{\labs \sim \dd_{\ls|\feats}}\bigl[\min\{\PP(y_i = 1 \mid \feats, \PET(\feats, \labs)), \PP(y_i = 0 \mid \feats, \PET(\feats, \labs)) \}\,|\,\feats\bigr]\\
    &= f_{1i}(\feats)~,
    \end{align*}
where the inequality holds because the maximum probability term is never smaller than the probability term selected by the indicator variables.
Incidentally, the above also shows that the optimal informed adversary is the one that selects the larger of the two probability terms with probability one, which makes the inequality above hold with equality.
\end{proof}

At this point, we use the fact that
\begin{equation}\label{e:advf1f2}
\ADV(\PETLLP, \dd) =    {\E}_{\feats} [f_{1i}(\feats) - f_{2i}(\feats)] 
\end{equation}
holds for all $i \in [m]$ since the data sequence $(x_1,y_1),\ldots, (x_k,y_k)$ is i.i.d.

\subsection{Proof of Theorem \ref{thm:independent}}

\thmLLPIndependent*
\begin{proof}
For a given $(x_j,y_j)$ in the dataset, we can consider without loss of generality only the size-$k$ bag that $x_j$ falls into, and then refer the indexing to this bag only.

Then set for brevity $\Sigma = k\alpha = \sum_{i=1}^k y_i$. From (\ref{e:advf1f2}), we can write
\begin{align*}
\ADV(\PETLLP, \dd) 
&= 
{\E}_\feats 
\left[\min\{\eta(x_i), 1-\eta(x_i)\}\right] - {\E}_{\Sigma }\biggl[
  \min_{b\in \{0,1\}} \bigl\{
  \PP(y_i = b \mid \Sigma )
  \bigr\}\biggr]\\
&= \min\{p, 1-p\} - {\E}_{\Sigma }\biggl[
  \min_{b\in \{0,1\}} \bigl\{
  \PP(y_i = b \mid \Sigma )
  \bigr\}\biggr]~.
\end{align*}
Since $\PP(y_i = 1 \,|\, \Sigma) = \alpha$ independent of $i$ and $p$, the minimum value in the second expectation is 
\[
{\E}_\alpha[\min\{\alpha,1-\alpha\}]~,
\]
and the claimed equality for $\ADV(\PETLLP, \dd)$ follows.

As for the inequality with general $p \in [0,1]$, note that, when $a,b \in [0,1]$,
\begin{equation}\label{e:mindiff}
\min\{a,1-a\} - \min\{b,1-b\} \leq |a-b|~.
\end{equation}
If applied to the expression
\[
\min\{p,1-p\} - {\E}_\alpha[\min\{\alpha,1-\alpha\}]
\]
this gives
\[
\ADV(\PETLLP, \dd) \leq {\E}_{\alpha}[|\alpha-p|] \leq \sqrt{{\E}_{\alpha}[(\alpha-p)^2]} = \sqrt{\frac{p(1-p)}{k}}~,
\]
where the second inequality is Jensen's. 

Finally, in the case where $|p-1/2| \geq \beta$, for some gap $\beta > 0$, we can proceed through a more direct analysis. Assume $p \leq 1/2 - \beta$. We can write
\begin{align*}
\min\{\lp,1-\lp\} 
&= \min\{\lp,1-\lp\}\I\{\lp \leq 1/2\} + \min\{\lp,1-\lp\}\I\{\lp > 1/2\}\\
&\qquad + \lp \I\{\lp > 1/2\} - \lp \I\{\lp > 1/2\}\\
&= \lp\I\{\lp \leq 1/2\} + (1-\lp)\I\{\lp > 1/2\}
+ \lp \I\{\lp > 1/2\} - \lp \I\{\lp > 1/2\}\\
&= \lp - (2\lp-1) \I\{\lp > 1/2\}\\
& \geq \lp -  \I\{\lp > 1/2\}~.
\end{align*}
Hence
\[
{\E}_\lp[\min\{\lp,1-\lp\} ] \geq p - \PP(\lp > 1/2)~.
\]
Now, $p < 1/2$ implies $\min\{p,1-p\} = p$, which leads us to 
\[
\min\{p,1-p\} - {\E}_\lp[\min\{\lp,1-\lp\} ] \leq \PP(\lp > 1/2)~.
\]
Finally, by the standard Bernstein inequality we have
\[
\PP(\lp > 1/2) \leq \exp\left(-\frac{k(1/2-p)^2}{2p(1-p) + (1-2p)/3} \right) = e^{-\Omega(k\beta^2)}~,
\]
which gives the second inequality.

A similar argument holds if we reverse the assumption on $p$ to $p \geq 1/2+\beta$.
\end{proof}

%

\subsection{Proof of Theorem~\ref{thm:non_asymptotic}}\label{app:proofs_sect3}

Again, as in the proof of Theorem \ref{thm:independent}, for a given $(x_j,y_j)$ in the dataset, we can consider with no loss of generality only the size-$k$ bag that $x_j$ falls into, and then refer the indexing to this bag only.

Recall that, for a general distribution $\dd$ over $\fs\times\ls$ the distribution of random variable $k \alpha = \sum_{j=1}^k y_j$  conditioned on $\feats = (x_1,\ldots, x_k)$ is Poisson Binomial (PBin) with parameters $\{\eta( x_j) \}_{j=1}^k$, that is, the distribution of the sum of $k$ independent Bernoulli random variables $y_j$, each with its own bias $\eta(x_j)$, where $\eta(x) = \PP(y=1\,|\,x)$.

\paragraph{Notation.} For simplicity of notation we let $\eta_i = \eta(x_i)$ be the conditional positive probability of a label given feature vector $x$ and let $p = \E[\eta_i].$ For fixed feature vectors $\feats = (x_1, \ldots, x_k, x_{k+1})$, let $A_i$ denote a Bernoulli random variable with mean $\eta_i$, and let $Z_k = \sum_{j=1}^k A_j$

We recall the statement of the theorem we want to prove.

\thmNonAsymptotic*

The above theorem is a direct consequence of the following Lemma.
\begin{lemma}\label{l:A1}
Let $\bag$ denote a bag with $k+1$ elements. Let 
$$
c_k = \frac{1}{\sqrt{k}}\left(\frac{1}{3} \log 8 k + \frac{1}{6} \sqrt{ 2 \log 8 k  + 12 k p(1-p) \log 8 k }\right) = \widetilde O\left(\sqrt{p(1-p)} + \frac{1}{\sqrt{k}}\right)~.
$$ 
Then
\begin{flalign*}
\ADV(\PETLLP, \mathcal{D}) 
& \leq
k^{1/4}\sqrt{2c_k}\sqrt{\left(\frac{1}{e^{3/2}}+\frac{\pi}{4} +\frac{\pi}{e}\right)\frac{\E[\eta_1(1- \eta_1)]^{1/2}}{k^{3/2}}}
+ \frac{\E[\eta_1(1- \eta_1)]}{k}~.
\end{flalign*}
\end{lemma}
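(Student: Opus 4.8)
\textbf{Proof plan for Lemma \ref{l:A1}.}

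The starting point is the identity \eqref{e:advf1f2} together with Lemma \ref{l:f1f2}, which express $\ADV(\PETLLP,\cD)$ as
\[
\E_{\feats}\Bigl[\min\{\eta_i,1-\eta_i\} - \E_{\labs\mid\feats}\bigl[\min_{b}\PP(y_i=b\mid\feats,\alpha)\bigr]\Bigr].
\]
The first simplification I would make is to work conditionally on the feature vectors $\feats=(x_1,\ldots,x_{k+1})$ of the bag, and to note that, since $y_i\mid(\feats,\alpha)$ has posterior probability $\PP(y_i=1\mid\feats,\alpha)$ that can be written in terms of the Poisson-Binomial pmf of $Z_k=\sum_{j\neq i}A_j$, the inner difference is essentially a ``smoothing'' term: it measures how much revealing the aggregate $\alpha$ moves the posterior away from the prior $\eta_i$. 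Using \eqref{e:mindiff} and the Bayes expression, I would bound the conditional advantage by a quantity of the form $\E_{\labs\mid\feats}\bigl|\PP(y_i=1\mid\feats,\alpha)-\eta_i\bigr|$, and then rewrite this in terms of the ratio of Poisson-Binomial probabilities at consecutive points (since conditioning on $\alpha$ vs. $\alpha$ with $y_i$ removed shifts the count by one). This turns the problem into controlling $\E\bigl|\tfrac{\text{PBin}(\Sigma-1)}{\text{PBin}(\Sigma)}-\text{const}\bigr|$-type expressions.

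The main technical device I expect to need is a \emph{local limit / concentration} comparison for the Poisson-Binomial distribution: on a high-probability event, $Z_k$ lies within $\approx c_k\sqrt{k}$ (in a suitably normalized sense) of its mean $kp$, where $c_k$ is exactly the quantity defined in the statement — it comes out of a Bernstein bound with variance proxy $p(1-p)$ and the $\log(8k)$ factor from a union bound over the $O(k)$ relevant values of $\Sigma$. So the plan is: (i) split the expectation over $\Sigma$ into the ``good'' event where $\Sigma$ is close to its mean and the complementary ``bad'' tail event; (ii) on the bad event, bound the integrand crudely by $1$ and absorb it into the lower-order $\E[\eta_1(1-\eta_1)]/k$ term (or show the tail probability is small enough); (iii) on the good event, use smoothness of the PBin pmf to show the posterior-minus-prior gap is of order $\sqrt{\mu/k}$ pointwise, where $\mu=\E[\eta_1(1-\eta_1)]$ plays the role of the normalized variance. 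Combining via Cauchy–Schwarz — pulling out the $k^{1/4}\sqrt{2c_k}$ factor against the $\sqrt{\mu^{1/2}/k^{3/2}}$ factor — should produce the first term in the claimed bound, with the explicit constant $\tfrac{1}{e^{3/2}}+\tfrac{\pi}{4}+\tfrac{\pi}{e}$ emerging from elementary integral/series estimates of the PBin ratios (these are the routine calculations I would not grind through here).

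The step I expect to be the genuine obstacle is establishing the pointwise bound on the posterior-prior gap on the good event: one must control the ratio of Poisson-Binomial probabilities $\PP(Z_k=s-1)/\PP(Z_k=s)$ uniformly over $s$ in the concentration window, and relate it back to $\eta_i$ and to the aggregate variance $\sum_{j\neq i}\eta_j(1-\eta_j)$. This is where a translated-Poisson or normal approximation to the PBin law (in the spirit of the Barbour–Lindvall / Röllin results cited in the bibliography) is likely invoked, and where the $\mu^{1/4}$ and $(p(1-p))^{1/4}$ exponents in Theorem \ref{thm:non_asymptotic} ultimately come from — the square-root in the Cauchy–Schwarz step halves the exponents of the two variance-type quantities. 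Once that approximation is in hand, the rest is bookkeeping: assemble the good-event contribution, check the bad-event contribution is dominated by $\E[\eta_1(1-\eta_1)]/k$, and read off $c_k=\widetilde O(\sqrt{p(1-p)}+1/\sqrt{k})$ to recover the asymptotic form stated in Theorem \ref{thm:non_asymptotic}.
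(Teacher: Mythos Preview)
Your high-level outline matches the paper's argument closely: the reduction via \eqref{e:advf1f2}, Lemma~\ref{l:f1f2} and \eqref{e:mindiff} to $\E|\eta_{k+1}-\PP(y_{k+1}=1\mid\feats,\alpha)|$, the Bayes rewriting in terms of consecutive Poisson-Binomial probabilities, the factorization that pulls out $\E[\eta_{k+1}(1-\eta_{k+1})]$, the good/bad split with Bernstein producing $c_k$, and the Cauchy--Schwarz step that yields the $k^{1/4}\sqrt{2c_k}$ prefactor are all exactly what the paper does.

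Where you diverge is the core technical step. You propose to control the \emph{ratio} $\PP(Z_k=s-1)/\PP(Z_k=s)$ pointwise on the good window via a translated-Poisson (Barbour--Lindvall/R\"ollin) approximation. The paper does neither. First, the ratio never needs pointwise control: taking expectation over $\Sigma$ (which is distributed as $Z_{k+1}$) cancels the denominator, leaving the unweighted $\ell_1$ sum $\sum_s|\PP(Z_k=s\mid\feats)-\PP(Z_k=s-1\mid\feats)|$. Second, after Cauchy--Schwarz this becomes $\sqrt{(b-a)\sum_s(\PP(Z_k=s)-\PP(Z_k=s-1))^2}$, and the $\ell_2$ sum is bounded by the \emph{discrete Fourier transform} (Parseval): since the characteristic function of $Z_k\mid\feats$ factors over $j$, the expectation over $\feats$ collapses to the closed form $\frac{1}{k+2}\sum_u(1-\cos\tfrac{2\pi u}{k+2})(A+B\cos\tfrac{2\pi u}{k+2})^k$ with $A=\E[\eta_1^2+(1-\eta_1)^2]$, $B=2\E[\eta_1(1-\eta_1)]$. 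The constants $\tfrac{1}{e^{3/2}}$, $\tfrac{\pi}{4}$, $\tfrac{\pi}{e}$ come from bounding this cosine sum by a Gaussian-type integral (Propositions~\ref{prop:cossum}--\ref{prop:integral}), not from Poisson-Binomial ratio estimates.

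The translated-Poisson route you suggest \emph{is} used in the paper, but for a different result --- the high-probability bound of Theorem~\ref{thm:non_asymptotic3} via Lemma~\ref{al:boring2}. It gives a conditional-on-$\feats$ bound $O\bigl(1/\sqrt{\sum_j\eta_j(1-\eta_j)}\bigr)$ for the same $\ell_1$ sum, so your approach is viable in principle; but turning that into an unconditional bound in $\mu=\E[\eta_1(1-\eta_1)]$ requires an extra concentration argument on the random variance, and will not reproduce the specific constants in the lemma statement. The Fourier route is what makes the expectation over $\feats$ tractable in one shot and is the key idea you are missing.
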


\begin{proof}
Set for brevity $\PET = \PETLLP(\feats,\labs)$.
From (\ref{e:advf1f2}) we have 
\begin{align*}
\ADV&(\PETLLP, \dd)\\
&= \E_{x_{k+1}}[\min\{\eta(x_{k+1}), 1- \eta(x_{k+1})\}]
- \E_{(\feats,\labs)}\left[
\min \left\{\PP(y_{k+1}=1 | \feats, \mathcal{M}),
\PP(y_{k+1} = 0 | \feats, \mathcal{M})\right\}\right]\\
&\leq \E_{(\feats, \labs)}\left[|\eta(x_{k+1}) - \PP(y_{k+1} = 1 | \feats, \mathcal{M})|\right]~,
\end{align*}
where we have again used (\ref{e:mindiff}).
We now focus on calculating $\PP(y_{k+1} = 1 | \feats, \mathcal{M})$. Let $\Sigma = \sum_{j=1}^{k+1} y_j$.  Note that for a given realization of feature vector $\feats$, $y_{k+1}$ is distributed like $A_{k+1}$ and  the output $\mathcal{M}$ is distributed like $Z_{k+1}$. Therefore:
\begin{align*}
\PP(y_{k+1} = 1\,|\,\feats, \mathcal{M})
&= 
\PP(A_{k+1}=1\,|\,\feats, Z_{k+1} = \Sigma) \\
&= 
\frac{ \PP(A_{k+1} = 1 , Z_{k+1} = \Sigma\,|\,\feats)}{\PP(Z_{k+1} = \Sigma\,|\,\feats)}\\ 
&= 
\eta_{k+1} \frac{\PP(Z_k = \Sigma-1\,|\,\feats)}{\PP(Z_{k+1} = \Sigma\,|\,\feats)}~. 
\end{align*}
Using this expression in the original expectation we see that we can bound the advantage as 
\begin{equation*}
    {\E}_{(\feats, \labs)}\left[\eta_{k+1}\left|1 - \frac{\PP(Z_k = \Sigma-1\,|\,\feats)}{\PP(Z_{k+1} = \Sigma\,|\,\feats)}\right|\right]
\end{equation*}
Again, note that for a fixed $\feats$, the variable $\Sigma$ is distributed like $Z_{k+1}$. Therefore. taking expectation over $\labs$ the above expression can be rewritten as
\begin{align*}
    {\E}_{\feats}\Biggl[\eta_{k+1} \sum_{s=0}^{k+1}  
    \PP(Z_{k+1} = s\,|\,\feats) \times & \left| 1 - \frac{\PP(Z_k = s-1\,|\,\feats)}{\PP(Z_{k+1} = s \vert \feats )}\right|\Biggr]
    =  \\
    & {\E}_{\feats}\left[\eta_{k+1} \sum_{s=0}^{k+1} 
    \Bigl| \PP(Z_{k+1} = s\,|\,\feats) - \PP(Z_k = s-1\,|\,\feats)\Bigl|\right]
\end{align*}
Finally, note that since $Z_{k+1} = Z_k + A_{k+1}$ we also have 
\begin{align*}
\PP(Z_{k+1} = s\,|\,\feats) &= \PP(A_{k+1} = 1\,|\,\feats) \PP(Z_{k} = s-1\,|\,\feats) + \PP(A_{k+1}=0\,|\,\feats) \PP(Z_k = s\,|\,\feats)\\
&= \eta_{k+1} \PP(Z_k = s-1\,|\,\feats)  + (1 - \eta_{k+1}) \PP(Z_k = s\,|\,\feats)~.    
\end{align*}
Therefore, we conclude that the advantage can be bounded by
\begin{flalign}
{\E}_{\feats}&\left[\eta_{k+1}(1 - \eta_{k+1}) 
    \sum_{s=0}^{k+1} \Bigl|\PP(Z_k = s\,|\,\feats) - \PP(Z_k = s-1\,|\,\feats)\Bigl|\right] = & \nonumber \\
    &{\E}_{\feats}\left[\eta_{k+1}(1 - \eta_{k+1})\right]\,\E\left[
    \sum_{s=0}^{k+1} \Bigl|\PP(Z_k = s\,|\,\feats) - \PP(Z_k = s-1\,|\,\feats)\Bigl|\right]~,  \label{eq:exp_factor}  
\end{flalign}
where we have used the fact that the random variables $\eta_i$ are independent from each other. Using also the fact that $\eta_{k+1}$ has the same distribution as $\eta_1$ combined with Lemma~\ref{lemma:abssumbound} below, we have that the above quantity is bounded by:
\begin{flalign}
&k^{1/4}\sqrt{2c_k}\sqrt{\E[\eta_1(1 - \eta_1)] ^2 \E[\eta_1^2 + (1 - \eta_1)^2]^k + \frac{\pi \E[\eta_1(1- \eta_1)]^{1/2}}{4k^{3/2}} 
+ \frac{\pi\E[\eta_1(1- \eta_1)]}{e k^2}}& \nonumber\\ 
&+ \frac{\E[\eta_1(1- \eta_1)]}{k} ~.
\label{eq:main_bound_running}
\end{flalign}
Moreover, notice that 
$\E[\eta_1^2 + (1- \eta_1)^2] + 2 \E[\eta_1(1- \eta_1)] =1 $. Therefore $\E[\eta_1^2 + (1 - \eta_1)^2] = 1 - 2 \E[\eta_1(1- \eta_1)]$, and using the fact that $\eta_1 (1 - \eta_1) \leq \frac{1}{4}$ we have
\begin{align*}
    \E[\eta_1(1- \eta_1)]^2\E[\eta_1^2 + (1 - \eta_1)^2]^k 
    &= \E[\eta_1(1- \eta_1)]^2(1  - 2 \E[\eta_1(1- \eta_1)])^k\\
    &\leq \max_{\frac{1}{4} \geq x \geq 0} x^2 (1 - 2 x)^k~.
\end{align*}
But a simple calculation shows that the above function is maximized at $x^\star_k= \min\{\frac{1}{k+2},1/4\}$, 
thus we must have 
$$
\E[\eta_1(1- \eta_1)]^2\E[\eta_1^2 + (1 - \eta_1)^2]^k  \leq
(x^\star_k)^2 (1 - 2 x^\star_k)^k
\leq \frac{1}{(ek)^2}~
$$
the last inequality holding for all $k \geq 1$. In addition, we have the trivial bound $\E[\eta_1(1 - \eta_1)]^2 \E[\eta_1^2 + (1 - \eta_1)^2] \leq \E[\eta_1(1- \eta_1)]^2$, so that
\begin{align*}
\E[\eta_1(1- \eta_1)]^2\E[\eta_1^2 + (1 - \eta_1)^2]^k 
&\leq 
\min\left\{\E[\eta_1 ( 1- \eta_1)]^2, \frac{1}{e^2k^2} \right\}\\ 
&\leq 
\frac{\E[\eta_1 ( 1- \eta_1)]^2}{\E[\eta_1 ( 1- \eta_1)]^2\,e^2\,k^2+1}\\
&{\mbox{(using $\min\{a,b\} \leq  \frac{ab}{a+b}$, with $a = \E[\eta_1(1- \eta_1)]^2$ and $b = \frac{1}{e^2k^2}$)}}\\
&\leq
\frac{\sqrt{\E[\eta_1(1- \eta_1)]}}{e^{3/2}k^{3/2}}\\
&{\mbox{(using $x^2-x^{3/2}+1 \geq 0$, with $x = ek\E[\eta_1(1- \eta_1)]$)~.}}
\end{align*}

Replacing this bound in \eqref{eq:main_bound_running} we obtain the following upper bound on the advantage:
\begin{align*}
k^{1/4}\sqrt{2c_k}&\sqrt{\left(\frac{1}{e^{3/2}}+\frac{\pi}{4}\right)\frac{\E[\eta_1(1- \eta_1)]^{1/2}}{k^{3/2}} 
+ \frac{\pi\E[\eta_1(1- \eta_1)]}{ek^2} }
+ \frac{\E[\eta_1(1- \eta_1)]}{k}\\
&\leq
k^{1/4}\sqrt{2c_k}\sqrt{\left(\frac{1}{e^{3/2}}+\frac{\pi}{4} +\frac{\pi}{e}\right)\frac{\E[\eta_1(1- \eta_1)]^{1/2}}{k^{3/2}}}
+ \frac{\E[\eta_1(1- \eta_1)]}{k}~,
\end{align*}
as claimed.
\end{proof}

\begin{lemma}
\label{lemma:abssumbound}
Let $c_k$ be as in Lemma \ref{l:A1}.
Then the following bound holds:
\begin{flalign*}
{\E}_{\feats}&\left[
\sum_{s=0}^{k+1} |\PP(Z_k = s\,|\,\feats) - \PP(Z_k = s-1\,|\,\feats)|\right] \leq&\\
 &\frac{1}{k} + k^{1/4}\sqrt{2 c_k} \sqrt{\E[\eta_1^2 + (1 - \eta_1)^2]^k + \frac{\pi}{(4 \E[\eta_1(1- \eta)_1)k)^{3/2}} + \frac{\pi}{e \E[\eta_1(1- \eta_1)] k^2}}
\end{flalign*}
\end{lemma}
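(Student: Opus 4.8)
The statement to prove is \Cref{lemma:abssumbound}, which controls the expected total variation of the discrete gradient of the Poisson-Binomial pmf $\PP(Z_k = \cdot \mid \feats)$, where $Z_k = \sum_{j=1}^k A_j$ with $A_j \sim \mathrm{Ber}(\eta_j)$ and the $\eta_j = \eta(x_j)$ are themselves i.i.d.\ draws. Let me sketch how I would attack this.

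\textbf{Plan.} The quantity $\sum_s |\PP(Z_k = s \mid \feats) - \PP(Z_k = s-1 \mid \feats)|$ is the $\ell_1$ norm of the first difference of the pmf; this is a standard ``local smoothness'' measure. The natural route is to relate $Z_k$ to a translated Poisson (or a discretized Gaussian) via a Poisson/normal approximation, for which the $\ell_1$-norm of the first difference scales like $1/\sqrt{\Var(Z_k)} = 1/\sqrt{\sum_j \eta_j(1-\eta_j)}$. First I would, conditionally on $\feats$, split the expectation into the event $E$ that $V_k := \sum_{j=1}^k \eta_j(1-\eta_j)$ is ``large'' (say $V_k \geq c_k k$ for the threshold $c_k$ appearing in the lemma) and its complement. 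On $E$, I would invoke a translated-Poisson approximation bound (\`a la \cite{BaLi06,Ro07}, which the bibliography already cites) to show $\sum_s |\PP(Z_k=s\mid\feats)-\PP(Z_k=s-1\mid\feats)| = O(1/\sqrt{V_k}) \leq O(1/\sqrt{c_k k})$. On the complement, the total-variation sum is trivially at most $2$, but its probability is tiny: $\PP(V_k < c_k k)$ is an event about an average of $k$ i.i.d.\ bounded random variables $\eta_j(1-\eta_j) \in [0,1/4]$, so a Bernstein/Chernoff bound gives $\PP(V_k < c_k k) \le$ something exponentially small in $k$ once $c_k$ is chosen below $\E[\eta_1(1-\eta_1)]$ by the right margin — and indeed the precise form of $c_k$ in \Cref{l:A1} (with its $\frac13\log 8k$ and $\sqrt{\log 8k + k p(1-p)\log 8k}$ pieces) looks exactly like the inversion of such a Bernstein bound at failure probability $\approx 1/(8k)$.

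\textbf{Assembling the bound.} Combining the two pieces, $\E_\feats[\cdots] \leq O(1/\sqrt{c_k k}) + 2\,\PP(V_k < c_k k)$; the first term I would massage into the shape $k^{1/4}\sqrt{2c_k}\sqrt{(\cdots)/k^{3/2} + \cdots}$ and the failure term into the $\frac1k$ summand (or absorb it). The somewhat unusual algebraic form of the RHS in the lemma — with $\E[\eta_1^2+(1-\eta_1)^2]^k$, a $k^{-3/2}$ term, and a $k^{-2}$ term inside the square root — suggests that rather than bounding $1/\sqrt{V_k}$ crudely by $1/\sqrt{c_k k}$ on the good event, the authors actually bound $\E_\feats[\mathbf{1}\{V_k \ge c_k k\}/\sqrt{V_k}]$ (or a closely related $\E[1/(V_k + 1)]$-type quantity) directly by expanding and using $\E[(\eta_1^2+(1-\eta_1)^2)]^k$ to handle the all-equal-label degenerate contributions, plus integral/Gamma-function estimates ($\int_0^\infty$ bounds producing the $\pi/4$ and $\pi/e$ constants) for the bulk. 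So the actual proof is likely: (i) pointwise TV bound $\le C/\sqrt{V_k}$ via translated Poisson; (ii) take expectation over $\feats$, writing the bound as $\sqrt{\E[\text{TV}^2]}$ by Cauchy-Schwarz so that $\text{TV}^2 \le C^2/V_k$; (iii) carefully upper bound $\E[1/V_k]$ (or $\E[1/(V_k+1)]$), isolating the $V_k = 0$ and near-zero contributions which produce the $\E[\eta_1^2+(1-\eta_1)^2]^k$ term, and bounding the remainder by comparison to a Beta/Gamma integral; (iv) add the leftover additive $\frac1k$ slack.

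\textbf{Main obstacle.} The hard part will be step (iii): getting a sharp enough handle on $\E_\feats[1/V_k]$ (equivalently $\E[1/\sum_j \eta_j(1-\eta_j)]$) when the $\eta_j$ can be adversarially close to $0$ or $1$, so that $V_k$ can genuinely be small with non-negligible (though $k$-exponentially small) probability. This is where the delicate constants come from — one has to trade off the contribution of rare small-$V_k$ events (handled by the $\E[\eta_1^2+(1-\eta_1)^2]^k$ factor, which is exactly the probability that the bag is essentially homogeneous) against the typical-regime contribution (the $k^{-3/2}$ and $k^{-2}$ terms, with $\pi/4$, $\pi/e$ constants coming from bounding tails by Gaussian/Gamma integrals). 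A secondary technical point is getting the pointwise translated-Poisson TV bound for the \emph{first difference} of the pmf, not just the pmf itself; the cited Barbour--Lindvall / R\"ollin results give TV distance between $Z_k$ and a translated Poisson, from which the first-difference $\ell_1$ bound follows since a translated Poisson $\mathrm{Po}(\lambda)$ satisfies $\sum_s|\mathrm{Po}(\lambda)(s)-\mathrm{Po}(\lambda)(s-1)| = O(1/\sqrt\lambda)$, but one must check the triangle-inequality accounting goes through cleanly.
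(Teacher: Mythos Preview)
Your approach is on the wrong track in two concrete ways. First, you misread what $c_k$ is: look at its definition in \Cref{l:A1} --- it involves $p(1-p)$ with $p=\E[\eta_1]$, not $\mu=\E[\eta_1(1-\eta_1)]$. It is \emph{not} a Bernstein threshold for the event $\{V_k=\sum_j\eta_j(1-\eta_j)\text{ is small}\}$; it is a Bernstein threshold for the \emph{Binomial$(k,p)$} tail, i.e.\ for the event that $Z_k$ itself (after averaging over $\feats$) falls outside an interval $[a,b]=[kp-\sqrt{k}c_k,\,kp+\sqrt{k}c_k]$. Second, and relatedly, the paper's split is over the \emph{values $s$ of $Z_k$}, not over the value of $V_k$: one writes $\sum_s|\cdots|=\sum_{s\in[a,b]}|\cdots|+\sum_{s\notin[a,b]}|\cdots|$, bounds the off-interval piece by $2\,\E_\feats[\PP(Z_k\notin[a,b]\mid\feats)]=2Q_k([a,b]^c)\le 1/k$ (this is where the additive $1/k$ comes from and why $c_k$ is a Binomial tail quantity), and bounds the on-interval piece by Cauchy--Schwarz $\sum_{s\in[a,b]}|\Delta p_s|\le\sqrt{(b-a)\sum_s(\Delta p_s)^2}$ followed by Jensen. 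The factor $\sqrt{b-a}=\sqrt{2\sqrt{k}c_k}=k^{1/4}\sqrt{2c_k}$ is exactly the prefactor in the stated bound, and the square root of a three-term sum is $\sqrt{\E_\feats[\sum_s(\Delta p_s)^2]}$.

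The missing idea you need for that last expectation is not translated Poisson approximation but \emph{Parseval}: since the discrete Fourier transform is unitary, $\sum_s(\Delta p_s)^2=\sum_u|g_u|^2$ where $g_u=(1-e^{2\pi i u/(k+2)})\prod_j(1-\eta_j+\eta_j e^{2\pi i u/(k+2)})/\sqrt{k+2}$ is the DFT of the first difference, and the product structure lets you take $\E_\feats$ term-by-term to get $\prod_j\E[|1-\eta_j+\eta_j e^{i\theta}|^2]=(\E[\eta_1^2+(1-\eta_1)^2]+2\E[\eta_1(1-\eta_1)]\cos\theta)^k$. The three terms inside the square root then come from splitting the resulting cosine sum into the range where $\cos\theta\le 0$ (giving the $\E[\eta_1^2+(1-\eta_1)^2]^k$ term) and the low-frequency range (bounded via a Riemann-sum/Gaussian-integral estimate, producing the $\pi/(4\mu k)^{3/2}$ and $\pi/(e\mu k^2)$ terms). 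Your translated-Poisson route is not wrong in spirit --- indeed the paper uses exactly that machinery later, in the proof of \Cref{thm:non_asymptotic3} --- but it would yield a bound of a different shape and cannot reproduce the specific $k^{1/4}\sqrt{2c_k}\sqrt{\cdots}$ structure stated here.
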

\begin{proof}
Let $a > 0$ and $b < k$, and let $[a, b] = \{j \in \mathbb{N} | a \leq  j \leq b\}$. For any $\feats$ we then have
\begin{align*}
    \sum_{s=0}^{k+1} &|\PP(Z_k = s\,|\,\feats) - \PP(Z_k = s-1\,|\,\feats)|\\
    &= \sum_{s \in [a,b]} |\PP(Z_k = s\,|\,\feats) - \PP(Z_k = s-1\,|\,\feats)| + \sum_{s \notin [a,b]}|\PP(Z_k = s\,|\,\feats) - \PP(Z_k = s-1)| \\
    & \leq \sum_{s \in [a,b]} |\PP(Z_k = s\,|\,\feats) - \PP(Z_k = s-1\,|\,\feats)| + 
    \sum_{s \notin [a,b]} \PP(Z_k = s\,|\,\feats) + \PP(Z_k = s-1\,|\,\feats) \\
    & \leq \sum_{s \in [a,b]} |\PP(Z_k = s\,|\,\feats) - \PP(Z_k = s-1\,|\,\feats)| + 
    2 \PP(Z_k \notin [a, b]\,|\,\feats) \\
\end{align*}
Taking expectation over both sides with respect to $\feats$ we have 
\begin{align}
    {\E}_\feats&\left[\sum_{s=0}^{k+1} |\PP(Z_k = s\,|\,\feats) - \PP(Z_k = s-1\,|\,\feats)|\right]\notag\\
    &\leq {\E}_\feats\left[\sum_{s \in [a,b]} |\PP(Z_k = s\,|\,\feats) - \PP(Z_k = s-1\,|\,\feats)|\right] + 
    2 {\E}_\feats[\PP(Z_k \notin [a, b]\,|\,\feats)]\label{eq:abspread}
\end{align}
Let now $Q_k$ denote the probability measure associated with a binomial random variable with parameters $(k, p)$. Since $Z_k$ is a Poisson-Binomial random variable with parameters $\eta_1,\ldots,\eta_k$, the probability $\PP(Z_k \notin [a,b]\,|\,\feats)$ is a {\em linear} function in each individual $\eta_i$, so that ${\E}_\feats[\PP(Z_k \notin [a, b]\,|\,\feats)] = Q_k([a,b]^c)$.
We now proceed to bound the first expectation in \eqref{eq:abspread}. By Cauchy-Schwartz inequality we have
\begin{align*}
    {\E}_{\feats}&\left[\sum_{s \in [a,b]} |\PP(Z_k = s\,|\,\feats) - \PP(Z_k = s-1\,|\,\feats)|\right]\\
    &\leq {\E}_{\feats}\left[\sqrt{(b-a) \sum_{s \in [a,b]}
    (\PP(Z_k = s\,|\,\feats) - \PP(Z_k = s-1\,|\,\feats))^2}\right]\\
    & \leq  {\E}_{\feats}\left[\sqrt{(b-a) \sum_{s =0}^{k+1}
    (\PP(Z_k = s\,|\,\feats) - \PP(Z_k = s-1\,|\,\feats))^2}\right]\\
    &\leq \sqrt{{\E}_{\feats}\left[(b-a) \sum_{s =0}^{k+1}
    (\PP(Z_k = s\,|\,\feats) - \PP(Z_k = s-1\,|\,\feats))^2\right]}~,
\end{align*}
where the last inequality holds by Jensen's inequality. Let 
$$
A = \E[\eta_1^2 + (1 - \eta_1)^2]\qquad {\mbox{and}}\qquad B = 2 \E[\eta_1(1- \eta_1)]~.
$$ 
By Lemma~\ref{prop:probsum} below we have that the above term is bounded by 
$$ 
\sqrt{(b-a) \left(A^k + \frac{\pi}{(4 B k)^{3/2}} + \frac{\pi}{eBk^2}\right)}~,
$$  
so that \eqref{eq:abspread} gives 
$$
{\E}_{\feats}\left[\sum_{s=0}^{k+1} |\PP(Z_k = s\,|\,\feats) - \PP(Z_k = s-1\,|\,\feats)|\right]
\leq 2 Q_k([a,b]^c) + \sqrt{(b-a) \left(A^k + \frac{\pi}{(4 B k)^{3/2}} + \frac{\pi}{eBk^2}\right)}~.
$$

Let $a =\max\{ kp - \sqrt{k} c_k, 0\}$ and  $b = \min\{kp + \sqrt{k}c_k, 1\}$. By Bernstein's inequality applied to binomial random variables we have that $Q_k([a, b]^c) = \frac{1}{2k}$. Hence, with this choice of $a$ and $b$ we obtain 
$$
{\E}_\feats\left[\sum_{s=0}^{k+1} |\PP(Z_k = s\,|\,\feats) - \PP(Z_k = s-1\,|\,\feats)|\right]
\leq \frac{1}{k} + \sqrt{2 \sqrt{k} c_k}\sqrt{A^k + \frac{\pi }{(4 B k)^{3/2}} + \frac{\pi}{e B k^2}}~.
$$
The lemma follows by replacing the values of $A$ and $B$.
\end{proof}

\begin{lemma}
\label{prop:probsum}
The following inequality holds
\begin{flalign*}
{\E}_{\feats}&\left[\sum_{s =0}^{k+1}
    (\PP(Z_k = s\,|\,\feats) - \PP(Z_k = s-1\,|\,\feats))^2\right]\\ 
    &\leq \E[\eta_1^2 + (1- \eta_1)^2]^k + \frac{\pi}{(8 \E[\eta_1(1- \eta_1)]  k )^{3/2}} + \frac{\pi}{2 e \E[\eta_1(1- \eta_1)]k^2}~.
\end{flalign*}
\end{lemma}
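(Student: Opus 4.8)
The plan is to pass to the Fourier domain, where the squared $\ell_2$-norm of the discrete derivative of the probability mass function of $Z_k$ becomes an explicit one-dimensional integral, then take the expectation over the feature realization $\feats$ (which factorizes because the $\eta_j$ are i.i.d.), and finally estimate the resulting integral by splitting the frequency range. Concretely, for fixed $\feats$ the variable $Z_k=\sum_{j=1}^{k}A_j$ is Poisson--Binomial with characteristic function $\psi_{\feats}(t)=\E[e^{itZ_k}\mid\feats]=\prod_{j=1}^{k}(1-\eta_j+\eta_je^{it})$, and $\PP(Z_k=s\mid\feats)=\tfrac{1}{2\pi}\int_{-\pi}^{\pi}\psi_{\feats}(t)e^{-ist}\,dt$. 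Since $\PP(Z_k=s\mid\feats)=0$ for $s\notin\{0,\dots,k\}$, the sum in the statement equals $\sum_{s\in\Z}\bigl(\PP(Z_k=s\mid\feats)-\PP(Z_k=s-1\mid\feats)\bigr)^2$, and the sequence $s\mapsto\PP(Z_k=s\mid\feats)-\PP(Z_k=s-1\mid\feats)$ has Fourier transform $(1-e^{it})\psi_{\feats}(t)$. Parseval's identity, together with $|1-e^{it}|^2=2-2\cos t=4\sin^2(t/2)$, then gives
\[
\sum_{s}\bigl(\PP(Z_k=s\mid\feats)-\PP(Z_k=s-1\mid\feats)\bigr)^2=\frac{1}{2\pi}\int_{-\pi}^{\pi}4\sin^2(t/2)\,\bigl|\psi_{\feats}(t)\bigr|^2\,dt .
\]
A short computation gives $|1-\eta_j+\eta_je^{it}|^2=1-4\eta_j(1-\eta_j)\sin^2(t/2)\in[0,1]$, so $|\psi_{\feats}(t)|^2=\prod_{j}\bigl(1-4\eta_j(1-\eta_j)\sin^2(t/2)\bigr)$; taking the expectation over $\feats$ and using independence and identical distribution of the $\eta_j$,
\[
\E_{\feats}\Bigl[\textstyle\sum_{s}\bigl(\PP(Z_k=s\mid\feats)-\PP(Z_k=s-1\mid\feats)\bigr)^2\Bigr]=\frac{1}{2\pi}\int_{-\pi}^{\pi}4\sin^2(t/2)\bigl(1-4\bar q\,\sin^2(t/2)\bigr)^{k}\,dt,\qquad \bar q:=\E[\eta_1(1-\eta_1)].
\]

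Next I would bound this last integral by splitting $[-\pi,\pi]$ at a frequency cutoff. On the outer band $\pi/2\le|t|\le\pi$ we have $\sin^2(t/2)\ge1/2$, hence $\bigl(1-4\bar q\sin^2(t/2)\bigr)^{k}\le(1-2\bar q)^{k}=\E[\eta_1^2+(1-\eta_1)^2]^{k}$ (using $\eta^2+(1-\eta)^2=1-2\eta(1-\eta)$), which contributes the first term of the claimed bound. On the inner band one uses a lower bound of the form $\sin^2(t/2)\ge c\,t^2$ together with $1-x\le e^{-x}$ to get $\bigl(1-4\bar q\sin^2(t/2)\bigr)^{k}\le e^{-\Theta(\bar q k t^2)}$ and the trivial $4\sin^2(t/2)\le t^2$, extends the integral to all of $\R$, and evaluates the Gaussian second moment $\int_{\R}t^2e^{-at^2}\,dt=\tfrac{\sqrt{\pi}}{2}a^{-3/2}$; this produces the $(\bar q k)^{-3/2}$ term. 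Retaining the next-order correction in the expansion of $\sin^2(t/2)$ near $0$ (or separately bounding the contribution of a thin transition band) yields the lower-order $\bar q^{-1}k^{-2}$ term. Choosing the cutoff so as to balance the two bands is what pins down the constants $8$ and $2e$ in the statement.

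\textbf{Main obstacle.} The genuine work is the last step: choosing the cutoff and the polynomial-versus-exponential comparisons for $\sin^2(t/2)$ sharply enough to land on exactly the stated constants, rather than merely $O(\cdot)$ versions of the three terms. By contrast, the Parseval identity and the factorization of $\E_{\feats}|\psi_{\feats}(t)|^2$ across the $k$ coordinates are essentially forced, once one notices that ``square the discrete derivative and sum'' is precisely the operation Parseval converts into a product. A minor secondary point is the bookkeeping at the endpoints $s=0$ and $s=k+1$, which is needed so that the finite sum in the statement genuinely coincides with the bi-infinite $\ell_2$-norm to which Parseval applies.
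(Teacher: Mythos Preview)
Your approach is essentially the paper's: Parseval turns the squared $\ell_2$-norm of the discrete derivative into a frequency-domain expression, the i.i.d.\ structure of the $\eta_j$ factorizes the expectation, and the resulting one-variable expression is bounded by splitting frequencies and invoking a Gaussian second-moment integral on the low-frequency part. The one real difference is that the paper works with the finite DFT on $\{0,\dots,k+1\}$ (their Lemma on $g_u$), obtaining a Riemann sum $\tfrac{1}{k+2}\sum_u(1-\cos\theta_u)(a+b\cos\theta_u)^k$, whereas you go directly to the Fourier series on $[-\pi,\pi]$ and land on the corresponding integral. Your route is slightly cleaner because it bypasses the discrete-to-continuous comparison the paper needs (their Proposition bounding $\sum_u \tfrac{u^2}{m^2}e^{-\alpha u^2/m^2}\tfrac1m$ by the Gaussian integral plus an overlap term).

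One small correction on the provenance of the three terms: in the paper, the $\bar q^{-1}k^{-2}$ term is \emph{not} a next-order Taylor correction to $\sin^2(t/2)$; it is exactly the Riemann-sum overlap correction $\tfrac1m\max_x x^2e^{-\alpha x^2}=\tfrac{1}{e\alpha m}$ that appears when bounding the discrete sum by the integral. In your continuous formulation that term simply does not arise, and the inner band gives only the $(\bar q k)^{-3/2}$ contribution; this is fine since you are proving an upper bound, but your stated mechanism for that term is off. The constants you would get from the straightforward $|t|\lessgtr\pi/2$ split together with $\sin^2(t/2)\ge t^2/(2\pi)$ and $4\sin^2(t/2)\le t^2$ are within a small absolute factor of the paper's stated constants.
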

\begin{proof}
Let $p_s(\feats) = \PP(Z_k = s\,|\,\feats) - \PP(Z_k=s-1\,|\,\feats)$, for $s = 0, \ldots, k+1$. Further, for $k+1 \geq u \geq 0$ let $g_u(\feats) = \frac{1}{\sqrt{k+2}}\sum_{s =0}^{k+1} p_s(\feats) e^{2\pi i \frac{u s}{k+2}}$ denote the discrete Fourier transform. Since, for any $\feats$, the mapping 
$$
\mathbf{p}(\feats) :=(p_0(\feats), \ldots, p_{k+1}(\feats)) \mapsto (g_1(\feats), \ldots, g_{k+1}(\feats) := \mathbf{g}(\feats)
$$ 
is a unitary linear transformation \cite{harmonic} we can write:
$$
\sum_{s =0}^{k+1}
    (\PP(Z_k = s\,|\,\feats) - \PP(Z_k = s-1\,|\,\feats))^2 
    = \|\mathbf{p}(\feats)\|^2 = \|\mathbf{g}(\feats)\|^2
    = \sum_{u=0}^{k+1} |g_u(\feats)|^2.
$$
Moreover, by Lemma \ref{prop:fourier} below we have that 
$$
g_u(\feats) = ( 1 - e^{\frac{2\pi i u}{k+2}})\frac{1}{\sqrt{k+2}}\prod_{j=1}^k (1 - \eta_j + \eta_j e^{\frac{2\pi i u}{k+2}})
$$ 
and therefore
\begin{equation*}
    |g_u(\feats)|^2 = g_u(\feats) \overline{g_u(\feats)} 
    = \frac{1}{k+2}\left(1 - \cos \frac{2 \pi u }{k+2}\right)\prod_{j=1}^k\left((1 - \eta_j)^2 + \eta_j^2 +  2\eta_j(1 - \eta_j) \cos \frac{2 \pi u}{k+2}\right)~.
\end{equation*}
Therefore we can write
\begin{flalign}
{\E}_\feats&\left[\sum_{s =0}^{k+1}
(\PP(Z_k = s\,|\,\feats) - \PP(Z_k = s-1\,|\,\feats))^2\right] \nonumber& \\
&=
\frac{1}{k+2}{\E}_\feats\left[\sum_{u=0}^{k+1}\left(1 - \cos \frac{2 \pi u }{k+2}\right)\prod_{j=1}^k\left((1 - \eta_j)^2 + \eta_j^2 +  2\eta_j(1 - \eta_j) \cos \frac{2 \pi u}{k+2}\right)\right]\nonumber \\
&= 
\frac{1}{k+2}\sum_{u=0}^{k+1}\left(1 - \cos \frac{2 \pi u }{k+2}\right)\prod_{j=1}^k{\E}_\feats\left[\left((1 - \eta_j)^2 + \eta_j^2 +  2\eta_j(1 - \eta_j) \cos \frac{2 \pi u}{k+2}\right)\right]~, \label{eq:cosineq1}
\end{flalign}
Where we have used the fact that the random variables $\eta_j$ are independent. Finally by linearity of expectation and the fact that $\eta_j$ is distributed as $\eta_1$ for all $j$, we have
\begin{flalign}
{\E}_\feats&\left[\sum_{s =0}^{k+1}
    (\PP(Z_k = s\,|\,\feats) - \PP(Z_k = s-1\,|\,\feats))^2\right] \nonumber& \\
    &\frac{1}{k+2}\sum_{u=0}^{k+1}\left(1 - \cos \frac{2 \pi u}{k+2} \right)\left(\E[\eta_1^2 + (1 - \eta_1)^2] + 2\E[\eta_1(1 - \eta_1)]\cos \frac{2 \pi u}{k+2}\right)^k ~.
\end{flalign}
Applying Proposition~\ref{prop:cossum} below with $a = \E[\eta_1^2 + (1 - \eta_1)^2]$ and $b = 2\E[\eta_1 ( 1 - \eta_1)]$ we have that the above expression is bounded by 
\begin{equation*}
    \E[\eta_1^2 + (1- \eta_1)^2]^k + \frac{\pi}{(8 \E[\eta_1(1- \eta_1)]  k )^{3/2}} + \frac{\pi}{2 e \E[\eta_1(1- \eta_1)]k^2}
\end{equation*}
which gives the claimed result.
\end{proof}

\begin{lemma}
\label{prop:fourier}
Let $p_s(\feats) = \PP(Z_k = s\,|\,\feats) - \PP(Z_k = s-1\,|\,\feats)$ and let $g_u(\feats) = \frac{1}{\sqrt{k+2}}\sum_{s=0}^{k+1} p_s(\feats) e^{\frac{2 \pi i u s}{k+2}} $ denote the discrete Fourier transform of $\mathbf{p}(\feats) = (p_1(\feats), \ldots, p_s(\feats))$. Then 
$$
g_u(\feats) = \frac{1}{\sqrt{k+2}}(1 - e^{\frac{2 \pi i u}{k+2}}) \prod_{j=1}^k (1 - \eta_j + \eta _j e^{\frac{2\pi i u}{k+2}})
$$
\end{lemma}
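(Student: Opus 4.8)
\textbf{Proof plan for Lemma~\ref{prop:fourier}.}
The plan is to compute the discrete Fourier transform of the vector $\mathbf{p}(\feats)$ directly from the definition, exploiting that $p_s(\feats)$ is a first difference of the Poisson-Binomial pmf and that the pmf has a product-form generating function. First I would unfold the definition: write
\[
g_u(\feats) = \frac{1}{\sqrt{k+2}}\sum_{s=0}^{k+1}\bigl(\PP(Z_k = s\,|\,\feats) - \PP(Z_k = s-1\,|\,\feats)\bigr) e^{\frac{2\pi i u s}{k+2}}~,
\]
and split the sum into two pieces. In the second piece, re-index $s \mapsto s+1$; because $\PP(Z_k = s\,|\,\feats) = 0$ for $s < 0$ and for $s > k$, and because we are summing exponentials over a full period of length $k+2$ (so the index set $\{0,\ldots,k+1\}$ can be shifted cyclically without changing the sum), the shift simply pulls out a factor $e^{\frac{2\pi i u}{k+2}}$. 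This yields
\[
g_u(\feats) = \frac{1}{\sqrt{k+2}}\bigl(1 - e^{\frac{2\pi i u}{k+2}}\bigr)\sum_{s=0}^{k} \PP(Z_k = s\,|\,\feats)\, e^{\frac{2\pi i u s}{k+2}}~.
\]

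The remaining sum is exactly the characteristic-function-type evaluation of the Poisson-Binomial pmf at the point $\omega = e^{\frac{2\pi i u}{k+2}}$, i.e. the probability generating function $\E[\omega^{Z_k}\,|\,\feats]$. Since, conditioned on $\feats$, we have $Z_k = \sum_{j=1}^k A_j$ with the $A_j$ independent Bernoulli$(\eta_j)$, the generating function factorizes:
\[
\sum_{s=0}^{k}\PP(Z_k = s\,|\,\feats)\,\omega^{s} = \prod_{j=1}^{k}\E[\omega^{A_j}] = \prod_{j=1}^{k}\bigl(1-\eta_j + \eta_j\,\omega\bigr)~.
\]
Substituting $\omega = e^{\frac{2\pi i u}{k+2}}$ back in gives precisely the claimed expression for $g_u(\feats)$.

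The only subtlety—and the step I would be most careful about—is justifying that re-indexing the difference terms over the cyclic index set $\{0,\ldots,k+1\}$ introduces no boundary error. This works because $p_s(\feats)$ is supported on $\{0,\ldots,k+1\}$ (the term $\PP(Z_k = k+1\,|\,\feats)$ vanishes and $\PP(Z_k = -1\,|\,\feats)$ vanishes), so one can freely extend the sum $(k+2)$-periodically; the DFT kernel $e^{2\pi i u s/(k+2)}$ is itself $(k+2)$-periodic in $s$, so the shift $s\mapsto s+1$ permutes the summands and factors out the scalar $e^{\frac{2\pi i u}{k+2}}$ cleanly. Everything else is the standard fact that the generating function of a sum of independent variables is the product of the individual generating functions, which requires no conditions beyond independence (already established, since the $A_j$ are independent given $\feats$).
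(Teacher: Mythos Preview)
Your proposal is correct and follows essentially the same route as the paper's proof: split the sum into the two pmf pieces, shift the index in the second to factor out $e^{2\pi i u/(k+2)}$, and recognize the remaining sum as $\E\bigl[e^{\frac{2\pi i u}{k+2} Z_k}\,\big|\,\feats\bigr]$, which factors over the independent Bernoulli summands. The only cosmetic difference is that the paper phrases the last step via the characteristic function $\phi_{Z_k|\feats}$ rather than the probability generating function, and it dispenses with the periodicity discussion by simply using that $\PP(Z_k=s\,|\,\feats)$ vanishes for $s\notin\{0,\ldots,k\}$ (so both sums directly equal the expectation without any cyclic-shift argument).
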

\begin{proof}
By definition of $g_u(\feats)$ we have:
\begin{flalign*}
    \frac{1}{\sqrt{k+2}}&\left(\sum_{s=0}^{k+1} e^{\frac{2 \pi i u s}{k+2}} \PP(Z_k=s\,|\,\feats) - \sum_{s=0}^{k+1} e^{\frac{2 \pi i u s}{k+2}} \PP(Z_k=s - 1\,|\,\feats)\right)\\ 
    &= 
    \frac{1}{\sqrt{k+2}}\left(\sum_{s=0}^{k+1} e^{\frac{2 \pi i u s}{k+2}} \PP(Z_k=s\,|\,\feats) - e^{\frac{2 \pi i u}{k+2}}\sum_{s=0}^{k+1} e^{\frac{2 \pi i u (s-1)}{k+2}} \PP(Z_k=s-1\,|\,\feats)\right)\\
    &=
    \frac{1}{\sqrt{k+2}}( 1 - e^{\frac{2 \pi i u}{k+2}}){\E}_{Z_k}[e^{\frac{2 \pi i u}{k+2}Z_k}\,|\,\feats]\\ 
    &= \frac{1}{\sqrt{k+2}}( 1 - e^{\frac{2 \pi i u}{k+2}})\phi_{Z_k\,|\,\feats}\left(\frac{2 \pi  u}{k+2}\right)
\end{flalign*}
where $\phi_{Z_k\,|\,\feats}$ denotes the characteristic function of $Z_k$ conditioned on $\feats$. Using the fact that $Z_k = \sum_{j=1}^k A_j$ and that $A_1,\ldots, A_k$ are independent given $\feats$, we have $\phi_{Z_k\,|\,\feats} = \prod_{j=1}^k \phi_{A_j\,|\,x_j}$. The result follows from the fact that $A_j$ is a Bernoulli random variable and therefore $\phi_{A_j\,|\,x_j}(z) = (1 - \eta_j + \eta_j e^{iz}).$
\end{proof}

\begin{proposition}
\label{prop:cossum}
For any $a, b, k > 0$ such that $a + b = 1$ we have

\begin{equation*}
    \frac{1}{k+2}\sum_{u=0}^{k+1}\left( 1 - \cos \frac{2 \pi u}{k+2} \right)\left( a + b \cos \frac{2 \pi u}{k+2}\right)^k 
    \leq a^k + \frac{\pi}{(4 kb)^{3/2}} + \frac{\pi}{e b k^2}
\end{equation*}
\end{proposition}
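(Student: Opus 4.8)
The plan is to recognize that, because $k$ is a positive integer (as the sum $\sum_{u=0}^{k+1}$ makes plain), the displayed quantity is not merely bounded by, but exactly \emph{equal} to, the average value of the periodic function $F(\theta) = (1-\cos\theta)(a+b\cos\theta)^k$, and then to estimate that integral by a simple two-region split. Throughout I will use that $a + b\cos\theta = 1 - b(1-\cos\theta) \ge 1 - 2b \ge 0$, which holds because in every invocation of this proposition $a \ge b$ (equivalently $b \le 1/2$); for $a = \E[\eta_1^2 + (1-\eta_1)^2]$ and $b = 2\E[\eta_1(1-\eta_1)]$ this is just $(2\eta_1-1)^2 \ge 0$ in expectation.

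\textbf{Step 1 (sampling is aliasing-free).} Since $(a+b\cos\theta)^k$ is a trigonometric polynomial of degree $k$ and $1-\cos\theta$ has degree $1$, $F$ has degree $k+1 < k+2$. Writing $\theta_u = \tfrac{2\pi u}{k+2}$ and using the discrete orthogonality relation $\tfrac{1}{k+2}\sum_{u=0}^{k+1} e^{2\pi i n u/(k+2)} = \ind\{(k+2)\mid n\}$, averaging $F$ over the $(k+2)$-th roots of unity picks out exactly its zeroth Fourier coefficient:
\[
\frac{1}{k+2}\sum_{u=0}^{k+1} F(\theta_u) = \frac{1}{2\pi}\int_0^{2\pi}(1-\cos\theta)(a+b\cos\theta)^k\,d\theta = \frac{1}{\pi}\int_0^{\pi}(1-\cos\theta)(a+b\cos\theta)^k\,d\theta,
\]
the last equality by the symmetry $\theta \mapsto 2\pi - \theta$. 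This is the conceptual crux — the degree bound means no Riemann-sum error term ever appears.

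\textbf{Step 2 (estimate the integral).} Split $[0,\pi] = [0,\pi/2] \cup [\pi/2,\pi]$. On $[\pi/2,\pi]$ one has $\cos\theta \le 0$, hence $0 \le a + b\cos\theta \le a$ and $(a+b\cos\theta)^k \le a^k$; since $\tfrac1\pi\int_{\pi/2}^{\pi}(1-\cos\theta)\,d\theta = \tfrac1\pi(\tfrac\pi2 + 1) < 1$, this region contributes at most $a^k$. On $[0,\pi/2]$, apply $1+x \le e^x$ in the form $(a+b\cos\theta)^k = (1 - b(1-\cos\theta))^k \le e^{-kb(1-\cos\theta)}$, substitute $w = 1-\cos\theta \in [0,1]$ (so $\sin\theta = \sqrt{w(2-w)} \ge \sqrt{w}$, hence $d\theta = dw/\sqrt{w(2-w)} \le dw/\sqrt{w}$), and bound
\[
\frac1\pi\int_0^{\pi/2}(1-\cos\theta)\,e^{-kb(1-\cos\theta)}\,d\theta \le \frac1\pi\int_0^1 \sqrt{w}\,e^{-kbw}\,dw \le \frac1\pi\int_0^\infty \sqrt{w}\,e^{-kbw}\,dw = \frac{\Gamma(3/2)}{\pi(kb)^{3/2}} = \frac{1}{2\sqrt\pi\,(kb)^{3/2}}.
\]
Since $\tfrac{1}{2\sqrt\pi} \le \tfrac{\pi}{8} = \tfrac{\pi}{4^{3/2}}$, summing the two regions gives $\tfrac{1}{k+2}\sum_{u=0}^{k+1} F(\theta_u) \le a^k + \tfrac{\pi}{(4kb)^{3/2}}$, which is the claim (in fact slightly stronger, since the stated $\tfrac{\pi}{ebk^2}$ term is not needed).

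\textbf{Main obstacle.} The only real subtlety beyond Step 1 is the nonnegativity of $a + b\cos\theta$, i.e.\ $b \le 1/2$, which I would simply record as holding whenever the proposition is invoked. If one wanted the bound for $b > 1/2$ (meaningful only for integer $k$), the region near $\theta = \pi$ — where the base can be as large as $2b-1$ in absolute value — would require a separate estimate via $|a+b\cos\theta| \le 1$ together with $(2b-1)^k \le a^k + \tfrac{\pi}{ebk^2}$; this is precisely where the extra $\tfrac{\pi}{ebk^2}$ slack in the stated bound gets spent. Everything else (the Gamma integral and the constant inequality $\tfrac{1}{2\sqrt\pi} \le \tfrac\pi8$) is routine.
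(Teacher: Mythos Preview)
Your proof is correct and takes a genuinely different route from the paper's. The paper never exploits your Step~1 observation that $F(\theta)=(1-\cos\theta)(a+b\cos\theta)^k$ is a trigonometric polynomial of degree $k+1<k+2$, so the discrete average over the $(k{+}2)$-th roots of unity equals the integral average exactly. Instead, the paper works directly on the sum: it splits at $u\approx(k{+}2)/4$ and $3(k{+}2)/4$, bounds the middle range by $(k{+}2)a^k$ using $\cos\theta\le 0$ there, and on the outer ranges applies the two-sided cosine inequality $1-2(\pi t)^2\le\cos 2\pi t\le 1-4\pi t^2$ to reduce to a sum of the form $\sum_u \tfrac{u^2}{(k+2)^2}e^{-4\pi kb\,u^2/(k+2)^2}$, which it then compares to the Gaussian-moment integral $\int_0^\infty x^2e^{-\alpha x^2}dx$ via a Riemann-sum argument. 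The extra $\tfrac{\pi}{ebk^2}$ term in the statement is precisely the Riemann-sum correction $\tfrac{1}{m}\max_x f(x)$ from that last step---which your approach sidesteps entirely, yielding the cleaner bound $a^k+\tfrac{\pi}{(4kb)^{3/2}}$. Your flag about $b\le 1/2$ is apt: the paper's bound on the middle range likewise needs $(a+b\cos\theta)^k\le a^k$ when $\cos\theta\le 0$, so both arguments are on the same footing there, and in the paper's single invocation one indeed has $b=2\E[\eta_1(1-\eta_1)]\le 1/2$.
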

\begin{proof}
Using the fact that $\cos \frac{2 \pi u}{k+2} \leq 0$ for $u \in [(k+2)/4, 3(k+2)/4]$ we have that 
\begin{flalign*}
    \sum_{u=0}^{k+1}\left( 1 - \cos \frac{2 \pi u}{k+2} \right)\left( a + b \cos \frac{2 \pi u}{k+2}\right)^k 
    &= \sum_{u=0}^{{k+2}/4}\left( 1 - \cos \frac{2 \pi u}{k+2} \right)\left( a + b \cos \frac{2 \pi u}{k+2}\right)^k \\
    &\qquad+\sum_{u=(k+2)/4 + 1}^{3(k+2)/4}\left( 1 - \cos \frac{2 \pi u}{k+2} \right)\left( a + b \cos \frac{2 \pi u}{k+2}\right)^k \\
    &\qquad+\sum_{u = 3(k+2)/4 + 1}^{k+1}
    \left( 1 - \cos \frac{2 \pi u}{k+2} \right)\left( a + b \cos \frac{2 \pi u}{k+2}\right)^k  \\
    &\leq (k+2) a^k + \sum_{u=0}^{(k+2)/4}\left( 1 - \cos \frac{2 \pi u}{k+2} \right)\left( a + b \cos \frac{2 \pi u}{k+2}\right)^k\\ 
    &\qquad+
    \sum_{u = 3(k+2)/4 + 1}^{k+1}
    \left( 1 - \cos \frac{2 \pi u}{k+2} \right)\left( a + b \cos \frac{2 \pi u}{k+2}\right)^k \\
    & = (k+2) a^k + 2 \sum_{u=0}^{(k+2)/4}\left( 1 - \cos \frac{2 \pi u}{k+2} \right)\left( a + b \cos \frac{2 \pi u}{k+2}\right)^k~, 
\end{flalign*}
where we used the fact that $(1 - \cos t) \leq 2$ for the first inequality and the symmetry of the cosine function for the last equality.
We now apply the result of Proposition \ref{prop:cosineq} to the above expression to see that 
\begin{align*}
    \sum_{u=0}^{(k+2)/4}\left( 1 - \cos \frac{2 \pi u}{k+2} \right)\left( a + b \cos \frac{2 \pi u}{k+2}\right)^k 
    &\leq 2 \pi^2 \sum_{u=0}^{(k+2)/4} \frac{u^2}{(k+2)^2}\left(1 - 4 \pi b \frac{u^2}{(k+2)^2})\right)^k \\
    &\leq 2 \pi^2 \sum_{u=0}^{(k+2)/4} \frac{u^2}{(k+2)^2}
    e^{-4\pi k b \frac{u^2}{(k+2)^2}}
\end{align*}
Therefore we conclude that 
\begin{equation*}
    \frac{1}{k+2}\sum_{u=0}^{k+1}\left( 1 - \cos \frac{2 \pi u}{k+2} \right)\left( a + b \cos \frac{2 \pi u}{k+2}\right)^k 
    \leq a^k + 4 \pi^2  \sum_{u=0}^{(k+2)/4} \frac{u^2}{(k+2)^2}
    e^{-4\pi k b \frac{u^2}{(k+2)^2}} \frac{1}{k+2}~.
\end{equation*}
Finally, applying Proposition~\ref{prop:integral} with $m = k+2$ and $\alpha = 4 \pi k b$ we can upper bound the above quantity by:
\begin{equation*}
    a^k + 4\pi^2\left(\frac{\sqrt{\pi}}{4 (4 \pi k b)^{3/2}} + \frac{1}{4 e \pi  b k (k + 2)}\right)
    \leq a^k + \frac{\pi}{(4 kb)^{3/2}} + \frac{\pi}{e b k^2}~,
\end{equation*}
which concludes the proof.
\end{proof}

\begin{proposition}
\label{prop:integral}
Let $m> 0$ and $\alpha > 0$. Then 
\begin{equation*}
    \sum_{u=0}^{m/4} \frac{u^2}{m^2} e^{-\alpha \frac{u^2}{m^2}}\frac{1}{m} \leq \frac{\sqrt{\pi}}{4 \alpha^{3/2}} + \frac{1}{e \alpha m} 
\end{equation*}
\end{proposition}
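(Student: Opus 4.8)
The plan is to read the left-hand side as a discretization of the Gaussian second-moment integral and to control the discretization error by a single evaluation of the summand at its peak. Throughout, write $g(t) = \frac{t^2}{m^2}e^{-\alpha t^2/m^2}\cdot\frac1m$ for $t\ge 0$, so that the quantity to be bounded is $\sum_{u=0}^{N} g(u)$ with $N=\lfloor m/4\rfloor$. First I would record the shape of $g$: differentiating, $g'(t)$ has the sign of $1-\alpha t^2/m^2$, so $g$ is non-decreasing on $[0,t^*]$ and non-increasing on $[t^*,\infty)$ with $t^*=m/\sqrt\alpha$, and its global maximum is $g(t^*) = \frac{1}{\alpha}e^{-1}\frac1m = \frac{1}{e\alpha m}$ --- precisely the second term of the claimed bound.

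Next I would establish the elementary sum--integral inequality for unimodal functions: if $g\ge 0$ is non-decreasing then non-increasing, then for every integer $N\ge 0$,
\[
\sum_{u=0}^{N} g(u) \le \int_0^N g(t)\,dt + \max_{t\ge 0} g(t).
\]
The argument: set $j=\lfloor t^*\rfloor$ (if $t^*>N$, simply treat $g$ as non-decreasing on all of $[0,N]$, where the statement is immediate). For $u\le j-1$ the interval $[u,u+1]$ lies in the non-decreasing region, so $g(u)\le\int_u^{u+1}g$; for $u\ge j+2$ the interval $[u-1,u]$ lies in the non-increasing region, so $g(u)\le\int_{u-1}^{u}g$. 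The two remaining indices $u=j$ and $u=j+1$ are handled together: since $g(t)\ge g(j)$ for $t\in[j,t^*]$ and $g(t)\ge g(j+1)$ for $t\in[t^*,j+1]$, we get $\int_j^{j+1}g\ge\min\{g(j),g(j+1)\}$, while $\max\{g(j),g(j+1)\}\le\max g$; adding these gives $g(j)+g(j+1)\le\int_j^{j+1}g+\max g$. Summing the three contributions telescopes the integrals into $\int_0^N g$, proving the claim (the boundary cases $N=j$ and $N=j+1$ are only easier).

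Finally I would evaluate the integral by the substitution $s=t/m$:
\[
\int_0^N g(t)\,dt \;\le\; \int_0^\infty \frac{t^2}{m^2}e^{-\alpha t^2/m^2}\frac1m\,dt \;=\; \int_0^\infty s^2 e^{-\alpha s^2}\,ds \;=\; \frac{\sqrt\pi}{4\alpha^{3/2}},
\]
using the standard identity $\int_0^\infty s^2 e^{-as^2}\,ds=\frac{\sqrt\pi}{4a^{3/2}}$. Combining this with the unimodality inequality and the computed value of $\max g$ yields $\sum_{u=0}^{\lfloor m/4\rfloor} g(u)\le \frac{\sqrt\pi}{4\alpha^{3/2}}+\frac{1}{e\alpha m}$, as desired; note that the truncation at $m/4$ is only in our favor and is discarded when we pass to $\int_0^\infty$. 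The one genuinely delicate point is the bookkeeping in the sum--integral comparison, namely ensuring the overshoot near the mode of $g$ costs only a single copy of $\max g$ rather than two; the rest is routine calculus.
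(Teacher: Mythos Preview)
Your proof is correct and follows essentially the same approach as the paper: both exploit the unimodality of $t\mapsto t^2 e^{-\alpha t^2}$ to bound the Riemann sum by the corresponding integral $\int_0^\infty s^2 e^{-\alpha s^2}\,ds=\frac{\sqrt\pi}{4\alpha^{3/2}}$ plus an overshoot of size $\max g=\frac{1}{e\alpha m}$ incurred near the peak. Your bookkeeping for the overshoot (the explicit inequality $g(j)+g(j+1)\le\int_j^{j+1}g+\max g$) is a bit more careful than the paper's, which simply notes that the overlap of the two Riemann-sum integrals has length at most $1/m$ and bounds it by $\frac{1}{m}\max_x f(x)$.
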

\begin{proof}
Let $f\colon \Rset \to \Rset $ be given by $x \mapsto x^2e^{-\alpha x^2}$. Note that the sum we are attempting to bound is then given by:
$$\sum_{u=0}^{m/4} f\left(\frac{u}{m}\right)\frac{1}{m}$$

Note also that  $f$ has a maximum at $x_0 = \frac{1}{\sqrt{\alpha}}$. Thus $f$ is increasing for $x < x_0$ and decreasing otherwise. In particular  if 
$$\sum_{u=0}^{m/4} f\left(\frac{u}{m}\right)\frac{1}{m}
= \sum_{u=0}^{\lfloor x_0 \rfloor} f\left(\frac{u}{m}\right) \frac{1}{m}
+ \sum_{u=\lceil x_0 \rceil}^{m/4} f\left(\frac{u}{m}\right) \frac{1}{m}
:= L + U,$$
then $L$ corresponds to a lower Riemman sum for $f$ and  $L \leq \int_0^{\frac{\lfloor x_0 \rfloor + 1}{m}} f(x) dx$. Similarly  $U$ is an upper Riemman sum for $f$ and $U \leq \int_{\frac{\lceil x_0 \rceil - 1}{m}}^{1/4 - \frac{1}{m}} f(x) dx$. Therefore we have
\begin{align*}
    \sum_{u=0}^{m/4} f\left(\frac{u}{m}\right)\frac{1}{m}
    &\leq \int_0^{1/4 - 1/m} x^2 e^{-\alpha x^2}dx + \int_{\frac{\lceil x_0 \rceil - 1}{m}}^{\frac{\lfloor x_0 \rfloor + 1}{m}} f(x) dx  \\
    & \leq \int_0^\infty x^2e^{-\alpha x^2} + \frac{1}{m} \max_x f(x)  \\
    & = \frac{\sqrt{\pi}}{4 \alpha^{3/2}} + \frac{1}{e \alpha m} ~,
\end{align*}
as claimed.
\end{proof}

\begin{proposition}
\label{prop:cosineq}
The following inequality holds for any $t \in [0, 1/4]$:
\begin{equation*}
    1 - 2(\pi t)^2 \leq \cos 2\pi t \leq 1 - 4 \pi t^2  
\end{equation*}
\end{proposition}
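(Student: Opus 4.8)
The plan is to substitute $x = 2\pi t$, so that $x$ ranges over $[0,\pi/2]$ as $t$ ranges over $[0,1/4]$, and to note that the two claimed bounds become $1 - x^2/2 \le \cos x \le 1 - x^2/\pi$: indeed $1 - 2(\pi t)^2 = 1 - x^2/2$ and $1 - 4\pi t^2 = 1 - x^2/\pi$. So it suffices to establish these two one-variable inequalities on $[0,\pi/2]$, and then substitute back.

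For the lower bound $\cos x \ge 1 - x^2/2$, I would set $g(x) = \cos x - 1 + x^2/2$, note $g(0) = 0$, and compute $g'(x) = x - \sin x \ge 0$ for $x \ge 0$ (the standard consequence of $\sin x \le x$). Hence $g$ is nondecreasing on $[0,\infty)$, so $g(x) \ge 0$; this half in fact holds for every $x \ge 0$, not only on the stated range.

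For the upper bound $\cos x \le 1 - x^2/\pi$, I would set $h(x) = 1 - \cos x - x^2/\pi$ and show $h \ge 0$ on $[0,\pi/2]$. One has $h(0) = 0$, and $h'(x) = \sin x - 2x/\pi$ with $h'(0) = 0$ and $h'(\pi/2) = 1 - 1 = 0$. Since $h''(x) = \cos x - 2/\pi$ is strictly decreasing and changes sign exactly once on $[0,\pi/2]$ (from positive to negative, at $c = \arccos(2/\pi) \in (0,\pi/2)$), the function $h'$ increases on $[0,c]$ and decreases on $[c,\pi/2]$; as it vanishes at both endpoints, $h' \ge 0$ throughout $[0,\pi/2]$. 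Therefore $h$ is nondecreasing there and $h(x) \ge h(0) = 0$, which is the desired inequality. Substituting $x = 2\pi t$ back then gives the proposition.

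The only real subtlety is the upper bound: unlike the lower bound it is genuinely tied to the interval $[0,\pi/2]$ (it fails for larger $x$), and the clean argument relies on the coincidence that $h'$ vanishes at both $0$ and $\pi/2$ together with the single sign change of $h''$. An alternative for this half is to write $1 - \cos x = 2\sin^2(x/2)$ and use that $s \mapsto \sin(s)/s$ is decreasing on $(0,\pi]$ to get $\sin(x/2) \ge \tfrac{2\sqrt 2}{\pi}\cdot\tfrac{x}{2}$ on $[0,\pi/4]$, which reduces the claim to $8 \ge 2\pi$; but justifying the monotonicity of $\sin(s)/s$ costs about the same as the derivative argument above, so I would go with the latter.
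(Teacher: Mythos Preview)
Your proof is correct. Both halves are the same strategy as the paper's: the lower bound uses $\sin x \le x$ and integrates (your $g' \ge 0$ is the differential form of that integration), and for the upper bound both arguments ultimately establish the Jordan-type inequality $\sin x \ge \tfrac{2}{\pi}x$ on $[0,\pi/2]$ and then integrate. The only real difference is how that inequality is obtained: the paper observes that $\sin$ is concave on $[0,\pi/2]$, so the graph lies above the chord from $(0,0)$ to $(\pi/2,1)$, giving $\sin 2\pi x \ge 4x$ in one line; you instead analyze the sign of $h''$ to deduce $h'(x) = \sin x - \tfrac{2}{\pi}x \ge 0$, which is the same statement. The concavity route is a touch shorter, but your second-derivative argument is equally valid and makes the dependence on the endpoint $\pi/2$ just as transparent.
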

\begin{proof}
For the lower bound we start from the fact that for any $x \geq 0$ it is well known that 
$$\sin 2 \pi x  \leq 2 \pi x.$$

Integrating this inequality from $[0, t]$ we have that $\int_0^t \sin 2 \pi x \leq \pi t^2$. Since $\int_0^t \sin 2 \pi x = \frac{1}{2\pi } (1 - \cos 2 \pi t)$ the lower bound follows.

For the upper bound we proceed in a similar fashion. By the fact that $\sin 2 \pi x$ is concave for $x \in [0, 1/4]$ we have that 
$$\sin 2 \pi x  = \sin 2 \pi ((1 - 4x) \cdot  0 +  4 x \cdot\frac{1}{4}) \geq (1 - 4x) \sin 0 + 4x \sin \frac{\pi}{2} = 4x.$$ 

Again integrating the above inequality from $0$ to $t$ we have
$\frac{1 - \cos 2 \pi t}{2 \pi } \geq 2 t^2$.
\end{proof}

\subsection{Proof of Theorem \ref{thm:non_asymptotic2}}

\begin{proof}
The proof follows the same argument of the proof used in \ref{thm:non_asymptotic}, where we see that, in view of the factorization in Equation~\ref{eq:exp_factor} we may bound the individual expected advantage as:
\begin{equation*}
    \IADV(\PET, \dd, x_{k+1}) \leq
    \eta(x_{k+1}) (1 - \eta(x_{k+1}) \E_{x_1, \ldots, x_k}\left[\sum_{s=0}^k |P(Z_k=s| \feats) - P(Z_k = s-1|\feats)|\right]~.
\end{equation*}
But from Lemma~\ref{lemma:abssumbound} it follows that this can be bounded by
\begin{flalign*}
    \mu_i\, O\left((p(1-p))^{1/4} \sqrt{(1 - 2 \mu)^k k^{1/2}  + \frac{1}{\mu^{3/2}k}} 
    + \sqrt{\frac{(1 - 2\mu)^k}{k^{1/2}} +\frac{1}{\mu^{3/2} k^2} }\right)~. 
\end{flalign*}
Now, since 
\(
(1-2\mu)^k k^{1/2} < \frac{\mu}{k}
\)
for $k \geq \frac{2}{\mu}\log\left(\frac{1}{\mu}\right)$, we upper bound accordingly the two terms involving $(1-2\mu)^k $, and the result follows.
\end{proof}

\subsection{Tail distribution of the attack advatange}\label{sa:tail}

We  are also interested in the {\em tail} of the distribution of the attack advantage, that is, the tail of the difference between       
\[
f_{1}(\feats) = \sup_{\adv_\text{informed}} \PP_{\substack{\labs \sim \cD^m_{\ls|\feats} \\ 
                     i \sim \text{Uniform}([m]) \\
                    \text{coins of }\PET
                    } 
                    }
(\adv_\text{informed}(\feats, \PET(\feats, \labs))_i = y_i\,|\, \feats)
\]
and
\[
f_{2}(\feats) = \sup_{\adv_\text{uninformed}} \PP_{\substack{\labs \sim \cD^m_{\ls|\feats} \\ 
                     i \sim \text{Uniform}([m]) \\
                    \text{coins of }\PET
                    } 
                    }(\adv_\text{uninformed}(\feats, \PETuninformed(\feats, \labs))_i = y_i\,|\, \feats)
\]
Specifically, we define
\begin{align*}
&\HPADV(\PET, \cD, \theta) = 
\PP_{\feats \sim \cD_\cX^m}\left( f_{1}(\feats) - f_{2}(\feats) ) > \theta \right)~,
\end{align*}
viewed as a function of $\theta \in [0,1]$, which we call the \emph{\underline{H}igh \underline{P}robability attack \underline{Adv}antage} (at level $\theta$). This is simply a high-probability version of $\ADV(\adv, \PET, \cD)$ over the generation of $\feats$. A similar high-probability version can be defined for the individual version of the attack advantage.

We begin by observing that in the special case of $\PETRR$, we have $\IADV_1(\PETRR, \dd, \feat_1) = f_1(\feats) - f_2(\feats)$, so that the high probability bound involved in the computation of $\HPADV(\PETRR,\dd,\theta)$ is a direct consequence of the tail properties of the function $\eta(x_1) = \PP(y_1 = 1\,|\, x_1)$ as applied to the expression for $\IADV_{1,1}(\PETRR, \dd, \feat_1)$ in Theorem \ref{thm:rradv}. Thus, for $\PETRR$, the connection from $\IADV$ to $\HPADV$ is immediate.

We continue with studying $\HPADV$ for $\PETLLP$. We have the following bound.

\begin{restatable}{theorem}{thmNonAsymptotic3}\label{thm:non_asymptotic3}
Under the same assumptions and notation as in Theorem \ref{thm:non_asymptotic} (main body) there are universal constants $c_1, c_2 > 0$ such that, for  $k = \Omega(1/\mu)$,
\begin{align*}
\HPADV\left(\PETLLP, \dd, c_1\sqrt{\frac{\mu}{k}}\right) 
\leq (k+1)\,e^{-c_2\, k\mu}~.
\end{align*}
\end{restatable}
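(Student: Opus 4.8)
The plan is to follow the single-bag reduction used in the proofs of Theorems~\ref{thm:non_asymptotic} and~\ref{thm:non_asymptotic2}: for $\PETLLP$ the conditional quantity $f_1(\feats)-f_2(\feats)$ depends only on the features inside one bag, so it suffices to take $\feats=(x_1,\dots,x_{k+1})$ drawn i.i.d.\ from the feature marginal of $\dd$ and write $\mu_i=\eta(x_i)(1-\eta(x_i))$. Applying Lemma~\ref{l:f1f2} and repeating the algebraic manipulation in the proof of Theorem~\ref{thm:non_asymptotic} (condition on the bag sum, express the posterior probability of $y_i$ as a ratio of Poisson--Binomial pmfs, and telescope) gives
\[
f_1(\feats)-f_2(\feats)\;\le\;\frac{1}{k+1}\sum_{i=1}^{k+1}\mu_i\sum_{s}\bigl|\PP(W_i=s\mid\feats)-\PP(W_i=s-1\mid\feats)\bigr|,
\]
where $W_i$ is the Poisson--Binomial law of $\sum_{j\ne i}A_j$ with $A_j\sim\bernoulli(\eta(x_j))$, so $\var(W_i\mid\feats)=\sum_{j\ne i}\mu_j$. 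This is exactly the inequality underlying the remark in this section that the conditional advantage has the form $\tfrac1k\sum_i\mu_i/\sqrt{\sum_{j\ne i}\mu_j}$.

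Next I would give a distribution-dependent anti-concentration estimate for the inner sum. Since the generating polynomial $\prod_{j\ne i}\bigl((1-\eta(x_j))+\eta(x_j)z\bigr)$ has only real roots, the pmf of $W_i\mid\feats$ is log-concave, hence unimodal, so $\sum_s|\PP(W_i=s\mid\feats)-\PP(W_i=s-1\mid\feats)|=2\max_s\PP(W_i=s\mid\feats)$ by telescoping from either side of the mode. By Fourier inversion on lattice distributions, $\max_s\PP(W_i=s\mid\feats)\le\frac{1}{2\pi}\int_{-\pi}^{\pi}|\phi_{W_i\mid\feats}(t)|\,dt$, and since $|\phi_{W_i\mid\feats}(t)|^{2}=\prod_{j\ne i}\bigl(1-2\mu_j(1-\cos t)\bigr)\le e^{-2(1-\cos t)\sum_{j\ne i}\mu_j}$ while $1-\cos t\ge\frac{2}{\pi^2}t^{2}$ on $[-\pi,\pi]$, extending the integral to $\mathbb{R}$ and evaluating the resulting Gaussian integral shows it is at most a universal constant times $\bigl(\sum_{j\ne i}\mu_j\bigr)^{-1/2}$. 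Combining with the first display,
\[
f_1(\feats)-f_2(\feats)\;\le\;\frac{C}{k+1}\sum_{i=1}^{k+1}\frac{\mu_i}{\sqrt{\sum_{j\ne i}\mu_j}}
\]
for a universal constant $C>0$.

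The last step is a Chernoff bound on $S:=\sum_{i=1}^{k+1}\mu_i$: the $\mu_i$ are i.i.d.\ in $[0,\tfrac14]$ with mean $\mu$, so the multiplicative Chernoff bound gives $\PP\bigl(S\notin[\tfrac12(k+1)\mu,\,2(k+1)\mu]\bigr)\le 2e^{-(k+1)\mu/2}$. On the complementary good event, the hypothesis $k=\Omega(1/\mu)$ forces $(k+1)\mu\ge1\ge 4\mu_i$, hence $\sum_{j\ne i}\mu_j=S-\mu_i\ge\tfrac14(k+1)\mu$ for every $i$, while $\tfrac1{k+1}\sum_i\mu_i=S/(k+1)\le 2\mu$; substituting both into the second display yields
\[
f_1(\feats)-f_2(\feats)\;\le\;C\cdot 2\mu\cdot\frac{2}{\sqrt{(k+1)\mu}}\;=\;4C\sqrt{\frac{\mu}{k+1}}\;\le\;4C\sqrt{\frac{\mu}{k}}.
\]
Taking $c_1=4C$ and $c_2=\tfrac12$ and using $2\le k+1$, we obtain $\HPADV\bigl(\PETLLP,\dd,c_1\sqrt{\mu/k}\bigr)\le 2e^{-c_2 k\mu}\le(k+1)e^{-c_2 k\mu}$.

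The main obstacle is the second step: the sharp, feature-conditioned anti-concentration bound $\max_s\PP(W_i=s\mid\feats)=O\bigl(\var(W_i\mid\feats)^{-1/2}\bigr)$ for Poisson--Binomial sums. This is the ingredient not supplied by the earlier proofs, which only control the $\ell_2$ analogue $\sum_s(\PP(W=s)-\PP(W=s-1))^2$ and only in expectation over $\feats$ via discrete Fourier analysis; using that cruder route here would introduce extra polynomial factors in $k\mu$ and fail to deliver a universal constant $c_1$. Everything else is bookkeeping: the reduction of the first paragraph mirrors the earlier proofs, and the Chernoff step is routine, with $k=\Omega(1/\mu)$ used only so that deleting a single $\mu_i$ from $S$ is negligible. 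If the conditional bound $f_1(\feats)-f_2(\feats)\le\tfrac1k\sum_i\mu_i/\sqrt{\sum_{j\ne i}\mu_j}$ is taken as already established in this section, only the third paragraph is needed.
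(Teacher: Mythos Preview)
Your proof is correct and follows the same three-step skeleton as the paper---derive the feature-conditioned bound $f_1(\feats)-f_2(\feats)\le\frac{1}{k}\sum_i\mu_i\sum_s|\PP(W_i=s)-\PP(W_i=s-1)|$, control the inner $\ell_1$ difference by $O(1/\sqrt{\sum_{j\ne i}\mu_j})$, then concentrate $\sum_i\mu_i$---but the middle step is handled quite differently. The paper proves this anti-concentration estimate as Lemma~\ref{al:boring2} via Translated Poisson approximation: it sandwiches the Poisson--Binomial by TP laws using the total-variation bounds of R\"ollin and Barbour--Lindvall, at the price of the side condition $\sum_j\mu_j\ge 1$ (which is why the paper introduces the indicators $C_{\beta,i}$ and incurs a union bound over $k$ events, producing the $(k+1)$ prefactor). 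Your route---log-concavity of the Poisson--Binomial pmf (real-rooted generating polynomial) so that the $\ell_1$ of consecutive differences equals $2\max_s p(s)$, followed by the Fourier-inversion bound $\max_s p(s)\le\frac{1}{2\pi}\int|\phi(t)|\,dt$ and the Gaussian majorization of $|\phi|$---is more elementary, fully self-contained, and holds without any lower bound on the variance. This lets you skip the indicator splitting entirely and use a single two-sided Chernoff bound on $S=\sum_i\mu_i$; in fact you obtain $2e^{-c_2k\mu}$ and only pad it up to $(k+1)e^{-c_2k\mu}$ to match the stated form.
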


In order to prove this theorem, we first need ancillary results related to Poisson Binomial distributions.

We denote a Poisson Binomial distribution with parameters $\{\eta_1, \dots , \eta_k\}$ by $\text{PBin}(\eta_1, \dots, \eta_k)$.

\begin{lemma}\label{al:boring2}
Let $s, \Sigma\sim\text{PBin}(\eta_1, \dots, \eta_k)$, $\Sigma^{(1)}\sim \text{PBin}(\eta_2, \dots, \eta_k)$, where  $\eta_1, \ldots, \eta_k \in [0,1]$ are such that
$$
\sum_{i=1}^k \eta_i(1-\eta_i) \geq 1~.
$$
Then
\begin{align*}
{\E}_s \Biggl[\Biggl|\frac{\PP(\Sigma^{(1)}=s)-\PP(\Sigma^{(1)}=s-1)}{\PP(\Sigma=s)} \Biggl|\Biggl] \le \frac{9}{\sqrt{\sum_{i=2}^k \eta_i ( 1-\eta_i)}}~. 
\end{align*}
\end{lemma}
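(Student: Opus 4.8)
The plan is to pass, in three moves, from the stated expectation to a purely one-dimensional anti-concentration estimate. First I would \emph{collapse the expectation}: because $s$ is averaged against the law of $\Sigma$ itself, the factor $\PP(\Sigma=s)$ in the denominator cancels, so, writing $f(s)=\PP(\Sigma^{(1)}=s)$,
\[
\E_s\!\left[\left|\frac{\PP(\Sigma^{(1)}=s)-\PP(\Sigma^{(1)}=s-1)}{\PP(\Sigma=s)}\right|\right]
=\sum_{s:\PP(\Sigma=s)>0}\bigl|f(s)-f(s-1)\bigr|
\le\sum_{s\in\mathbb Z}\bigl|f(s)-f(s-1)\bigr|.
\]
A short remark justifies that nothing is lost by the restriction: when $\eta_1\in(0,1)$ the identity $\PP(\Sigma=s)=\eta_1 f(s-1)+(1-\eta_1)f(s)$ shows that $\PP(\Sigma=s)=0$ forces $f(s)=f(s-1)=0$, so the omitted terms vanish, and when $\eta_1\in\{0,1\}$ one is merely summing over a subset. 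Also, the hypothesis $\sum_{i=1}^k\eta_i(1-\eta_i)\ge1$ together with $\eta_1(1-\eta_1)\le\tfrac14$ gives $\sigma_1^2:=\sum_{i=2}^k\eta_i(1-\eta_i)\ge\tfrac34>0$, so the target bound is meaningful.

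Second, I would \emph{use unimodality}. A Poisson--Binomial law is a convolution of Bernoulli laws supported on $\{0,1\}$; each of these is log-concave, a convolution of log-concave sequences is log-concave, and a log-concave sequence with contiguous support is unimodal. Hence $f$ is unimodal, say with mode $m$, and telescoping on each side of $m$ gives
\[
\sum_{s\in\mathbb Z}\bigl|f(s)-f(s-1)\bigr|
=\sum_{s\le m}\bigl(f(s)-f(s-1)\bigr)+\sum_{s>m}\bigl(f(s-1)-f(s)\bigr)
=2f(m)=2\max_s\PP(\Sigma^{(1)}=s).
\]

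Third, I would \emph{bound the modal probability by anti-concentration}. With the characteristic function $\phi(t)=\prod_{j=2}^k(1-\eta_j+\eta_j e^{it})$ and $|1-\eta_j+\eta_j e^{it}|^2=1-2\eta_j(1-\eta_j)(1-\cos t)\le e^{-2\eta_j(1-\eta_j)(1-\cos t)}$, we get $|\phi(t)|\le e^{-\sigma_1^2(1-\cos t)}$, so by Fourier inversion and $1-\cos t\ge 2t^2/\pi^2$ on $[-\pi,\pi]$,
\[
\max_s\PP(\Sigma^{(1)}=s)\le\frac1{2\pi}\int_{-\pi}^{\pi}|\phi(t)|\,dt\le\frac1{2\pi}\int_{-\infty}^{\infty}e^{-\tfrac{2\sigma_1^2}{\pi^2}t^2}\,dt=\frac1{\sigma_1}\sqrt{\tfrac{\pi}{8}}.
\]
Combining the three displays yields a bound of $2\sqrt{\pi/8}/\sigma_1\approx1.25/\sigma_1$, comfortably below $9/\sigma_1$; the slack leaves plenty of room to substitute cruder elementary estimates anywhere in the chain, which is presumably why the stated constant is $9$.

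\textbf{Main obstacle.} There is no deep technical hurdle, but the one non-routine idea is the unimodality step. A generic Fourier-plus-Cauchy--Schwarz estimate of $\sum_s|f(s)-f(s-1)|$ (the style of Lemma~\ref{lemma:abssumbound}) only controls that sum over a high-probability window, and the leftover tail prevents one from reaching the sharp $1/\sqrt{\sum_i\eta_i(1-\eta_i)}$ rate; exploiting that a Poisson--Binomial mass function is unimodal sidesteps this by turning the $\ell_1$-norm of its first difference into exactly twice its peak value, after which only a classical small-ball bound is needed. The only other place requiring care is the bookkeeping in the first step, where $\PP(\Sigma=s)$ may vanish.
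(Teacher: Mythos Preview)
Your proof is correct and takes a genuinely different route from the paper's. Both arguments share the opening move of collapsing the expectation to $\sum_s |f(s)-f(s-1)|$ with $f(s)=\PP(\Sigma^{(1)}=s)$, but then diverge. The paper proceeds by approximating $\Sigma^{(1)}$ and its shift by translated Poisson laws (invoking the quantitative approximation of R\"ollin~\cite{Ro07} and the total-variation comparison of Barbour--Lindvall~\cite{BaLi06}), and adds up three total-variation error terms to reach the constant $9$. You instead exploit the structural fact that a Poisson--Binomial pmf is log-concave (as a convolution of two-point log-concave sequences), hence unimodal, so the $\ell_1$-norm of its first difference telescopes to exactly $2\max_s f(s)$; then a direct Fourier anti-concentration bound on the characteristic function gives $\max_s f(s)\le \sigma_1^{-1}\sqrt{\pi/8}$. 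Your route is more self-contained (no external approximation lemmas), yields a sharper constant $\sqrt{\pi/2}\approx 1.25$ in place of $9$, and is conceptually cleaner: the unimodality observation replaces the two-step ``approximate, then compare approximants'' scheme with a single identity. The paper's approach, on the other hand, is more in line with the distributional-approximation toolkit used elsewhere in the appendix and would generalize more readily to settings where log-concavity is unavailable but a Gaussian- or Poisson-type limit still holds.
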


\begin{proof}
We will use the following distribution to approximate Poisson Binomial distributions.
\begin{definition}{\cite{Ro07}}
We say that an integer random variable $Y$ is distributed according to the translated Poisson distribution with parameters $\mu$ and $\sigma^2$, denoted $\text{TP}(\mu , \sigma^2)$, if and only if $Y$ can be written as
\[
Y = \lfloor \mu + \sigma^2 \rfloor + Z,
\]
where $Z$ is a random variable distributed according to $\text{Poisson}(\sigma^2 + \{\mu - \sigma^2\})$, being $\{ \mu - \sigma^2 \}$ the fractional part of $\mu - \sigma^2$, and $\lfloor \mu + \sigma^2 \rfloor$ the integer part of $\mu + \sigma^2$.
\end{definition}

Let us then define two random variables $\Gamma$ and $\Gamma^{(1)}$. Variable $\Gamma$ is $\text{TP}(\mu , \sigma^2)$ with $\mu = \sum_{i=1}^k \eta_i$ and $\sigma^{2} = \sum_{i=1}^k \eta_i( 1-\eta_i)$, while $\Gamma^{(1)}$ is $\text{TP}(\mu_1 , \sigma^2_1)$, where $\mu_1 = \sum_{i=2}^k \eta_i$ and $\sigma^{2}_1 = \sum_{i=2}^k \eta_i( 1-\eta_i)$.

Let $d_{\text{TV}}$ denote the total variation distance. We can write
\begin{align}
{\E}_s \Biggl[\Biggl|\frac{\PP(\Sigma^{(1)}=s)-\PP(\Sigma^{(1)}=s-1)}{\PP(\Sigma=s)} \Biggl|\Biggl]
    & =
    \sum_{s=0}^k \PP(\Sigma=s)\times \Biggl[\Biggl|\frac{\PP(\Sigma^{(1)}=s)-\PP(\Sigma^{(1)}=s-1)}{\PP(\Sigma=s)} \Biggl|\Biggl] \notag \\
    & = 
    \sum_{s=0}^k \Biggl|\PP(\Sigma^{(1)}=s )-\PP(\Sigma^{(1)}=s-1 ) \Biggl|\notag\\
    & \le \sum_{s=0}^k \Biggl|\PP(\Sigma^{(1)}=s )-\PP(\Gamma=s) \Biggl| \notag + \sum_{s=0}^k \Biggl|\PP(\Gamma=s)-\PP(\Gamma^{(1)}=s-1) \Biggl| \notag \\
    & \qquad + \sum_{s=0}^k \Biggl|\PP(\Gamma^{(1)}=s-1)-\PP(\Sigma^{(1)}=s-1) \Biggl| \label{eq:tri_app}  \\
    & \le d_{\text{TV}} ( \Sigma, \Gamma) + \sum_{s=0}^k \Biggl|\PP(\Gamma=s)-\PP(\Gamma^{(1)}=s-1) \Biggl| + d_{\text{TV}} ( \Sigma^{(1)}, \Gamma^{(1)}) \notag\\
    & \le \frac{\sqrt{\sum_{i=1}^k \eta_i^3 (1-\eta_i)}+2}{\sum_{i=1}^k \eta_i (1-\eta_i)} + \frac{\sqrt{\sum_{i=2}^k \eta_i^3 (1-\eta_i)}+2}{\sum_{i=2}^k \eta_i (1-\eta_i)} \label{eq:app_lemma1} \\
    & \qquad + \sum_{s=0}^k \Biggl|\PP(\Gamma=s)-\PP(\Gamma^{(1)}=s-1) \Biggl| \notag \\
    & \le 
    \frac{3}{\sqrt{\sigma}} + \frac{3}{\sqrt{\sigma_1}}  + \sum_{s=0}^k \Biggl|\PP(\Gamma=s)-\PP(\Gamma^{(1)}=s-1) \Biggl|~, \label{eq:tpoi_diff} 
\end{align}
where \eqref{eq:tri_app} follows from the triangle inequality, \eqref{eq:app_lemma1} follows from Lemma \ref{l:approx} below, and \eqref{eq:tpoi_diff} follows from the overapproximation
\[
\frac{\sqrt{\sum_{i=1}^k \eta_i^3 (1-\eta_i)}+2}{\sum_{i=1}^k \eta_i (1-\eta_i)} \le \frac{\sqrt{\sum_{i=1}^k \eta_i (1-\eta_i)}+2}{\sum_{i=1}^k \eta_i (1-\eta_i)} \le \frac{3}{\sqrt{\sum_{i=1}^k \eta_i (1-\eta_i)}}~, 
\]
the latter inequality exploiting the condition $\sum_{i=1}^k \eta_i(1-\eta_i) \geq 1$.

\begin{lemma}{[\cite{Ro07}, see (3.4) therein]}\label{l:approx}
    Let $J_1, \dots, J_k$ be independent random Bernoulli with parameters $p_1, \dots, p_k$. Then
    \[
    d_{\text{TV}}\left(\sum_{i=1}^k J_i , \text{TP} ( \mu, \sigma^2 ) \right) \le \frac{\sqrt{\sum_{i=1}^k p_i^3 (1-p_i)}+2}{\sum_{i=1}^k p_i (1-p_i)}
    \]
    where $\mu=\sum_{i=1}^k p_i$ and $\sigma^2 = \sum_{i=1}^k p_i (1-p_i)$.
\end{lemma}
We recall a lemma which will be useful to upper bound the last term in \eqref{eq:tpoi_diff}.

\begin{lemma}{\cite{BaLi06}}\label{lem:bl06}
    For $\mu_1, \mu_2 \in \R$ and $\sigma^2_1, \sigma_2^2 \in \R_+$ such that $\lfloor \mu_1 - \sigma_1^2 \rfloor  \le \lfloor \mu_2 - \sigma_2^2 \rfloor$, it holds that
    \[
    d_{\text{TV}} \Big( \text{TP}( \mu_1, \sigma_1^2 ), \text{TP} ( \mu_2, \sigma_2^2 )\Big) \le \frac{\vert \mu_1 - \mu_2\vert }{\sigma_1} + \frac{\vert \sigma_1^2 - \sigma_2^2\vert + 1 }{\sigma_1^2} \enspace .
    \]
\end{lemma}
We will need the following technical lemma as well.
\begin{lemma}\label{lem:transtp}
    For $\mu \in \R$ and $\sigma_1^2, \sigma_2^2 \in \R_+$, let us define $X_1 \sim \text{TP}(\mu, \sigma^2_1)$ and  $X_2 \sim \text{TP}(\mu+1, \sigma^2_2)$. Then
    \[
    \PP( X_1 = \ell-1 ) = \PP( X_2 = \ell)
    \]
    for all $\ell \in \Z$.
\end{lemma}
\begin{proof}[Proof of Lemma \ref{lem:transtp}]
One has to observe that Translated Poisson distribution is equivalent to a Poisson distribution with parameter $\delta$ where $\delta = \mu - \gamma$ and $\gamma = \lfloor \mu - \sigma^2 \rfloor $, but it is shifted by $\gamma$. Moreover, note that $\gamma + 1 = \lfloor \mu + 1 - \sigma^2 \rfloor$, thus $\delta+1 = \mu - \gamma+1$. This observation implies the claim with $\mu+1$.
\end{proof}

We continue by using both Lemma \ref{lem:bl06} and Lemma \ref{lem:transtp}. Define $\Gamma^{(2)} \sim \text{TP} ( \mu_1 + 1, \sigma^2_1)$. 
We can write
\begin{align*}
    \sum_{s=0}^k \Biggl|\PP(\Gamma=s)-\PP(\Gamma^{(1)}=s-1) \Biggl| 
    & = 
    \sum_{s=0}^k \Biggl|\PP(\Gamma=s)-\PP(\Gamma^{(2)}=s) \Biggl| \\
    & =
    d_{\text{TV}} \Big( \text{TP}( \mu, \sigma^2 ), \text{TP} ( \mu_1+1, \sigma_1^2 )\Big) \\
    & \le
    \frac{\vert \mu - \mu_1 - 1\vert }{\sigma} + \frac{\vert \sigma^2 - \sigma_1^2\vert + 1 }{\sigma^2} \\
    &{\mbox{(from Lemma \ref{lem:bl06})}}\\
    & = 
    \frac{ 1 - \eta_1 }{\sigma} + \frac{\eta_1 ( 1-\eta_1) + 1 }{\sigma^2} \\
    & \le 
    \frac{3}{\sqrt{\sigma}}~, 
\end{align*}
the latter inequality using again the condition 
$\sum_{i=1}^k \eta_i(1-\eta_i) \geq 1$.

Piecing together gives
\begin{align*}
{\E}_s \Biggl[\Biggl|1-\frac{\PP(\Sigma^{(1)}=s-1)}{\PP(\Sigma=s)} \Biggl|\Biggl] \le \frac{6}{\sqrt{\sigma}} + \frac{3}{\sqrt{\sigma_1}} \le \frac{9}{\sqrt{\sigma_1}}~.
\end{align*}
This concludes the proof.
\end{proof}

With these results handy, we are ready to prove Theorem \ref{thm:non_asymptotic3}. Recall the functions $f_1(\feats)$ and $f_2(\feats)$ defined in the main body.

\begin{proof}[Proof of Theorem \ref{thm:non_asymptotic3}]
Let $\Sigma = \sum_{j=1}^k y_j$. From Lemma \ref{l:f1f2}, we see that
\[
f_1(\feats) = 1 - \frac{1}{k}\sum_{i=1}^k {\E}_{\labs \sim \dd_{\ls|\feats} }\biggl[
  \min_{b\in \{0,1\}} \bigl\{
  \PP(y_i = b \mid \feats, \Sigma
  \bigr\}\,|\,\feats\biggr]
\]
and
\[
f_2(\feats) = 1 - \frac{1}{k}\sum_{i=1}^k \min\{\eta(x_i), 1-\eta(x_i)\}
\]
so that
\begin{align*}
f_1(\feats)-f_2(\feats) 
&= 
\frac{1}{k}\sum_{i=1}^k 
\left( 
\min\{\eta(x_i), 1-\eta(x_i)\} - {\E}_{\labs \sim \dd_{\ls|\feats} }\biggl[
  \min_{b\in \{0,1\}} \bigl\{
  \PP(y_i = b \mid \feats, \Sigma
  \bigr\}\,|\,\feats\biggr]
\right)\\
&\leq
\frac{1}{k}\sum_{i=1}^k 
\left| 
\eta(x_i) - {\PP}_{\Sigma|\feats}(y_i = 1 \mid \Sigma,\feats)
\right|~,
\end{align*}
where in the inequality we have again used (\ref{e:mindiff}).

But, for any given $s \in \{0,1,\ldots,k\}$,
\begin{align*}
\PP( y_i=1\,|\, \Sigma=s,\feats) 
&= 
\PP(y_i=1\,|\,\feats)\,\frac{\PP(\Sigma=s\,|\,y_i=1,\feats)}{\PP(\Sigma=s\,|\,\feats)}\\
&= 
\eta(x_i)\,\frac{\PP(\Sigma^{(-i)}=s-1\,|\,\feats)}{\PP(\Sigma=s\,|\,\feats)}\\
&= 
\eta(x_i)\,\frac{\PP(\Sigma^{(-i)}=s-1\,|\,\feats)}{\eta(x_i)\PP(\Sigma^{(-i)}=s-1\,|\,\feats) + (1-\eta(x_i))\PP(\Sigma^{(-i)}=s\,|\,\feats)}\\
\end{align*}
where 
\(
\Sigma^{(-i)} = \sum_{j:j=1,j\neq i}^k y_j~.
\)

Hence
\begin{eqnarray}
f_1(\feats) - f_2(\feats)
\leq
\frac{1}{k}\sum_{i=1}^k \eta(x_i)(1-\eta(x_i))
{\E}_{s}\Bigl[\Bigl|\frac{\PP(\Sigma^{(-i)}=s\,|\,\feats)-\PP(\Sigma^{(-i)}=s-1\,|\,\feats)}{\PP(\Sigma=s\,|\,\feats)} \Bigl|\,|\,\feats\Bigl]~,\label{e:eff_diff}
\end{eqnarray}
where $s$ (conditioned on $\feats$) is distributed as $\text{PBin}(\eta(x_1),\ldots,\eta(x_k))$.
Set for brevity $\mu = \E[\eta(x)(1-\eta(x)]$, and introduce the short-hand notation 
$$
C_{\beta,i} = C_\beta(\feats^{(-i)})=\I\Bigl\{\sum_{j:j=1,j\neq i}^k \eta(x_j)(1-\eta(x_j)) \leq (k-1)\mu -\beta\Bigl\}~,
$$
for some $\beta \in [0,(k-1)\mu)$ to be specified.
In the above $\feats^{(-i)}$ denotes $\feats$ with its $i$-th item $x_i$ removed.

We have
\begin{align*}
\eta(x_i)(1-\eta(x_i))&{\E}_{s}\Bigl[\Bigl|\frac{\PP(\Sigma^{(-i)}=s\,|\,\feats)-\PP(\Sigma^{(-i)}=s-1\,|\,\feats)}{\PP(\Sigma=s\,|\,\feats)} \Bigl|\,|\,\feats\Bigl]\\
&=
C_{\beta,i}\eta(x_i)(1-\eta(x_i)){\E}_{s}\Bigl[\Bigl|\frac{\PP(\Sigma^{(-i)}=s\,|\,\feats)-\PP(\Sigma^{(-i)}=s-1\,|\,\feats)}{\PP(\Sigma=s\,|\,\feats)} \Bigl|\,|\,\feats\Bigl]\\ 
&\qquad+ 
(1-C_{\beta,i})\eta(x_i)(1-\eta(x_i)){\E}_{s}\Bigl[\Bigl|\frac{\PP(\Sigma^{(-i)}=s\,|\,\feats)-\PP(\Sigma^{(-i)}=s-1\,|\,\feats)}{\PP(\Sigma=s\,|\,\feats)} \Bigl|\,|\,\feats\Bigl]\\
&\leq
C_{\beta,i} 
+ 
(1-C_{\beta,i})\eta(x_i)(1-\eta(x_i)){\E}_{s}\Bigl[\Bigl|\frac{\PP(\Sigma^{(-i)}=s\,|\,\feats)-\PP(\Sigma^{(-i)}=s-1\,|\,\feats)}{\PP(\Sigma=s\,|\,\feats)} \Bigl|\,|\,\feats\Bigl]~,
\end{align*}
since 
\[
\eta(x_i)(1-\eta(x_i)){\E}_{s}\Bigl[\Bigl|\frac{\PP(\Sigma^{(-i)}=s\,|\,\feats)-\PP(\Sigma^{(-i)}=s-1\,|\,\feats)}{\PP(\Sigma=s\,|\,\feats)} \Bigl|\,|\,\feats\Bigl] \leq 1~.
\]
for every $i$ and every realization $\feats$.

On the other hand, a closer look at term 
\[
(1-C_{\beta,i}){\E}_{s}\Bigl[\Bigl|\frac{\PP(\Sigma^{(-i)}=s\,|\,\feats)-\PP(\Sigma^{(-i)}=s-1\,|\,\feats)}{\PP(\Sigma=s\,|\,\feats)} \Bigl|\,|\,\feats\Bigl]~
\]
reveals that we are in a position to apply Lemma \ref{al:boring2}. This allows us to conclude that this expectation is upper bounded by\footnote
{
The condition $\sum_{i=1}^k \eta_i(1-\eta_i) \geq 1$ therein is implied by $C_{\beta,i}(\feats^{(-i)}) = 0$, which is equivalent to $(k-1)\mu \geq 1+\beta$. This, in turn, given our choice of $\beta$, will read as $k = \Omega(1/\mu)$.
}
\[
\frac{9(1-C_{\beta,i})}{\sqrt{\sum_{j:j\neq i,j=1}^k \eta (x_j) (1-\eta (x_j))}}
\]
for every realization of $\feats$. Using the definition of $C_{\beta,i}$ the above can be further upper bounded by
\[
\frac{9}{\sqrt{(k-1)\mu-\beta}}~.
\]
Putting together, we have obtained
\begin{align*}
f_1(\feats) - f_2(\feats)
&\leq
\frac{1}{k}\sum_{i=1}^k
\left( C_{\beta,i} + 
\frac{9\eta(x_i)(1-\eta(x_i))}{\sqrt{(k-1)\mu-\beta}}\right)\\
&=
\frac{1}{k}\sum_{i=1}^k C_{\beta,i} + \frac{9\sum_{i=1}^k \eta(x_i)(1-\eta(x_i))}{k\sqrt{(k-1)\mu-\beta}}
\end{align*}
for every $\feats$. Now, from the standard Bernstein inequality,\footnote
{
Note that we are upper bounding the second moments of the random variables $\eta(x_i)(1-\eta(x_i))$ with their first moment $\mu$, since the variables $\eta(x_i)(1-\eta(x_i))$ are in $[0,1/4]$.
}
\[
{\PP}_{\feats\sim\dd_{\fs}} 
\left(
C_{\beta,i} = 0\,\,\forall\, i
\right) 
\geq 1 - k\,e^{-\frac{\beta^2}{2k\mu+\beta/6}}
\]
and
\[
{\PP}_{\feats\sim\dd_{\fs}} 
\left(
\sum_{i=1}^k \eta(x_i)(1-\eta(x_i))
\leq k\mu + \beta
\right) 
\geq 1 - e^{-\frac{\beta^2}{2k\mu+\beta/6}}
\]
for every $\beta > 0$. This yields
\begin{equation}\label{e:truebound}
{\PP}_{\feats\sim\dd_{\fs}} \left(f_1(\feats) - f_2(\feats)
>
\frac{9(k\mu+\beta)}{k\sqrt{(k-1)\mu-\beta}}\right)
\leq 
(k+1)\,e^{-\frac{\beta^2}{2k\mu+\beta/6}}
\end{equation}
holding for $0 \leq \beta < (k-1)\mu$.

One can easily see that choosing $\beta = (k-1)\mu/2$ yields, when $k = \Omega(1/\mu)$,
\[
{\PP}_{\feats\sim\dd_{\fs}} \left(f_1(\feats) - f_2(\feats)
=
\Omega\left(\sqrt{\frac{\mu}{k}}\right)\right)
\leq 
k\,e^{-\Theta(k\mu)}~,
\]
which concludes the proof.
\end{proof}

\begin{remark}
The analysis contained above also allows us to provide bounds on the additive advantage that are fully conditional on $\feats = (x_1,\ldots, x_m)$. For instance, combining Lemma \ref{al:boring2} with Eq. (\ref{e:eff_diff}), one can easily derive a bound of the form
\[
f_1(\feats) - f_2(\feats)
=
O\left(
\frac{1}{k}\sum_{i=1}^k \frac{\eta(x_i)(1-\eta(x_i))}{\sqrt{\sum_{j:j\neq i} \eta(x_j)(1-\eta(x_j))}}\right)~.
\]
Then, in order for this bound to be of the form $1/\sqrt{k}$, we need to make further assumptions on the function $x \rightarrow \PP(y=1 | x)$, the most obvious one being $\PP(y=1 | x) \in (0,1)$ (bounded away from 0 and 1) for all $x$.
\end{remark}

\subsection{The advantage of RR is independent of the number of examples}\label{pf:rr_k_vs_1}
\begin{restatable}{lemma}{lemRRKVSOne}\label{lem:rr_k_vs_1} 
  For all data distributions \dd, all $\feat_i$, and all datasets $S$ of size $m$, $\IADV(\PETRR, \dd, \feat_i)$ is independent of $m$, and so is $\ADV(\PETRR, \dd)$.
\end{restatable}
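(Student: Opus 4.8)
The plan is to exploit that $\PETRR$ acts on each coordinate independently, so that whatever an adversary learns about a target label $y_i$ from the \emph{entire} released pair $(\feats,\tilde\labs)$ is already contained in the single noisy bit $\tilde y_i$; this quantity involves no $m$, and everything else is i.i.d.\ bookkeeping. Concretely, since $S$ is drawn i.i.d.\ from $\dd$ and $\PETRR$ flips each $y_j$ independently with probability $\pi$, the joint law of $(\feats,\labs,\tilde\labs)$ is the product $\prod_{j=1}^m \cD_\fs(x_j)\,\cD_{\ls|x_j}(y_j)\,\PP(\tilde y_j\mid y_j)$. Hence the triple $(x_i,y_i,\tilde y_i)$ is independent of $\{(x_j,y_j,\tilde y_j)\}_{j\neq i}$, and in particular $y_i$ is conditionally independent of $(\feats^{(-i)},\tilde\labs^{(-i)})$ given $(x_i,\tilde y_i)$, so that for every $b\in\{0,1\}$ and almost every $(\feats,\tilde\labs)$,
\[
\PP(y_i = b \mid \feats, \tilde\labs) = \PP(y_i = b \mid x_i, \tilde y_i).
\]

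For $\IADV$: by Lemma~\ref{l:f1f2}, the supremum over informed adversaries of $\ieau_i(\cdot,\PETRR,\dd,x_i)$ is attained by the coordinatewise Bayes-optimal predictor and equals $\E\bigl[\max_b\PP(y_i=b\mid\feats,\tilde\labs)\,\big|\,x_i\bigr]$; by the display above this equals $\E_{\tilde y_i\mid x_i}\bigl[\max_b\PP(y_i=b\mid x_i,\tilde y_i)\bigr]$, whose only randomness is $y_i\sim\cD_{\ls|x_i}$ followed by one RR coin flip, hence independent of $m$. The uninformed term equals $\max\{\eta(x_i),1-\eta(x_i)\}$, also independent of $m$. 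Subtracting gives the claim for $\IADV$, and along the way reproduces the expression of Theorem~\ref{thm:rradv_short}.

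For $\ADV$: using exchangeability of $(x_j,y_j,\tilde y_j)$ across $j$, the optimal informed adversary for $\eau$ predicts each coordinate by its Bayes rule, so $\sup_{\adv_\text{informed}}\eau(\adv_\text{informed},\PETRR,\dd)=\E_{\feats,\labs,\text{coins}}\bigl[\max_b\PP(y_1=b\mid\feats,\tilde\labs)\bigr]=\E_{x_1}\bigl[\E_{\tilde y_1\mid x_1}[\max_b\PP(y_1=b\mid x_1,\tilde y_1)]\bigr]$, and likewise $\sup_{\adv_\text{uninformed}}\eau(\adv_\text{uninformed},\PETuninformed,\dd)=\E_{x_1}[\max\{\eta(x_1),1-\eta(x_1)\}]$. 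Therefore $\ADV(\PETRR,\dd)=\E_{x_1\sim\cD_\fs}[\IADV(\PETRR,\dd,x_1)]$, which inherits $m$-independence from the previous paragraph. This is exactly the i.i.d.\ reduction already used in \eqref{e:advf1f2}.

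The one nontrivial point is the factorization step: one must verify that conditioning on the \emph{full} visible vector $(\feats,\tilde\labs)$---not merely on $(x_i,\tilde y_i)$---contributes nothing further about $y_i$. This is precisely where both the i.i.d.\ sampling of $S$ and the product, per-coordinate structure of $\PETRR$ are used, and it is exactly what fails for $\PETLLP$, where a bag proportion $\alpha$ couples the labels inside a bag. Everything else is routine manipulation of the definitions of $\eau$, $\ieau$, $\ADV$, and $\IADV$, together with the optimal-adversary identity isolated in Lemma~\ref{l:f1f2}.
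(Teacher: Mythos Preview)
Your proof is correct and follows essentially the same approach as the paper: both arguments hinge on the factorization $\PP(y_i=b\mid\feats,\tilde\labs)=\PP(y_i=b\mid x_i,\tilde y_i)$, which comes from the per-coordinate product structure of the i.i.d.\ data and the RR mechanism, and then invoke Lemma~\ref{l:f1f2} to identify the optimal adversary and conclude $m$-independence. Your treatment is slightly more explicit about the joint-law factorization and the contrast with $\PETLLP$, but the substance is the same.
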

\begin{proof}
Fix any $m$, any $\feat_i$, and any data distribution \dd. Then, by Lemma~\ref{l:f1f2},
\begin{align}\label{eq:adv_k}
&\IADV(\PETRR, \dd, \feat_i) 
        = \min\{\eta(\feat_i), 1-\eta(\feat_i)\}\\ 
        &\quad- \E_{y_i \sim \dd_{\ls|\feat_i}}\biggl[
        \min \biggl\{
        \PP_{(\feats^{(-i)},\labs^{(-i)})}(y_i = 1 \mid \feats, \PETRR(\feats, \labs)),
        \PP_{(\feats^{(-i)},\labs^{(-i)})}(y_i = 0 \mid \feats, \PETRR(\feats, \labs))
        \biggr\}
        \Bigr|\, \feat_i \biggr]~. \nonumber
\end{align}

The first term already has no dependence on $m$. We focus on the second term. 

The key to this proof is that a noisy label $RR(\lab_i) = \tilde{\lab}_i$ is independent of the other true labels $\lab_j, j\neq i$. Thus, we have 
$$
\PP(\lab_i = 1 \mid \feats, \PETRR(\feats, \labs)) = \PP(\lab_i = 1 \mid \feats, \PETRR(\feats, \labs)_i) = \PP(\lab_i = 1 \mid \feat_i, \PETRR(\feat_i, \lab_i))~.
$$

Applying this to \eqref{eq:adv_k}, we have
\begin{align*}
        &\E_{y_i \sim \dd_{\ls|\feat_i}}\biggl[
        \min \biggl\{
        \PP_{(\feats^{(-i)},\labs^{(-i)})}(y_i = 1 \mid \feats, \PETRR(\feats, \labs)),
        \PP_{(\feats^{(-i)},\labs^{(-i)})}(y_i = 0 \mid \feats, \PETRR(\feats, \labs))
        \biggr\}
        \mid \feat_i \biggr] \\
        & = \E_{y_i \sim \dd_{\ls|\feat_i}}\biggl[
        \min \bigl\{
        \PP(y_i = 1 \mid \feat_i, \PETRR(\feat_i, \lab_i)),
        \PP(y_i = 0 \mid \feat_i, \PETRR(\feat_i, \lab_i))
        \bigr\}
        \mid \feat_i \biggr].
\end{align*}
Plugging this back into \eqref{eq:adv_k} gives us $\ADV_1(\PETRR,\dd)$ as desired.
Taking the expectation over $\feat_i \sim \dd_{\cX}$ completes the proof of the second claim.
\end{proof}

\subsection{Proof of Theorem \ref{thm:rradv_short}}\label{thm:RR_extended}
We prove a more verbose version of Theorem~\ref{thm:rradv_short} which includes the optimal adversary:
\begin{theorem}\label{thm:rradv} For any data distribution \dd, 
the individual expected attack advantage for randomized response with privacy parameter $\pi = \frac{1}{1+e^\eps}$ is
\begin{align*}
\IADV(\PETRR, \dd, \feat_1) = 
\bigl(\min \{ \eta(\feat_1),  1 -\eta(\feat_1) \} - \pi\bigl)\cdot \I\{ \eta(\feat_1) \in [\pi, 1-\pi] \} .
\end{align*}
The optimal adversary maximizing $\IADV_1(\PETRR, \dd, \feat_1)$ is
\[
\adv^*(\feat_1, \tilde{\lab_1}) 
  = 
  \begin{cases} 
  1, & \text{if}~ \eta(\feat_1) > 1-\pi \\
  0, & \text{if}~ \eta(\feat_1) < \pi \\ 
  \tilde{\lab_1}, & \text{otherwise}~. 
  \end{cases}
\]
\end{theorem}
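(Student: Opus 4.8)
The plan is to reduce to the single-example case, apply Bayes' rule after one round of label flipping, and then evaluate the resulting $\min$-expectation in closed form. By Lemma~\ref{lem:rr_k_vs_1} the quantity $\IADV(\PETRR,\dd,\feat_1)$ is independent of $m$, so I take $m=1$ and work with the pair $(\feat_1,\lab_1)$ and its flipped version $\tilde\lab_1=\PETRR(\feat_1,\lab_1)$. Writing $\eta_1=\eta(\feat_1)$ and applying Lemma~\ref{l:f1f2} with this $\PET$ gives
\[
\IADV(\PETRR,\dd,\feat_1)=\min\{\eta_1,1-\eta_1\}-\E_{\tilde\lab_1}\Bigl[\min_{b\in\{0,1\}}\PP(\lab_1=b\mid\feat_1,\tilde\lab_1)\Bigr],
\]
where the outer expectation is over the law of $\tilde\lab_1$ given $\feat_1$: the expectation over $\lab_1\sim\dd_{\ls|\feat_1}$ and the coins of $\PETRR$ in Lemma~\ref{l:f1f2} collapses to this because $\PP(\lab_1=b\mid\feat_1,\tilde\lab_1)$ depends only on $(\feat_1,\tilde\lab_1)$. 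The same lemma also identifies the optimal informed adversary as the MAP rule $(\feat_1,\tilde\lab_1)\mapsto\argmax_b\PP(\lab_1=b\mid\feat_1,\tilde\lab_1)$, which I will make explicit at the end.

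The key simplification is to pass from posteriors to \emph{joint} probabilities, which avoids Bayes-rule denominators. For each $t\in\{0,1\}$, since $\PP(\tilde\lab_1=t\mid\feat_1)\ge0$ is a common factor,
\[
\PP(\tilde\lab_1=t\mid\feat_1)\,\min_{b}\PP(\lab_1=b\mid\feat_1,\tilde\lab_1=t)=\min_{b}\PP(\lab_1=b,\ \tilde\lab_1=t\mid\feat_1),
\]
and the RR mechanism yields $\PP(\lab_1=1,\tilde\lab_1=1\mid\feat_1)=\eta_1(1-\pi)$, $\PP(\lab_1=0,\tilde\lab_1=1\mid\feat_1)=(1-\eta_1)\pi$, $\PP(\lab_1=1,\tilde\lab_1=0\mid\feat_1)=\eta_1\pi$, and $\PP(\lab_1=0,\tilde\lab_1=0\mid\feat_1)=(1-\eta_1)(1-\pi)$. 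Summing over $t$,
\[
\E_{\tilde\lab_1}\Bigl[\min_{b}\PP(\lab_1=b\mid\feat_1,\tilde\lab_1)\Bigr]=\min\{\eta_1(1-\pi),(1-\eta_1)\pi\}+\min\{\eta_1\pi,(1-\eta_1)(1-\pi)\}.
\]

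Next I would evaluate this sum using that $\eps\ge0$ forces $\pi\le\tfrac12$, together with the elementary equivalences $\eta_1(1-\pi)\ge(1-\eta_1)\pi\iff\eta_1\ge\pi$ and $\eta_1\pi\ge(1-\eta_1)(1-\pi)\iff\eta_1\ge1-\pi$. A short split on whether $\eta_1<\pi$, $\eta_1\in[\pi,1-\pi]$, or $\eta_1>1-\pi$ shows the sum equals $\eta_1(1-\pi)+\eta_1\pi=\eta_1$, or $(1-\eta_1)\pi+\eta_1\pi=\pi$, or $(1-\eta_1)\pi+(1-\eta_1)(1-\pi)=1-\eta_1$ respectively — i.e., in all cases $\min\bigl\{\pi,\min\{\eta_1,1-\eta_1\}\bigr\}$. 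Hence
\[
\IADV(\PETRR,\dd,\feat_1)=\min\{\eta_1,1-\eta_1\}-\min\bigl\{\pi,\min\{\eta_1,1-\eta_1\}\bigr\}=\max\bigl\{0,\ \min\{\eta_1,1-\eta_1\}-\pi\bigr\},
\]
and since $\min\{\eta_1,1-\eta_1\}\ge\pi\iff\eta_1\in[\pi,1-\pi]$, this is exactly the claimed formula. For the optimal adversary I evaluate the MAP rule with the same joint probabilities: predicting $1$ is optimal on $\tilde\lab_1=1$ iff $\eta_1(1-\pi)\ge(1-\eta_1)\pi$, i.e. $\eta_1\ge\pi$, and optimal on $\tilde\lab_1=0$ iff $\eta_1\pi\ge(1-\eta_1)(1-\pi)$, i.e. $\eta_1\ge1-\pi$; so the predictor outputs $1$ always when $\eta_1>1-\pi$, $0$ always when $\eta_1<\pi$, and $\tilde\lab_1$ otherwise (ties arise only on the boundary $\eta_1\in\{\pi,1-\pi\}$ and are broken toward $\tilde\lab_1$).

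There is no deep obstacle here — the argument is Bayes' rule plus bookkeeping. The two steps that warrant care are (i) recognizing that the expectation in Lemma~\ref{l:f1f2} is over the \emph{marginal} law of $\tilde\lab_1$, and (ii) the joint-probability rewrite, which is what makes the three cases telescope cleanly; carrying out the case analysis directly on the posterior fractions is equivalent but considerably more error-prone.
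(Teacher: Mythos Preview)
Your proof is correct and follows essentially the same approach as the paper: reduce to $m=1$, determine the Bayes-optimal decision via the likelihood ratio, and split into the three cases $\eta_1<\pi$, $\eta_1\in[\pi,1-\pi]$, $\eta_1>1-\pi$. The only organizational difference is that the paper first derives the optimal attacker and then computes its success probability directly (on $G(\pi)$ the attacker outputs $\tilde\lab_1$, which equals $\lab_1$ with probability $1-\pi$), whereas you invoke Lemma~\ref{l:f1f2} and evaluate the posterior $\min$-expectation via the joint-probability rewrite---a slightly cleaner bookkeeping device, but the same argument.
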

In words, upon receiving $x_1$, the optimal attacker $\adv^*$ predicts the associated label $y_1$ as the noisy label $\tilde y_1$ if the distribution $\dd$ contains enough uncertainty about $y_1$ (the condition $\eta(\feat_1) \in [\pi, 1-\pi]$ holds) given the extra noise level $\pi$ injected. Otherwise, it predicts deterministically (as $y_1$ is indeed close to being deterministic when $\eta(\feat_1) \notin [\pi, 1-\pi]$).
\begin{proof}
First we characterize the optimal attacker for which the individual expected advantage is maximal. The Bayes optimal decision which minimizes the loss $\mathbb{I} \{ A(\feat_1, \tilde{\lab_1}) \neq y \}$ conditioned on $x_1$ and $\tilde{\lab_1}$ is
\[
A' ( \feat_1, \tilde{\lab_1} ) = \mathbb{I} \{ \PP (\lab_1 = 1 \vert \feat_1, \tilde{\lab_1} )  > \PP (\lab_1 = 0 \vert \feat_1, \tilde{\lab_1} ) \}
\]
which can be written as
\begin{align}\label{eq:attacker_jedi}
1 < \frac{\PP (\lab_1=1 \vert \feat_1, \tilde{\lab_1} ) }{\PP (\lab_1=0 \vert \feat_1, \tilde{\lab_1} )} 
= 
\frac{\PP (\tilde{\lab_1} \vert \lab_1=1, \feat_1) \PP ( \lab_1=1 \vert \feat_1) }{\PP (\tilde{\lab_1} \vert \lab_1=0, \feat_1) \PP ( \lab_1=0 \vert \feat_1)}
\end{align}
Assume that $\tilde{\lab_1}=1$, in which case \eqref{eq:attacker_jedi} becomes
\begin{align}\label{eq:attacker_jedi_2}
\frac{\pi}{1-\pi} < \frac{\eta(\feat_1)}{1-\eta(\feat_1)}
\end{align}
which is true whenever $\eta(\feat_1)>\pi$, and on the other hand, if $\eta(\feat_1) < \pi$ then \eqref{eq:attacker_jedi_2} does not hold anymore, thus $A' ( \feat_1, \tilde{\lab_1} ) = 0$. A similar argument holds for $\tilde{\lab_1}=0$: The Bayes optimal decision $A' ( \feat_1, \tilde{\lab_1} ) = 1$ if and only if $\eta(\feat_1) > 1- \pi$. Putting together, and noting that $\pi \leq 1/2$, this implies that the optimal attack is
\[
A^* ( \feat_1, \tilde{\lab_1} ) 
  = 
  \begin{cases} 
  1, & \text{if}~ \eta(\feat_1) > 1-\pi \\
  0, & \text{if}~ \eta(\feat_1) < \pi \\ 
  \tilde{\lab_1}, & \text{otherwise}~. 
  \end{cases}
\]
To compute the reconstruction advantage of optimal attacker $A^*$, we may decompose the feature set into parts as 
\[
G(\pi) = \{ \feat_1\, :\, \eta(\feat_1) \in [\pi, 1-\pi] \}
\]
and its complement set $G^C(\pi)$. It is clear the advantage of the optimal attacker $A^*$ restricted to $G^C(\pi)$ is $0$ since $\tilde \lab_1$ gives no extra information on $y_1$ when
$\feat_1 \in G^C(\pi)$, that is, the value of $A^*(\feat_1,\tilde y_1)$ is independent of $\tilde y_1$ when $\feat_1 \in G^C(\pi)$. On the other hand, the advantage of $A^*$ for any $\feat_1 \in G(\pi)$ is
\[
(1-\pi) - (1 - \min \{ \eta(\feat_1),  1 -\eta(\feat_1) \}) = \min \{ \eta(\feat_1),  1 -\eta(\feat_1) \} - \pi \enspace ,
\]
since $\PETRR$ reveals the true label with probability $1-\pi$, while the attack utility (conditioned on $\feat_1$) of the uninformed attacker $1 - \min \{ \eta(\feat_1),  1 -\eta(\feat_1) \}$. This concludes the proof.
\end{proof}

\section{Proofs for Section \ref{s:conf_based}}\label{sa:proofs_conf_based}

\subsection{Proof of Theorem \ref{thm:conf_based}}

\begin{proof}
Consider the difference of log-odds ratio $I_{k,i}(\PET,\dd,\feats,z,x_i)$, where $z = \PETLLP(\feats,\labs) = \frac{1}{k}\sum_{i=1}^k y_i$ is the output of the aggregation PET. Since $\labs$ is independent of $\feats$, we can write
\begin{align*}
I_{1,0}(\PET,\dd,\feats,z,x_i) = I(z)
&= 
\ln \frac{\PP(y_1=1\,|\,z)}{\PP(y_1=0\,|\,z)} - \ln \frac{\PP(y_1=1)}{\PP(y_1=0)}\\
&=
\ln \frac{z}{1-z} - \ln \frac{p}{1-p}\\
&=
\ln \frac{z(1-p)}{(1-z)p}~,
\end{align*}
where we used the fact that, since $kz$ is a binomial random variable, $\PP(y_1=1\,|\,z) = z$, independent of $p$. From the standard Bernstein inequality we have, with probability $\geq 1-\delta$,
\[
p - B \leq z \leq p + B~, 
\]
where 
\[
B = B(k,p,\delta) = \sqrt{\frac{2p(1-p)\ln(1/\delta)}{k}} + \frac{2}{3k}\ln(1/\delta)~.
\]
Since $z \rightarrow \ln \frac{z}{1-z}$ is monotonically increasing in $z \in [0,1]$, this yields
\[
\ln \left(1-\frac{B}{(1-p+B)p}\right) = \ln \frac{(p-B)(1-p)}{(1-p+B)p} \leq I(z) \leq \ln \frac{(p+B)(1-p)}{(1-p-B)p} = \ln \left(1+\frac{B}{(1-p-B)p}\right)
\]
with the same probability. The condition $k \geq \frac{32\ln(1/\delta)}{p(1-p)}$ implies $2B \leq p-p^2$, which in turn implies $\frac{B}{(1-p+B)p} \leq \frac{1}{2}$, as well as $\frac{B}{(1-p-B)p} > 0$.
We further bound the right-most side through $\ln (1+x) \leq x$ for all $x \geq 0$, and the left-most side via $\ln(1-x) \geq - x (\ln 2)$ for $x \in [0,1/2]$. This gives
\[
-\frac{B\ln 2}{(1-p+B)p} \leq I(z) \leq  \frac{B}{(1-p-B)p}~,
\]
which, after further overapproximations, allows us to write
\[
|I(z)| \leq \frac{B\ln 2}{(1-p-B)p}~.
\]
Next, the condition $2B \leq p-p^2$ implies $(1-p-B) \geq (1-p)/2$ so that, overall,
\[
|I(z)| \leq \frac{2B(k,p,\delta)(\ln 2)}{p(1-p)} = O\left( \sqrt{\frac{\ln(1/\delta)}{p(1-p)k}}\right)
\]
with probability at least $1-\delta$. This concludes the proof.
\end{proof}

\section{Experimental Setup for Section~\ref{sec:Experiments} and Further Experimental Results}\label{app:experiment_details}

\paragraph{Datasets.}
We conduct our experiments on the following datasets:

\textit{KDD2012:}
We use the click prediction data from the KDD Cup 2012, Track 2 \cite{kddcup2012-track2} with the feature processing performed in \cite{juan2016field}.
The learning task for this data is to predict the click through rate for an advertisement based on a number of features related to the advertisement, the page that it appears on, and the user viewing it.
There are 11 categorical features that are each one-hot encoded, resulting in a sparse feature vector with 11 non-zero entries in 54,686,452 dimensions.
The label for the example is 1 if the ad was clicked, and 0 otherwise.

\textit{Higgs:}
We also use the Higgs dataset \cite{higgs_dataset}, which is a binary classification dataset where the goal is to distinguish between a signal process that produces Higgs bosons, and a background process which does not.
The dataset has 28 real-valued features, which are a mix of kinematic properties measured by particle detectors and hand-crafted high level features designed by physicists.
The dataset consists of 11,000,000 examples produced via Monte Carlo simulation.
We use the first 10,000 examples as our testing data, and the remaining examples for training.

\paragraph{Model Architectures.} 
Next, we describe the models we use for each dataset.
We use the same model architecture regardless of which PET has been used to privatize the labels.

\textit{KDD2012 Model:}
The model we use for KDD2012 is a deep embedding network.
We reduce the dimension of each example from 54,686,452 dimensions to 500,000 dimensions by hashing feature indices.
Each hashed feature index is associated with a learned embedding vector in $\R^{50}$, and the representation vector for an example is the sum of the learned embeddings for each of its non-zero hashed feature indices.
This representation vector is passed through two dense layers with 100 and 50 units, respectively, and ReLU activation functions.
The final output is a single unit with sigmoid activation that is interpreted as the click probability for each example.

\textit{Higgs Model:}
For the Higgs data, we do no feature pre-processing.
We use a fully connected feed forward neural network with 4 hidden layers all with 300 units and ReLU activation.
The final output is a single unit with sigmoid activation that is interpreted as the probability that the example has the positive class.

\paragraph{Training Setup.}
When training a model on the output of any PET, we always use minibatch gradient descent together with the Adam optimizer \cite{adam_optimizer}.
PET-specific training details are presented in the following paragraphs.

\paragraph{Training with Randomized Response.} 
In our experiments we found that the bias in randomized response can affect prediction accuracy for the model trained on these labels, so we post-process the RR output to remove the bias introduced (see for example \cite{ghazi2021deep}, Equation (7) therein). 

Let $f: \cY \to \R$ be any function (for example the gradient) of a label, and let $y \in \{0, 1\}$. For plain RR, 
$$\E[f(RR(y))]  = \frac{e^\eps -1}{e^\eps+1} \cdot f(y)  + \frac{1}{e^\eps +1} \cdot (f(0) + f(1)).$$
Thus, 
$$f(y) = \E \left[\frac{e^\eps +1}{e^\eps -1} \cdot f(RR(y)) - \frac{1}{e^\eps -1} \cdot (f(0) + f(1))\right].$$

Therefore, we can post-process the output of RR to obtain an unbiased estimate of $f(y)$ for any function $f$. This has no effect on the advantage, however it does help the accuracy substantially. 
We use this debiasing procedure to obtain an unbiased estimate of the binary cross-entropy loss of the model, and perform minibatch gradient descent on the estimated loss.
In all of our figures, we use ``RR'' to denote this debiased version of the mechanism.

\paragraph{Training with LLP.} 
For LLP we use minibatch gradient descent to optimize the Empirical Proportion Risk defined in \Cref{sec:prelims}.
We use the binary cross-entropy loss to compare the predicted label proportion against the observed label proportion.
As discussed in \Cref{app:further}, since the binary cross-entropy loss is a proper loss, the minimizer of the population-level proportion risk for cross-entropy loss is also a minimizer of the population-level cross-entropy loss, as long as the Bayes optimal classifier can be represented by the model architecture.

\paragraph{Training with LLP+Lap.}
LLP+Lap is parametrized by two parameters, $k \in \{1,2,\ldots\}$ and $\epsilon > 0$. 
For each bag $\bag_i$ the mechanism discloses $\alpha_i + Z_i$, where $\alpha_i$ is the label proportion in that bag, and $Z_i \sim \Lap(\frac{1}{k\epsilon})$, is an independent zero-mean Laplace random variable with parameter $\frac{1}{k\epsilon}$, that is, with variance $\frac{2}{k^2\epsilon^2}$.
\Cref{thm:propmatch-grad} shows that for any minibatch of data, the expected gradient of the proportion matching risk is unaffected by adding mean-zero noise to the label proportion.
Therefore, using Proportion Matching to train with LLP+Lap is similar to training with LLP, except the variance of each gradient step is increased.

\paragraph{Training with LLP+Geom.}
LLP+Geom also has parameters, $k \in \{1,2,\ldots\}$ and $\epsilon > 0$, but $Z_i$ is replaced by a two-sided Geometric random variable, where $Z_i = \frac{Z^+_i-Z^-_i}{k}$, and $Z^+_i$ and $Z^-_i$ are geometric distribution with probability of success $1-e^{-\epsilon}$. 
The value $\alpha_i+Z_i$ is then clipped to $[0,1]$. 
It is not hard to see that, for every given $\epsilon$, LLP+Geom with parameters $k=1$ and $\epsilon$ coincides with RR with parameter $\epsilon$.

\newcommand \alphaclip {\alpha^\text{(clip)}}
\newcommand \alphanoise {\alpha^\text{(noise)}}
Unlike LLP+Laplace, the clipped noisy label proportion is not an unbiased estimate of the bag's true label proportion.
We post-process the output of LLP+Geom to obtain unbiased estimates of the bag's label proportion.
In particular, let $\alpha_i$ be the true label proportion for a bag, $\alphanoise_i = \alpha_i + Z_i$ be the unclipped noisy proportion, and $\alphaclip_i = \operatorname{clip}(\alphanoise_i, [0,1])$ be the clipped noisy proportion.
Then, using the memoryless property of the Geometric distribution, we have that
\[
\E[\alphanoise_i \mid \alphaclip_i] =
\begin{cases}
    \alphaclip_i & \text{if $\alphaclip_i \in (0,1)$} \\
    -\frac{1}{e^\epsilon - 1} & \text{if $\alphaclip_i = 0$} \\
    1+\frac{1}{e^\epsilon - 1} & \text{if $\alphaclip_i = 1$}.
\end{cases}
\]
When training with LLP+Geom, we replace the clipped label proportion by $\E[\alphanoise_i \mid \alphaclip_i]$ before computing the proportion matching loss.
Since $\alphanoise_i$ is an unbiased estimate of $\alpha_i$, so is the conditional expectation.
After this debiasing, \Cref{thm:propmatch-grad} guarantees that the expected gradient of the proportion matching risk is the same as if the true label proportion had been used.
It follows that using Proportion Matching to train with LLP+Geom with proportion debiasing is similar to training with LLP except with higher variance.

\paragraph{AUC vs Advantage Experimental Setup.}
For all PETs, we train models to minimize the binary cross-entropy loss.
For RR, we use privacy parameters $\epsilon$ in $\{2^{-4}, 2^{-3}, \dots, 2^{5}\}$, for LLP we use bag sizes $k$ in $\{2^0, 2^1, \dots, 2^9\}$, and for LLP+Laplace and LLP+Geom we use all combinations of $\epsilon$ and $k$ from the same sets.
For every PET and every value of their privacy parameters, we train the model with each learning rate in $\{10^{-6}, 5 \cdot 10^{-6}, 10^{-5}, 10^{-4}, 5 \cdot 10^{-4}, 10^{-3}, 5 \cdot 10^{-3}, 10^{-2}\}$.
Finally, for each combination of privacy parameter and learning rate, we train the model 10 times (each trial corresponds to different model initialization and data shuffling, and RR noise).
For each privacy parameter, we report the mean AUC of the learning rate with the highest mean AUC.

\paragraph{Estimating Attack Advantage.}
The attack advantage of the optimal attacker is estimated and reported for every PET with various parameters. The optimal attacker and its advantage have been characterized for RR as well as for LLP, in  \Cref{l:f1f2} and \Cref{thm:rradv}, respectively. Crucially, notice that the advantage can be easily calculated with knowledge of the parameters of each PET (label flipping probability or the bag size), and the class conditional distribution $\eta( x)$ --- or at least an accurate estimate of it. We considered two estimates of the class conditionals: one based on Deep Neural Networks (DNN) as described above, and one based of $k$-Nearest Neighbors ($k$NN). The $k$NN estimator can produce accurate estimates for class conditional distributions in the large-scale data regimes, due to its strong consistency properties that has been studied in the 70s and 80s, and that became the part of Machine Learning folklore. DNNs can also produce accurate estimates if the training process and architecture is tuned carefully enough. We found that these two approaches result in very similar results on our large scale benchmark datasets, therefore we relied on the estimate provided by the DNN, and we estimated the attack advantage of the optimal attacker based on the output score of the DNN.

\begin{figure}[t]
\vspace{-0.0in}
\centering
\begin{minipage}{1.0\textwidth}
\vspace{-0.58in}
	\begin{subfigure}{}
	\includegraphics[width=1.0\textwidth]{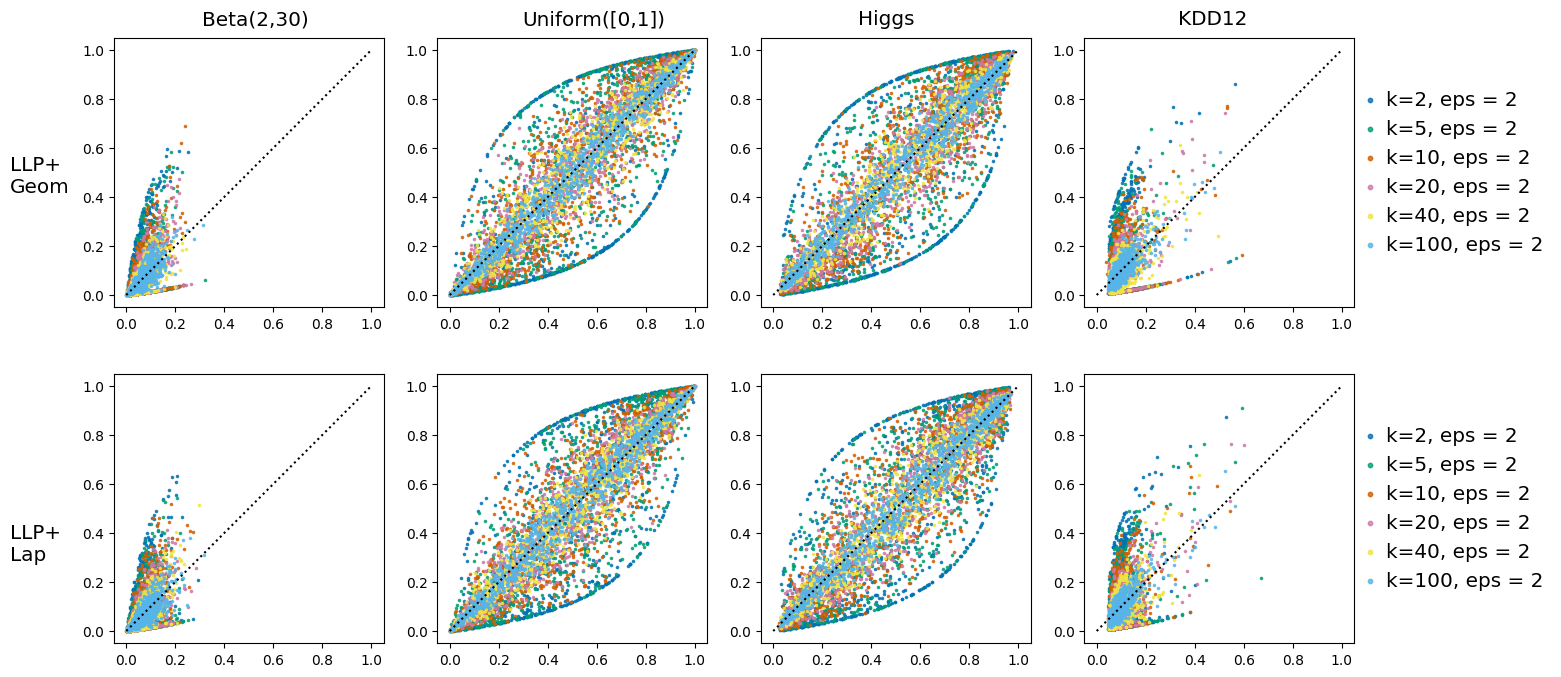}
		\end{subfigure}%
	\end{minipage}
\caption{Prior-posterior scatter plots for LLP+Geom and LLP+Lap on two synthetic datasets and the two real-world datasets. The two synthetic datasets have been generated by drawing $\eta(x)$ from a Beta(2,30) distribution and a uniform distribution on $[0,1]$. The colors of the dots correspond to different parameter values for the PETs.
For each bag size $k$ and distribution, we did 1000 independent runs. The further a point is from the $y=x$ dotted line, the more is revealed about its label as a result of the PET.
\label{f:scatter_plots_appendix}
}
\end{figure}

\begin{figure}
\includegraphics[width=0.48\textwidth]{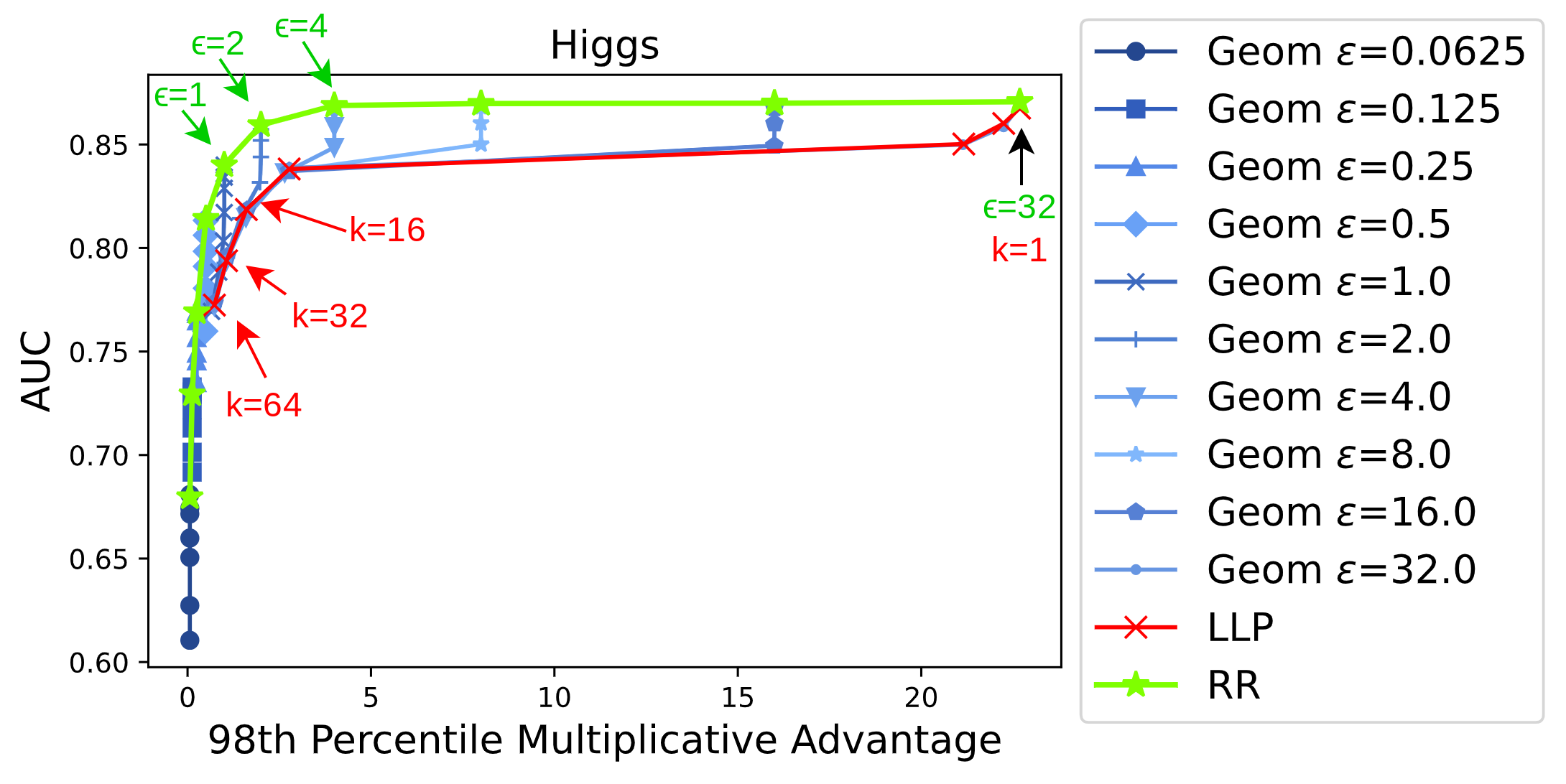} 
\includegraphics[width=0.48\textwidth]{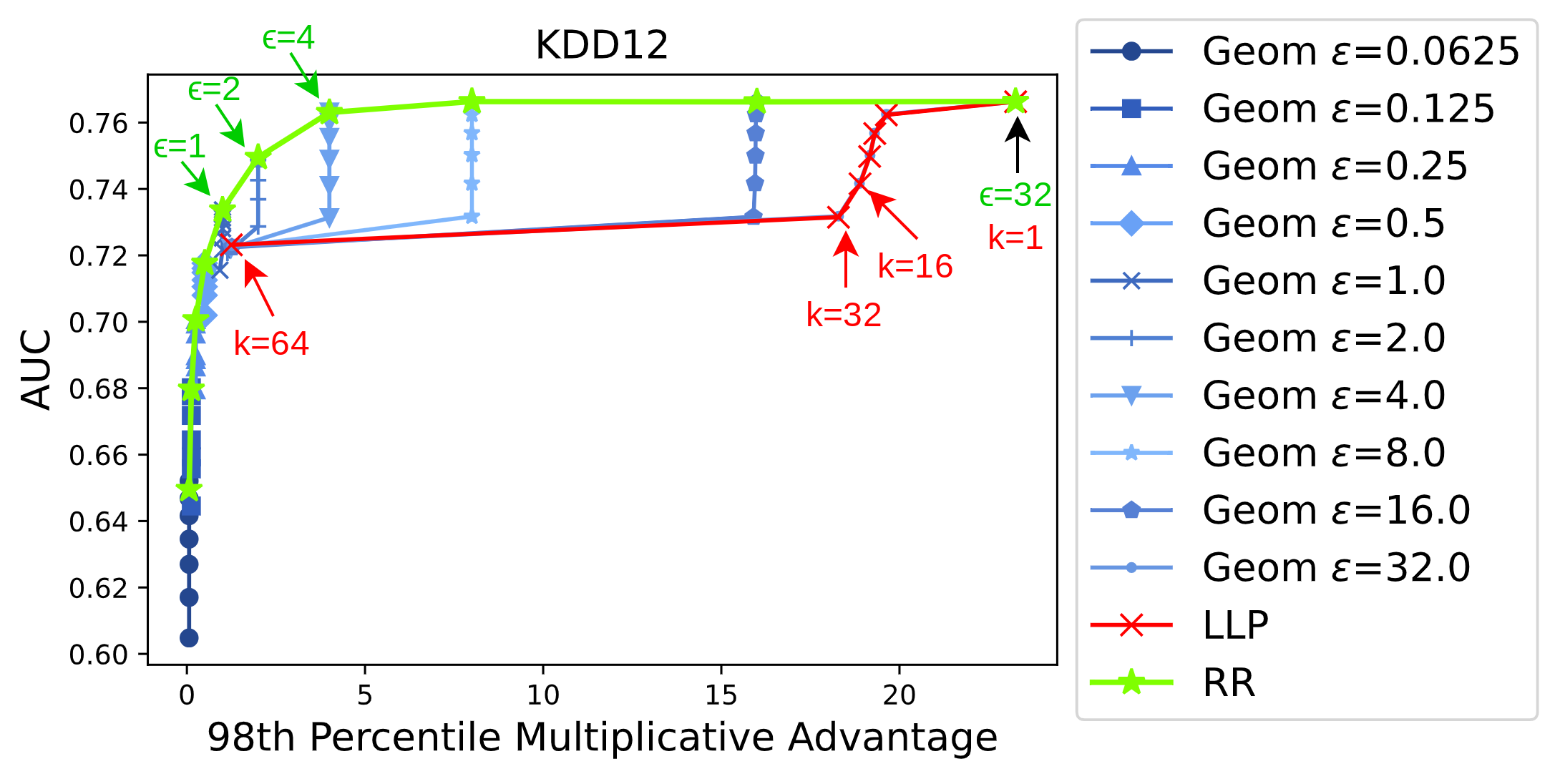}\\
\includegraphics[width=0.48\textwidth]{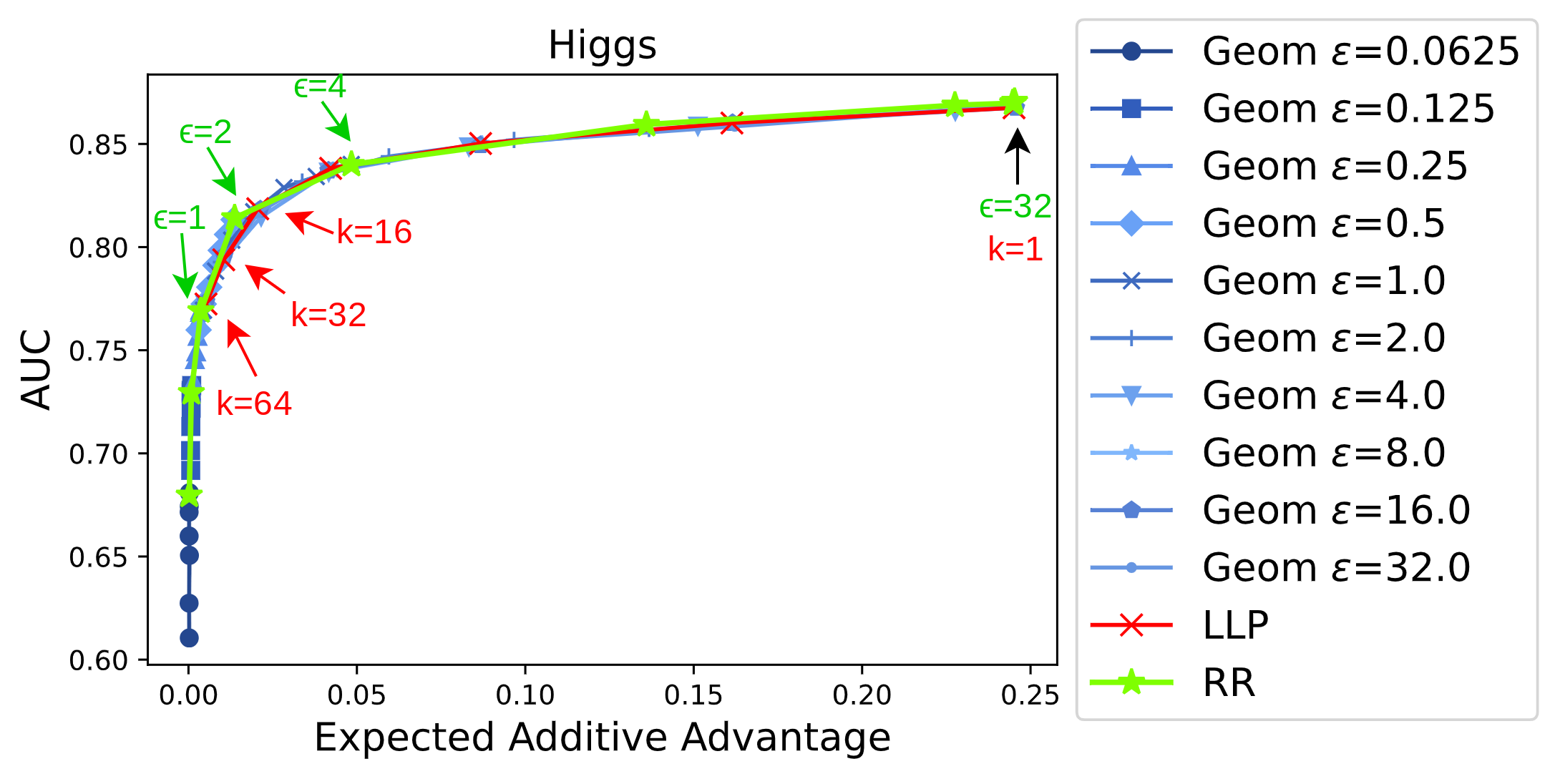} 
\includegraphics[width=0.48\textwidth]{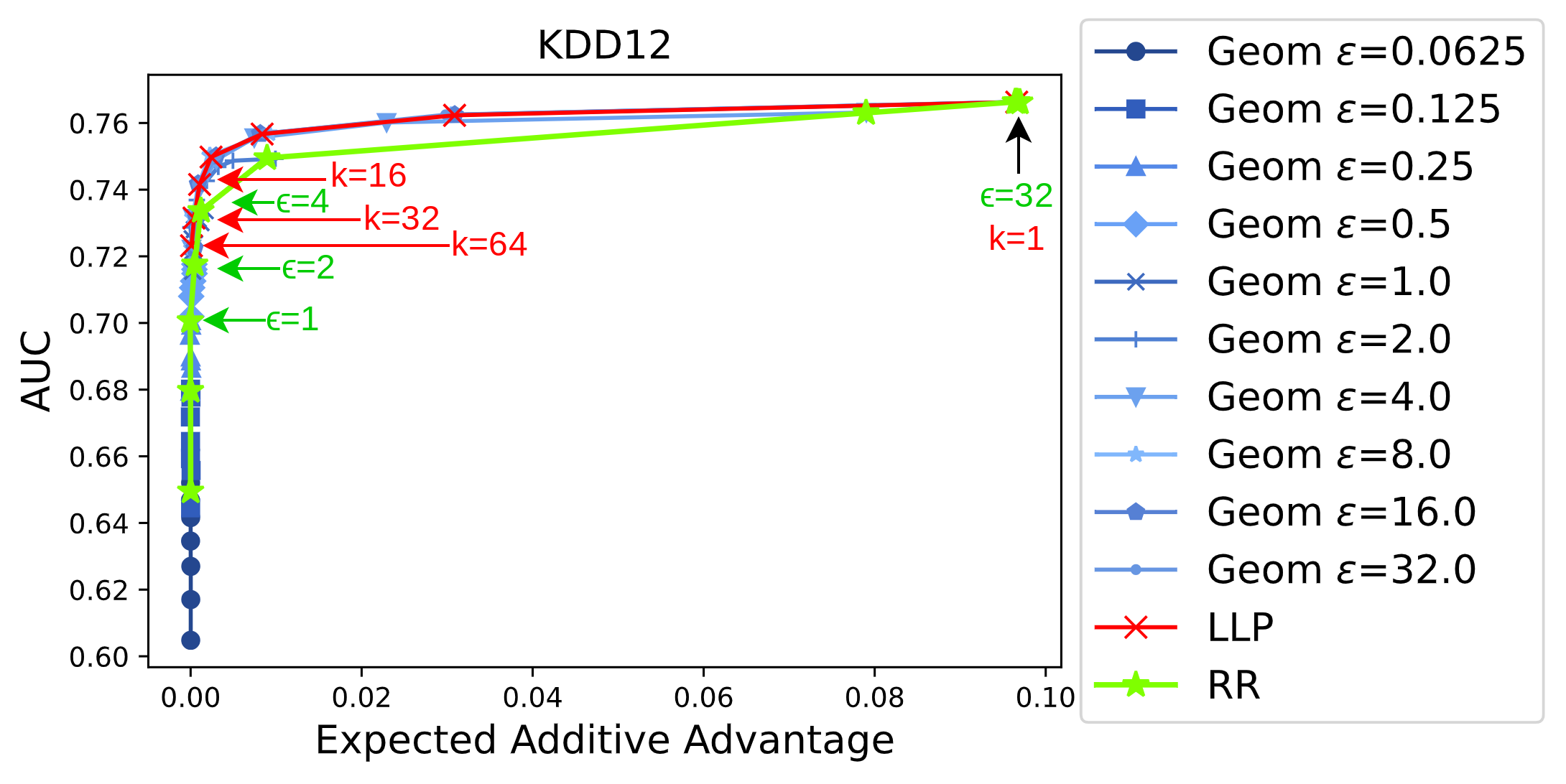}
\vspace{-1mm}
\caption{Privacy vs utility tradeoff curves for the various PETs on the Higgs (left) and KDD12 (right) datasets. 
Utility is measured by AUC on test set, while privacy is either the additive measure (bottom row) or the 98th-percentile of the multiplicative measure (so as to rule out the infinite multiplicative advantage cases that can occur for LLP).
Each point corresponds to a setting of the privacy parameter for the PET ($\epsilon$ for RR, $k$ for LLP, and both for LLP+Geom).
The $x$-coordinate is the advantage (either additive or multiplicative) value for that PET, while the $y$-coordinate is the test AUC of a model trained from the output of that PET. The AUC of the model trained without a PET roughly corresponds to the top value achieved by these curves.
This plot is similar to \Cref{fig:privacy_vs_utility} except that the curves for the LLP+Geom PET correspond to a fixed value of $\epsilon$, rather than a fixed value of $k$.}\label{fig:privacy_vs_utility_grouped_by_k}
\end{figure}

\paragraph{Further experimental results omitted from the main text.}
Figure~\ref{f:scatter_plots_appendix} contains scatter plots for LLP+Lap in addition to LLP+Geom for comparison. The two groups of scatter plots look very similar on each dataset. 
\Cref{fig:privacy_vs_utility_grouped_by_k} shows the privacy-vs-utility plots (the data is identical to \Cref{fig:privacy_vs_utility}) except in this case, the curves for the LLP+Geom PET correspond to a fixed value of $\epsilon$ with the bag size $k$ varying.

\paragraph{Computational Resources.} We conduct our experiments on a cluster of virtual machines each equipped with a p100 GPU, 16 core CPU, and 16GB of memory.
Each training run on Higgs takes approximately 10 minutes, and each training run on KDD12 takes approximately 20 hours.

\section{\propmatch{} with Noisy Label Proportions}\label{app:further}
In this appendix we argue that the standard \propmatch{} algorithm is still an effective learning algorithm for the LLP+Lap PET (henceforth denoted $\PETDPLLP$).
Recall that \propmatch{} generally learns by using stochastic gradient descent (SGD) to minimize the empirical proportion matching loss.
Our result below shows that the expected value of the gradient of proportion matching loss is the same whether Laplace noise is added to the label proportion or not.
In other words, 
the expected trajectory of SGD for \propmatch{} when using $\PETDPLLP$ is the same for all values of $\epsilon$, including $\epsilon = \infty$, which corresponds to the special case of $\PETLLP$. In fact, the main way that decreasing the value of $\epsilon$ (resulting in stronger differential privacy guarantees) affects utility is the increased \emph{variance} of the gradients.
The following result characterizes how much the variance increases compared to the case where $\epsilon = \infty$ (or equivalently, compared to \propmatch{} run with $\PETLLP$). 

To analyze $\propmatch{}$, we require a few properties of the loss $\ell:\Rset \times \Rset \to \Rset$ and the noise distribution. First, the loss is \emph{proper}, which means that for every real-valued random variable $A$, the function $f(p) = \E(\ell(p,A))$ has a unique minimizer at $\mu=\E(A)$. We also assume that $\ell(p,y)$ is differentiable in $p$, and that its derivative is affine in $y$, that is $\frac{\partial}{\partial p}\ell(p,y)=a_p y + b_p$, where $a_p,b_p$ may depend on $p$ but not $y$. These conditions are satisfied by the cross entropy (or logistic) loss $\ell(p,y) = -y\log(p) - (1-y)\log(1-p)$ (where $p 
 \in [0,1]$) and the squared error $\ell(p,y) = (p-y)^2$.

Finally, we assume that the noise mechanism operates on a bag of $k$ examples for some $k\in \mathbb{N}$ (possibly $k=1$). Given the proportion $\lp$ of positive examples in the bag, it releases $\nlp = \lp + Z_\lp$, where $\E[Z_\lp|\lp]=0$ for every fixed $\lp \in \{0,1/k,\ldots,1\}$. This condition is satisfied by LLP (since it adds no noise), LLP+Lap (since Laplace noise is mean 0), and the debiased versions of randomized response and LLP+Geom, described in \Cref{app:experiment_details}.

Given a class of predictors $\hclass$ of predictors $h$ that map feature vectors to $[0,1]$, we can define the problem of minimizing the  \textit{population-level noisy proportion-matching loss} for a distribution $\dd$, defined via the choice of a bag $\bag$ of $k$ labeled examples drawn from $\dd$:
\begin{equation*}
    \min_{h\in\hclass} \E_{\bag,\lp,\nlp}\left[\ell\left(\frac{1}{k} \sum_{x\in\bag}h(x), \nlp\right) \right]
\end{equation*}

This differs from the loss studied in \cite{busarobilia} only in that we allow the label proportion to be perturbed. The \propmatch{} algorithm essentially minimizes this loss. 

\begin{theorem}\label{thm:propmatch-opt}
    Let $\bag$ be a random bag of size $k$, $\ell\,:\,\Rset\times\Rset \rightarrow \Rset^+\cup\{\infty\}$ be a loss function, and $\PET$ be a mechanism for releasing the label proportions with $\PET(\alpha) = \nlp = \lp + Z_\lp$, where $\E[Z_\lp|\lp]=0$ for all $\alpha$.
    Suppose $\hclass$ contains a predictor $h^*$ that matches the true label probabilities in $\dd$ (i.e. $h^*(x) = \PP_\dd(y=1|x)$. If $\ell$ is proper (defined above), then $h^*$ is a minimizer of the population-level noisy proportion-matching loss on $\dd$. It is essentially unique, in that any other minimizer $h'$ must satisfy $\PP_{(x,y)\sim\dd}( h'(x) = h^*(x))=1$. 
\end{theorem}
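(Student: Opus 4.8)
The plan is to reduce the population objective to a pointwise, per-bag optimization and then invoke properness of $\ell$. First I would condition on the feature tuple $\bag = (x_1,\dots,x_k)$ of the random bag. Once $\bag$ is fixed, the quantity $\bar h(\bag) := \frac1k\sum_{x\in\bag}h(x)$ is a deterministic scalar, and the only remaining randomness in the objective is the draw of the labels $y_1,\dots,y_k$ (conditionally independent, $y_i \sim \dd_{\ls|x_i}$) and the injected noise $Z_\lp$. The key computation is the conditional mean of the released proportion: since $\E[Z_\lp \mid \lp]=0$ for every realizable $\lp$, the tower rule gives $\E[Z_\lp \mid \bag] = 0$, and since $\E[y_i\mid x_i]=\eta(x_i)$ we obtain
\[
\E[\nlp \mid \bag] \;=\; \E[\lp \mid \bag] \;=\; \frac1k\sum_{i=1}^k \eta(x_i).
\]
Applying properness of $\ell$ to the real-valued random variable $A = (\nlp \mid \bag)$ tells us that $p \mapsto \E[\ell(p,\nlp)\mid \bag]$ has a \emph{unique} minimizer at $p = \E[\nlp\mid\bag] = \frac1k\sum_i \eta(x_i)$.

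Since $h^*(x_i) = \eta(x_i)$ by hypothesis, the predictor $h^*$ attains exactly this optimal value, $\bar{h^*}(\bag) = \frac1k\sum_i\eta(x_i)$, for \emph{every} realization of $\bag$. Hence $h^*$ minimizes the conditional loss pointwise in $\bag$, and taking expectation over $\bag \sim \dd_\cX^k$ shows $h^*$ minimizes the population-level noisy proportion-matching loss; moreover $h^* \in \hclass$ by assumption. (Note that only properness of $\ell$ is used here; the differentiability and affine-derivative hypotheses are needed only for the gradient/variance analysis.) For essential uniqueness, suppose $h' \in \hclass$ is another minimizer. For every $\bag$ the conditional loss under $h'$ is at least the minimum attained by $h^*$, and the two have equal expectation over $\bag$, so the nonnegative gap vanishes $\dd_\cX^k$-almost surely; the uniqueness clause of properness then forces $\bar{h'}(\bag) = \bar{h^*}(\bag)$, i.e.
\[
\frac1k\sum_{i=1}^k\bigl(h'(x_i)-h^*(x_i)\bigr) = 0 \qquad\text{for a.e.\ } (x_1,\dots,x_k)\sim\dd_\cX^k.
\]
Writing $g := h' - h^*$, a Fubini argument shows that for a.e.\ fixed $(x_2,\dots,x_k)$ one has $g(x_1) = -\sum_{i\ge 2}g(x_i)$ for a.e.\ $x_1$, so $g$ equals some constant $c$ $\dd_\cX$-almost everywhere; taking the expectation of the displayed identity gives $\E_{\dd_\cX}[g]=0$, hence $c = 0$, which is exactly $\PP_{(x,y)\sim\dd}(h'(x) = h^*(x)) = 1$.

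I expect the main obstacle to be this final ``de-averaging'' step: passing from equality of the bag-averaged predictors over random bags to equality of the predictors themselves $\dd_\cX$-almost everywhere. It is elementary but requires some care with Fubini and with the general $k \ge 2$ case. A secondary point to handle carefully is that properness must be invoked with the conditional law of $\nlp$ given $\bag$ — which folds together both the label randomness and the added noise — rather than with any marginal distribution; the identity $\E[\nlp\mid\bag] = \frac1k\sum_i\eta(x_i)$ is precisely what makes $h^*$ the per-bag optimum.
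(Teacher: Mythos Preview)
Your proposal is correct and follows essentially the same route as the paper: condition on the bag, use $\E[\nlp\mid\bag]=\E[\lp\mid\bag]=\tfrac1k\sum_i\eta(x_i)$ together with properness to identify $h^*$ as the per-bag optimum, then pass to the population by taking expectation. Your uniqueness argument (the Fubini ``de-averaging'' step showing $g:=h'-h^*$ is a.e.\ constant, hence zero) is a fleshed-out version of what the paper asserts in one sentence---that if $h'\neq h^*$ on a set of positive $\dd_\cX$-measure then the bag averages must differ with positive probability---so the two proofs coincide in substance.
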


\begin{theorem}\label{thm:propmatch-grad}
Let $\bag$ be a random bag of size $k$, $\ell\,:\,\Rset\times\Rset \rightarrow \Rset^+\cup\{\infty\}$ be a loss function, and $\PET$ be a mechanism for releasing the label proportions with $\PET(\alpha) = \nlp = \lp + Z_\lp$, where $\E[Z_\lp|\lp]=0$ for all $\alpha$.
Suppose that $\frac{\partial}{\partial p}\ell(p,y)$ is affine in $y$. 

For a predictor $h_{\theta} \in \hyps$ (parametrized by $\theta$) denote by $h_\theta(\bag) = \frac{1}{k} \sum_{\feat\in \bag} h_\theta(\feat)$ its average prediction over $\bag$. We have
\begin{equation}\label{eq:pm-bias}
    \E\left[\nabla_\theta\ell\left(h_\theta(\bag), \nlp\right)\right] =  \E\left[\nabla_\theta\ell\left(h_\theta(\bag), \lp\right)\right]~,
\end{equation}
and furthermore
\begin{align}\label{eq:pm-variance}
        \E\left[\left\|\nabla_\theta \ell\left(h_\theta(\bag), \nlp\right)\right|_2^2\right]
        &= \E\left[\|\nabla_\theta \ell(h_\theta(\bag), \lp)\|^2\right]
        +  \E  
        \paren{ Z_\lp^2\cdot 
        \left\| 
        \tfrac{\partial}{\partial \alpha}
        \nabla_\theta \ell(h_\theta(\bag),  \lp)
        \right\|^2 }\notag
        \\
        &\leq \E\left[\|\nabla_\theta \ell(h_\theta(\bag), \lp)\|^2\right]
        +  \paren{\max_\alpha \Var(Z_\lp)}\cdot \E  \left[
        \left\| 
        \tfrac{\partial}{\partial \alpha}
        \nabla_\theta \ell(h_\theta(\bag),  \lp)
        \right\|^2 \right]
        \, .        
    \end{align}
\end{theorem}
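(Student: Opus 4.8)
The plan is to reduce both claims to a single exact identity for the gradient, and then close by conditioning on the bag. First I would write $\ell_p'(p,y):=\frac{\partial}{\partial p}\ell(p,y)$ and use the hypothesis to put it in the form $\ell_p'(p,y)=a_p\,y+b_p$, with $a_p,b_p$ depending on $p$ only. By the chain rule (legitimate since $\ell$ is differentiable in its first slot and $h_\theta$ in $\theta$), $\nabla_\theta\ell(h_\theta(\bag),v)=\ell_p'(h_\theta(\bag),v)\,\nabla_\theta h_\theta(\bag)$ for any value $v$ placed in the second slot. Plugging in $v=\nlp=\lp+Z_\lp$ and using affineness of $\ell_p'$ in its second argument, I get the key identity
\[
\nabla_\theta\ell(h_\theta(\bag),\nlp)
=\nabla_\theta\ell(h_\theta(\bag),\lp)
+Z_\lp\cdot a_{h_\theta(\bag)}\,\nabla_\theta h_\theta(\bag)
=\nabla_\theta\ell(h_\theta(\bag),\lp)
+Z_\lp\cdot\tfrac{\partial}{\partial\lp}\nabla_\theta\ell(h_\theta(\bag),\lp)\,,
\]
the last step being the chain rule read backwards: differentiating $\ell_p'(h_\theta(\bag),\lp)\,\nabla_\theta h_\theta(\bag)$ in the free variable $\lp$ returns $a_{h_\theta(\bag)}\,\nabla_\theta h_\theta(\bag)$. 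The crucial structural observation is that $h_\theta(\bag)=\frac1k\sum_{\feat\in\bag}h_\theta(\feat)$, $\nabla_\theta h_\theta(\bag)$, and hence the scalar $a_{h_\theta(\bag)}$ are functions of the \emph{features} in $\bag$ only; they do not depend on the labels or on the mechanism's internal randomness.

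To obtain \eqref{eq:pm-bias}, I would condition on the bag $\bag$: given $\bag$ the proportion $\lp$ is fixed and both summands on the right of the identity are deterministic, while $\E[Z_\lp\mid\bag]=\E[Z_\lp\mid\lp]=0$, since conditioning on the full bag leaves only the mechanism's coins, which produce a mean-zero $Z_\lp$. Thus $\E[\nabla_\theta\ell(h_\theta(\bag),\nlp)\mid\bag]=\nabla_\theta\ell(h_\theta(\bag),\lp)$, and taking the outer expectation finishes \eqref{eq:pm-bias}.

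For \eqref{eq:pm-variance} I would square the identity. Abbreviating $A=\nabla_\theta\ell(h_\theta(\bag),\lp)$ and $B=\tfrac{\partial}{\partial\lp}\nabla_\theta\ell(h_\theta(\bag),\lp)$, we have $\|A+Z_\lp B\|^2=\|A\|^2+2Z_\lp\langle A,B\rangle+Z_\lp^2\|B\|^2$. Conditioning on $\bag$ again makes $A$ and $B$ deterministic, so the cross term contributes $2\langle A,B\rangle\,\E[Z_\lp\mid\bag]=0$, and $\E[Z_\lp^2\mid\bag]=\Var(Z_\lp\mid\lp)$ because $Z_\lp$ is mean zero given $\lp$; taking expectations yields the stated equality. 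The displayed inequality then follows by bounding $\Var(Z_\lp\mid\lp)\le\max_\alpha\Var(Z_\alpha)$ pointwise and pulling the constant out of the expectation.

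I expect the only subtle point to be the measurability bookkeeping: one must check carefully that $a_{h_\theta(\bag)}\,\nabla_\theta h_\theta(\bag)$ depends on $\bag$ solely through its features and is therefore independent of $Z_\lp$ conditionally on $\lp$, and one must read the ``$\tfrac{\partial}{\partial\lp}\nabla_\theta\ell(h_\theta(\bag),\lp)$'' appearing in the statement as the derivative of the gradient with respect to the free variable in the loss's second argument --- which is exactly what converts the affineness hypothesis into the exact first-order expansion above and reproduces the claimed formula. No properness of $\ell$ is needed here (that is used only in \Cref{thm:propmatch-opt}), and no interchange of limit operations is required, since the identity is exact.
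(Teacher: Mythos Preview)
Your proposal is correct and follows essentially the same approach as the paper: derive the exact identity $\nabla_\theta\ell(h_\theta(\bag),\nlp)=\nabla_\theta\ell(h_\theta(\bag),\lp)+Z_\lp\cdot\tfrac{\partial}{\partial\lp}\nabla_\theta\ell(h_\theta(\bag),\lp)$ via the chain rule and affineness, then condition on $(\bag,\lp)$ to kill the $Z_\lp$ term in expectation and the cross term in the squared norm. The paper phrases the variance part in terms of covariance matrices and traces rather than directly expanding $\|A+Z_\lp B\|^2$, but the mathematical content is identical.
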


That is, for nice loss functions like the logistic and square loss, we can write the variance of the noisy LLP gradient as the sum of two terms: the variance of the LLP-based estimate of the proportion-matching loss, and a noise term that scales roughly as the variance of $Z$. 
For Laplace noise, this variance is $\frac{2}{k^2\eps^2}$. 
For unclipped Geometric noise, the variance is  $\frac{2e^{-\eps}}{k^2(1-e^{-\eps})^2}$ (which is always smaller than the Laplace variance, most noticeably for large $\eps$). Clipping and debiasing further reduces the variance.
The effect is most noticeable for for small values of $k$. 
For example, with randomized response (which is the same as LLP+Geom with $k=1$), the variance is just $\frac{e^\eps}{(1-e^\eps)^2}$. 

Analyzing this formula helps us understand the performance of the noisy variants of LLP. Let us compare LLP+Geom with bag size $k$ to averaging randomized response estimates over $k$ separate examples. Suppose we fix $\eps$ and let $k$ increase. There are three effects in play: (1) the noise added for privacy is lower for LLP+Geom (variance $1/k^2$ versus $1/k$); (2) the bias of the gradient estimator is higher for LLP+Geom, since we are estimating the gradient of the proportion-matching  loss and not necessarily the true gradient; (3) the variance due to sampling is about the same in both cases, about $1/k$. 

We thus get a classic bias-variance tradeoff. As a thought experiment, consider what happens when $\eps$ is very small, and almost all the error comes from the added noise. In that case, setting $k$ a  bit larger than 1 will help, since the bias introduced will be smaller than the reduction in noise. However, there is a point of diminishing returns, at least when the proportion-matching optimum is different from the true optimum (that is, when the model cannot express the Bayes-optimal predictor). Looking at the experiments  with the Higgs data set, we see a muted version of this effect: for small values of $\eps$, there is no real difference in the performance of LLP+Geom for $k\in \{1,2,4\}$, but larger $k$ introduce a real penalty. For larger values of $\eps$ (over 1) on Higgs, and all values of $\eps$ on KDD12, the dropoff occurs already for $k=2$. 

This analysis suggests that the advantages of LLP+Geom over RR will be dataset-dependent, and noticeable only for small values of $\eps$.

\begin{proof}[Proof of Theorem \ref{thm:propmatch-opt}]
    This proof is the same as the argument in \cite[Thm 3.2]{easyllp}. Fix some bag $\bag$ of examples, and take expectation over the sampling of their labels to get a label proportion $\lp$ and its noisy variant $\nlp$. Because the loss is proper, the unique minimizer of the proportion matching loss is given by the predictions of $h^*$. 
    \[
    \arg\min_p \E_{\lp,\nlp}\ell(p, \nlp) =  \E(\nlp) = \E(\lp) = \tfrac1 k h^*(x) \, .
    \]
    This is true for every bag $\bag$, and thus in expectation over the choice of $\bag$. Furthermore, this loss is unique since the indicator functions of bags form a spanning set for the indicators for the support of $\dd$. If $h\neq h^*$ with nonzero probability, then the bag-level predictions of $h$ will differ from those of $h^*$ with nonzero probability.
\end{proof}

\begin{proof}[Proof of \Cref{thm:propmatch-grad}]
    Let $\ell'(p,y) = \frac{\partial}{\partial p}\ell(p,y) = a_py + b_p$, where $a_p=\frac{\partial}{\partial y}\ell'(p,y)$. Using the chain rule and the fact that $\ell'$ is affine, we get:
    \begin{eqnarray*}
        \lefteqn{\nabla_\theta \ell\left(h_\theta(\bag), \nlp\right)}\\
        &=& \ell'(h_\theta(\bag), \nlp) \nabla_\theta h_\theta(\bag) 
        \\
        &=&  \paren{\ell'(h_\theta(\bag), \lp) + Z_\lp a_{h_\theta(\bag)}} \nabla_\theta h_\theta(\bag) \\
        &=& \nabla_\theta \ell(h_\theta(\bag), \lp) + Z_\lp a_{h_\theta(\bag)} \nabla_\theta h_\theta(\bag) \\
        & = & \nabla_\theta \ell(h_\theta(\bag), \lp) + Z_\lp \cdot \tfrac{\partial}{\partial \lp} \nabla_\theta\ell(h_\theta(\bag), \lp) \, .
    \end{eqnarray*}
Since the noise is unbiased for every $\lp$, the second term above is 0 in expectation for every pair $(\bag,\alpha)$. This proves that the estimator is unbiased (\Cref{eq:pm-bias}). To analyze its variance, note that the unbiasedness of $\nlp$ means that the two terms are decorrelated, and we can sum their convariances: 
\begin{align*}
      \Covar&\paren{\nabla_\theta \ell(h_\theta(\bag), \lp) + Z_\lp \cdot \tfrac{\partial}{\partial \lp} \nabla_\theta\ell(h_\theta(\bag), \lp)  } \\
    &= \Covar\paren{\nabla_\theta \ell(h_\theta(\bag), \lp)} 
    + \Covar\paren{ Z_\lp \cdot \tfrac{\partial}{\partial \lp} \nabla_\theta\ell(h_\theta(\bag), \lp)  } \\
    &\qquad+ \E\paren{\left\langle \nabla_\theta \ell(h_\theta(\bag), \lp) \ , \   \tfrac{\partial}{\partial \lp} \nabla_\theta\ell(h_\theta(\bag), \lp) )  \right\rangle}  
    \\
    &= \Covar\paren{\nabla_\theta \ell(h_\theta(\bag), \lp)} 
    + \Covar\paren{ Z_\lp\cdot \tfrac{\partial}{\partial \lp} \nabla_\theta\ell(h_\theta(\bag), \lp)  } \, .
\end{align*} 
The expected squared norm of a random vector is the trace of its covariance. Thus, the expression above gives us the equality in \Cref{eq:pm-bias}. The final inequality follows again from decorrelation: by conditioning on the pair $(\bag,\lp)$, we can pull out $\E[Z_\lp^2]$ from the outer product of $ Z_\lp\cdot \tfrac{\partial}{\partial \lp} \nabla_\theta\ell(h_\theta(\bag), \lp)$ with itself. Taking the trace (to get the expected square norm), we can use the fact that   $\|\nabla_\theta\ell(h_\theta(\bag), \lp)\|^2$ is nonegative, to replace $\E[Z_\lp^2]$ with an upper bound that holds for all $\lp$. This yields the final inequality in \Cref{eq:pm-variance}, as desired.
\end{proof}

We have analyzed \propmatch{} in this section since it is the algorithm used in our experiments. Similar results hold for other LLP algorithms, like \easyllp{} \cite{easyllp}, though we do not work those out here.

\end{document}